\documentclass{article}
\usepackage{graphicx}
\usepackage{multirow}
\usepackage{tabularx}
\usepackage{bbding}
\usepackage{algorithm}
\usepackage{algorithmic}
\usepackage{amsmath}
\usepackage{amsthm} 
\usepackage{amssymb}
\usepackage{makecell}
\usepackage{colortbl}
\usepackage{subfig}
\usepackage{subcaption}
\usepackage{fancybox}
\usepackage{enumitem}
\definecolor{DownGreen}{RGB}{34, 139, 34}
\definecolor{UpOrange}{RGB}{210, 105, 30}
\definecolor{NeutralGray}{RGB}{120, 120, 120}
\newcommand{\dec}[1]{\textcolor{DownGreen}{\scriptsize\,\raisebox{0.2ex}{$\downarrow$}\,#1}}
\newcommand{\inc}[1]{\textcolor{UpOrange}{\scriptsize\,\raisebox{0.2ex}{$\uparrow$}\,#1}}

\definecolor{rowblue}{RGB}{235, 242, 250}

\newcommand{\oursrow}{\rowcolor{rowblue}}

\newcommand{\method}{\textbf{MuonEq}}
\newtheorem{lemma}{Lemma}[section]
\newtheorem{assumption}{Assumption}[section]
\newtheorem{theorem}{Theorem}[section]

\newtheorem{proposition}[theorem]{Proposition}
\newtheorem{corollary}[theorem]{Corollary}

\newtheorem{remark}{Remark}[section]

\usepackage[preprint]{neurips_2026}

\usepackage[utf8]{inputenc} % allow utf-8 input
\usepackage[T1]{fontenc}    % use 8-bit T1 fonts
\usepackage{hyperref}       % hyperlinks
\usepackage{url}            % simple URL typesetting
\usepackage{booktabs}       % professional-quality tables
\usepackage{amsfonts}       % blackboard math symbols
\usepackage{nicefrac}       % compact symbols for 1/2, etc.
\usepackage{microtype}      % microtypography
\usepackage{xcolor}         % colors

\definecolor{customgray}{gray}{0.9}
\title{MuonEq: Balancing Before Orthogonalization with Lightweight Equilibration}

\author{Da Chang$^{137}$, Qiankun Shi$^{34}$, Lvgang Zhang$^{35}$, Yu Li$^{6}$, Ruijie Zhang$^7$ \\
\bfseries Yao Lu$^3$, Yongxiang Liu$^3$\thanks{Corresponding authors: liuyx@pcl.ac.cn and yuanganzhao@foxmail.com}, Ganzhao Yuan$^{2}$\footnotemark[1] \\[3mm]
\centerline{\normalsize $^1$Shenzhen Institute of Advanced Technology, Chinese Academy of Sciences}\\
\centerline{\normalsize $^2$Shenzhen University of Advanced Technology $^3$Pengcheng Laboratory $^4$Sun Yat-sen University}\\
\centerline{\normalsize $^5$Southern University of Science and Technology $^6$George Washington University}\\
\centerline{\normalsize $^7$University of Chinese Academy of Sciences}
}
\begin{document}

\maketitle

\begin{abstract}
Orthogonalized-update optimizers such as Muon improve training of matrix-valued parameters, but existing extensions typically either rescale updates after orthogonalization or use heavier whitening-based preconditioners before it. We introduce {\method}, a lightweight family of pre-orthogonalization equilibration schemes for Muon with three forms: two-sided row/column normalization (RC), row normalization (R), and column normalization (C). By rebalancing the momentum matrix before finite-step Newton--Schulz orthogonalization, {\method} improves the geometry seen by orthogonalization. We show that finite-step orthogonalization is governed by the input spectrum, especially stable rank and condition number, and that row/column normalization acts as a zeroth-order surrogate for whitening. For hidden matrix weights, R is the default variant. 
Theoretically, {\method} (R) retains the standard $\widetilde{\mathcal O}(T^{-1/4})$ Muon-type nonconvex stationarity guarantee with decoupled weight decay and a horizon-free diminishing learning-rate schedule, and extends it to finite-step NS5 up to an explicit inexactness constant.
In LLaMA2 pretraining on C4, {\method} (R) consistently outperforms Muon on 130M, 350M, and 1B models, with faster convergence and lower validation perplexity.
The code is available at the \href{https://github.com/MaeChd/muon-eq}{MuonEq codebase}.
\end{abstract}
%%%%%%%%%%%%%%%%%%%%%%%%%%%%%%%%

\section{Introduction}
\label{sec:intro}
As the computational and data costs of large language model pretraining grow, optimization increasingly determines sample efficiency, throughput, and scalability. Beyond coordinate-wise adaptive methods such as AdamW~\citep{ilya2019adamw}, Adan~\citep{xie2024adan}, and MARS~\citep{yuan2024mars}, recent work has shifted toward a geometric view of matrix-valued updates. Muon~\citep{jordan6muon}, for instance, uses a Newton--Schulz approximation to the polar factor of the momentum matrix to produce spectrally aligned updates~\citep{bernstein2024old,chen2025muon}. It has shown strong empirical scalability in LLM pretraining, often improving the trade-off among data efficiency, compute, and wall-clock time at fixed target loss~\citep{liu2025muon,shah2025practical,deepseekai2026deepseekv4}. Recent theory further characterizes Muon as spectral-norm steepest descent and establishes nonconvex convergence guarantees for both exact and approximate orthogonalization~\citep{chang2025convergence,Sato2025ConvergenceBA,Li2025ANO,shen2025convergence,sfyraki2025lions,kim2026convergence,shulgin2025beyond}. 
% Our analysis also complements this literature by giving a guarantee closer to the optimizer used in practice.

Follow-up work around Muon mainly proceeds along two directions. \textit{(i) Post-orthogonalization rescaling} augments an already constructed orthogonal update with additional magnitude correction. Muon+~\citep{zhang2026muon+}, NorMuon~\citep{li2025normuon}, and AdaMuon~\citep{si2025adamuon} respectively add row/column normalization, neuron-wise normalization, or element-wise variance adaptation after orthogonalization. These variants show that orthogonal updates remain compatible with later adaptive scaling, but they do not change the matrix that finite-step Newton--Schulz actually sees. \textit{(ii) Pre-orthogonalization geometric correction} instead modifies the coordinate system before spectral optimization. SOAP~\citep{vyas2024soap} connects Shampoo~\citep{gupta2018shampoo} with Adam/Adafactor~\citep{Shazeer2018AdafactorAL,kingma2014adam} through a preconditioned basis, while FISMO~\citep{Xu2026FISMOFM} and Mousse~\citep{Zhang2026MousseRT} further perform spectral optimization in whitened coordinates. 
These methods motivate studying geometry correction before orthogonalization as a promising lever, but existing approaches typically incur substantially higher state and computational overhead.
% These methods suggest that improving geometry before orthogonalization is the more direct lever, but doing so usually requires maintaining left and right preconditioners together with extra matrix operations such as eigendecomposition, inverse power iteration, and whitening/dewhitening maps.

These works motivate a more precise question:
\vspace{-5pt}
\begin{center}
\shadowbox{\begin{minipage}[t]{0.95\columnwidth}%
\it Is there a lightweight design between post-orthogonalization rescaling and full whitening-based preconditioning that directly improves the input geometry for orthogonalization?
\end{minipage}}
\end{center}
\vspace{-8pt}

Our starting point is that, in Muon, the relevant object is not only the exact polar factor but also the finite-step Newton--Schulz approximation used in practice. Section~\ref{sec:ana_1} shows that both the convergence speed of Newton--Schulz iterations and the gap between finite-step iterations and exact orthogonalization are governed by input spectral geometry, especially stable rank and condition number. This points to a simple design principle: improve the geometry \emph{before} orthogonalization, but do so with the smallest possible intervention. A natural formalization is the optimal diagonal preconditioning problem:
\[
\begin{aligned}
\min_{\mathbf D_1,\mathbf D_2\succ 0}\kappa\bigl(\mathbf D_1^{1/2}\mathbf A\mathbf D_2^{-1/2}\bigr).
\end{aligned}
\]
Qu et al.~\citep{qu2025optimal} derive an optimal diagonal preconditioner, but applying it online typically requires an auxiliary iterative solve. Classical row/column equilibration is less exact yet much lighter-weight~\citep{ruiz2001scaling}; moreover, recent operator-geometry results identify row and column normalization as steepest-descent directions under the $p\to\infty$ and $1\to q$ geometries, respectively~\citep{Xu2026moga}. This makes diagonal equilibration a natural lightweight way to improve the matrix presented to Muon-style orthogonalization.

Motivated by this observation, we propose {\method}, a lightweight family of pre-orthogonalization equilibration schemes for Muon, instantiated as two-sided row/column normalization (RC), row normalization (R), and column normalization (C). 
All variants rescale the momentum matrix before Newton--Schulz, using row/column squared-norm reductions from the current momentum.
Section~\ref{sec:analysis} provides the theory: Theorem~\ref{th_ns} quantifies the spectral sensitivity of finite-step orthogonalization; Proposition~\ref{th_zero_rc} shows that row/column normalization is a zeroth-order whitening surrogate; and Proposition~\ref{th_conv_rc}, Theorem~\ref{th_conv_r}, and Corollary~\ref{cor:conv_r_ns} formalize the RC/R trade-off. 
% In particular, the default R variant admits the standard $\widetilde{\mathcal O}(T^{-1/4})$ nonconvex guarantee under a closer-to-practice setting with decoupled weight decay, horizon-free schedules, and practical NS5.
% In particular, the default R variant admits the standard $\widetilde{\mathcal O}(T^{-1/4})$ nonconvex stationarity guarantee while explicitly accounting for decoupled weight decay, a learning-rate schedule independent of the training horizon, and finite-step NS5.
In particular, the default R variant satisfies the standard $\widetilde{\mathcal O}(T^{-1/4})$ nonconvex stationarity guarantee while explicitly accounting for decoupled weight decay, horizon-free learning-rate schedules, and finite-step NS5.
For the hidden Transformer weight matrices targeted by {\method}, recent architecture-aware analyses~\citep{Xu2026moga} identify row-sided geometry as the natural one-sided choice, while column-sided geometry is more aligned with embeddings; hence we use R by default and keep RC/C for analysis and ablation. Compared with Muon+~\citep{zhang2026muon+}, which rescales updates after orthogonalization, {\method} changes the input to orthogonalization. 
Unlike whitening-based methods such as FISMO and Mousse~\citep{Xu2026FISMOFM,Zhang2026MousseRT}, {\method} avoids Kronecker factors, matrix-valued second-order statistics, eigendecomposition, and momentum transport; its diagonal rescaling is computed on the fly from row/column squared norms, with no persistent optimizer state.

Our contributions are threefold. \textit{(i)} First, we propose {\method}, a lightweight family of pre-orthogonalization equilibration schemes for Muon with three forms: RC, R, and C; these variants rebalance the current momentum matrix before Newton--Schulz using row/column squared norms computed on the fly, with R as the default for hidden matrix weights. \textit{(ii)} Second, we analyze how pre-orthogonalization geometry affects finite-step Newton--Schulz: Theorem~\ref{th_ns} links its behavior to stable rank and condition number, Proposition~\ref{th_zero_rc} identifies row/column normalization as a zeroth-order whitening surrogate, and Proposition~\ref{th_conv_rc} together with Theorem~\ref{th_conv_r} and Corollary~\ref{cor:conv_r_ns} formalize the trade-off between stronger two-sided spectral correction and cleaner one-sided geometry. In particular, {\method} (R) satisfies the standard $\widetilde{\mathcal O}(T^{-1/4})$ nonconvex stationarity guarantee while accounting for decoupled weight decay, horizon-free learning-rate schedules, and finite-step NS5. \textit{(iii)} Third, empirically, the default R variant consistently outperforms Muon across model sizes and token budgets, while ablations over RC, R, and C show that R is the default form for the hidden-weight setting targeted by {\method}.

\section{Method}
\label{sec:method}

\textbf{Notation and setup.} Scalars are denoted by non-bold letters, vectors by bold lowercase letters, and matrices by bold uppercase letters. On $\mathbb R^d$, we use the Euclidean inner product $\langle \mathbf x,\mathbf y\rangle_2:=\mathbf x^\top\mathbf y$ and norm $\|\mathbf x\|_2$. For matrices, $\langle \mathbf A,\mathbf B\rangle_F:=\operatorname{tr}(\mathbf A^\top\mathbf B)$ and $\|\mathbf A\|_F$ denote the Frobenius inner product and norm, and $\|\mathbf A\|_*$ denotes the nuclear norm. We write $[m]:=\{1,2,\ldots,m\}$ and let $\mathbb N$ be the set of non-negative integers. The model parameter is a matrix $\mathbf X\in\mathbb R^{m\times n}$, and, unless otherwise stated, we assume $m\ge n$. For any matrix $\mathbf A\in\mathbb R^{m\times n}$ with rank $r\le n$, we write its compact singular value decomposition as $\mathbf A=\mathbf U\Sigma\mathbf V^\top$, where $\Sigma=\operatorname{diag}(\sigma_1,\dots,\sigma_r)$ and $\sigma_1\ge\cdots\ge\sigma_r>0$, 
% and define the exact-polar decomposition $\operatorname{Orth}(\mathbf A):=\mathbf U\mathbf V^\top$. 
and define the polar factor, which we also call the orthogonalized form, as $\mathrm{Orth}(\mathbf{A}):=\mathbf{U}\mathbf{V}^\top$. Here, orthogonalization and polar step mean applying or approximating this map, whereas whitening denotes full Gram inverse-square-root preconditioning. The row and column rescalings introduced below are diagonal equilibration operations.
In finite-data form, the training objective can be written as $f(\mathbf X):=\frac{1}{N}\sum_{i\in[N]} f_i(\mathbf X)$, where $f_i(\mathbf X)$ is the loss associated with the $i$-th sample $\mathbf z_i$. Equivalently, we use the stochastic formulation
\begin{equation}
\min_{\mathbf{X}\in\mathbb{R}^{m\times n}} f(\mathbf{X}),
\qquad
f(\mathbf{X})=\mathbb{E}_{\xi\sim\mathcal D}[f(\mathbf{X};\xi)],
\end{equation}
where $f$ may be nonconvex, $\xi$ is independent of $\mathbf X$, and $\mathbb E_\xi[\cdot]$ denotes expectation with respect to $\xi$. For any matrix $\mathbf A$ with singular values $\{\sigma_i(\mathbf A)\}_{i=1}^r$, we further use
$\operatorname{sr}(\mathbf A):=\frac{\|\mathbf A\|_F^2}{\|\mathbf A\|_2^2},
\kappa(\mathbf A):=\frac{\sigma_1(\mathbf A)}{\sigma_r(\mathbf A)},\kappa_i(\mathbf{A}):=\frac{\sigma_1(\mathbf{A})}{\sigma_i(\mathbf{A})},
p_i:=\frac{\sigma_i(\mathbf A)^2}{\|\mathbf A\|_F^2},
H(p):=-\sum_{i=1}^r p_i\log p_i.$

\textbf{Muon as finite-step orthogonalized momentum.} For matrix parameters, Muon constructs the update by orthogonalizing the momentum matrix. Let $\mathbf M_t$ denote the momentum at iteration $t$. The abstract Muon update is
\begin{equation}
\mathbf O_t = \operatorname{Orth}(\widetilde{\mathbf M}_t),
\qquad
\mathbf X_{t+1} = \mathbf X_t - a \eta_t \mathbf O_t,
\end{equation}
where $\widetilde{\mathbf M}_t$ is the standard momentum or its Nesterov variant~\citep{jordan6muon,liu2025muon,chang2025convergence}; following LLM Muon practice~\citep{liu2025muon}, we set \(a=0.2\sqrt{\max\{m,n\}}\). In practice, $\operatorname{Orth}(\widetilde{\mathbf M}_t)$ is implemented by a fixed number of Newton--Schulz iterations rather than exact polar decomposition~\citep{jordan6muon}. 

% The design question is therefore not only what direction is desirable in the exact limit, but also what matrix should be fed into finite-step Newton--Schulz.

\textbf{Pre-orthogonalization equilibration family.}
{\method} inserts diagonal preconditioning before orthogonalization:
\begin{equation}
\hat{\mathbf M}_t = \textbf{DiagPre}(\widetilde{\mathbf M}_t,s,\varepsilon),
\qquad
\mathbf O_t = \operatorname{Orth}(\hat{\mathbf M}_t),
\qquad
\mathbf X_{t+1} = \mathbf X_t - a \eta_t \mathbf O_t,
\end{equation}
where the mode $s\in\{\mathrm{RC},\mathrm{R},\mathrm{C}\}$ selects one of three equilibration forms.
Let
$$
\mathbf D_{r,t} = \operatorname{diag}\bigl(\operatorname{rowsum}(\widetilde{\mathbf M}_t\odot \widetilde{\mathbf M}_t)+\varepsilon\bigr),\qquad
\mathbf D_{c,t} = \operatorname{diag}\bigl(\operatorname{colsum}(\widetilde{\mathbf M}_t\odot \widetilde{\mathbf M}_t)+\varepsilon\bigr).
$$
We consider the following three maps:
\begin{equation}
\label{eq:diagpre_family}
\textbf{DiagPre}(\widetilde{\mathbf M}_t,s,\varepsilon)=
\begin{cases}
\mathbf D_{r,t}^{-1/2}\widetilde{\mathbf M}_t\mathbf D_{c,t}^{-1/2}, & s=\mathrm{RC},\\
\mathbf D_{r,t}^{-1/2}\widetilde{\mathbf M}_t, & s=\mathrm{R},\\
\widetilde{\mathbf M}_t\mathbf D_{c,t}^{-1/2}, & s=\mathrm{C}.
\end{cases}
\end{equation}
% Unlike methods that rely on matrix-valued or adaptive preconditioners~\citep{gupta2018shampoo,Shazeer2018AdafactorAL,kingma2014adam,Xu2026FISMOFM}, {\method} changes only the matrix fed to Newton--Schulz. Relative to Muon~\citep{jordan6muon}, the orthogonalization routine itself is unchanged; relative to Muon+, NorMuon, and AdaMuon~\citep{zhang2026muon+,li2025normuon,si2025adamuon}, {\method} rebalances the input rather than rescales the output.
Unlike methods that rely on matrix-valued or adaptive preconditioners~\citep{gupta2018shampoo,Shazeer2018AdafactorAL,kingma2014adam,Xu2026FISMOFM}, {\method} changes only the transient input fed to Newton--Schulz. Relative to Muon~\citep{jordan6muon}, the orthogonalization routine and persistent optimizer state are unchanged; relative to Muon+, NorMuon, and AdaMuon~\citep{zhang2026muon+,li2025normuon,si2025adamuon}, {\method} rebalances the input rather than rescales the output.

\textbf{Why the default variant is R.}
This choice is block-dependent rather than a universal preference for row normalization. The trade-off is guided by
{\small
\begin{equation}
\label{eq:error_decomp}
\begin{aligned}
\left\|\mathrm{NS}_K(S(\mathbf M))-\mathrm{Orth}(\mathbf M)\right\|_F
\leq
\underbrace{\left\|\mathrm{NS}_K(S(\mathbf M))-\mathrm{Orth}(S(\mathbf M))\right\|_F}_{\text{finite-step approximation error}}
+\underbrace{\left\|\mathrm{Orth}(S(\mathbf M))-\mathrm{Orth}(\mathbf M)\right\|_F}_{\text{preconditioning bias}},
\end{aligned}
\end{equation}
}

where $S$ is one of RC, R, or C. The first term is the finite-step Newton--Schulz error and depends on the spectrum of $S(\mathbf M)$; Theorem~\ref{th_ns} therefore explains why the more aggressive two-sided RC map often gives the largest spectral correction. The second term is the bias induced by preprocessing. Proposition~\ref{th_zero_rc} shows that row/column normalization is a zeroth-order surrogate for whitening, removing marginal scale mismatch before orthogonalization. Among the one-sided variants, recent operator-geometry analyses identify row-sided geometry as the natural choice for the hidden Transformer weight matrices targeted by {\method}, whereas column-sided geometry is more closely associated with embedding matrices~\citep{Xu2026moga}. This makes R the most appropriate default in our setting. Consistent with this picture, Proposition~\ref{th_conv_rc} characterizes the pure RC regime as a stronger two-sided spectral corrector, whereas Theorem~\ref{th_conv_r} gives the cleanest nonconvex guarantee for the row-normalized variant. 

\begin{table}[t]
\vspace{-10pt}
\centering
\setlength{\tabcolsep}{0pt}
\begin{tabular}{@{}p{0.5\textwidth}@{\hspace{0.02\textwidth}}p{0.46\textwidth}@{}}
\begin{minipage}[t]{0.5\textwidth}
\begin{algorithm}[H]
\caption{{\method} (mode $s\in\{\mathrm{RC},\mathrm{R},\mathrm{C}\}$; default $s=\mathrm{R}$)}
\label{alg:muoneq}
\begin{algorithmic}[1]
\STATE \textbf{Input:} Initial parameters $\mathbf X_1\in\mathbb R^{m\times n}$, learning rates $\eta_t>0$, momentum coefficients $\{\beta_t\}_{t\ge1}\subset[0,1)$, weight decay $\lambda_t\ge 0$, mode $s\in\{\mathrm{RC},\mathrm{R},\mathrm{C}\}$, stability term $\varepsilon\ge0$, Nesterov flag, scaling constant $a=0.2\sqrt{\max(m,n)}$.
\STATE Initialize $\mathbf M_0=\mathbf 0$
\FOR{$t=1$ \textbf{to} $T$}
    \STATE $\mathbf G_t = \nabla f(\mathbf X_t;\xi_t)$
    \STATE $\mathbf M_t = \beta_t \mathbf M_{t-1} + (1-\beta_t)\mathbf G_t$
    \IF{Nesterov}
        \STATE $\widetilde{\mathbf M}_t = \beta_{t+1} \mathbf M_t + (1-\beta_{t+1})\mathbf G_t$
    \ELSE
        \STATE $\widetilde{\mathbf M}_t = \mathbf M_t$
    \ENDIF
    \STATE $\hat{\mathbf M}_t = \textbf{DiagPre}(\widetilde{\mathbf M}_t,s,\varepsilon)$
    \STATE $\mathbf O_t = \mathrm{NS5}(\hat{\mathbf M}_t)$
    \STATE $\mathbf X_{t+1} = (1-\lambda_t\eta_t)\mathbf X_t - a\eta_t\mathbf O_t$
\ENDFOR
\end{algorithmic}
\end{algorithm}
\end{minipage}
&
\begin{minipage}[t]{0.46\textwidth}
\begin{algorithm}[H]
\caption{\textbf{DiagPre}}
\label{alg:diagpre}
\begin{algorithmic}[1]
\STATE \textbf{Input:} Momentum $\widetilde{\mathbf M}_t$, mode $s\in\{\mathrm{RC},\mathrm{R},\mathrm{C}\}$, stability term $\varepsilon\ge0$
\IF{$s\in\{\mathrm{RC},\mathrm{R}\}$}
    \STATE $\mathbf D_{r,t} = \operatorname{diag}(\operatorname{rowsum}(\widetilde{\mathbf M}_t \odot \widetilde{\mathbf M}_t) + \varepsilon)$
\ENDIF
\IF{$s\in\{\mathrm{RC},\mathrm{C}\}$}
    \STATE $\mathbf D_{c,t} = \operatorname{diag}(\operatorname{colsum}(\widetilde{\mathbf M}_t \odot \widetilde{\mathbf M}_t) + \varepsilon)$
\ENDIF
\STATE $\hat{\mathbf M}_t=\left\{\begin{array}{ll}
    \mathbf D_{r,t}^{-1/2}\widetilde{\mathbf M}_t\mathbf D_{c,t}^{-1/2}, & \hbox{$s=\mathrm{RC}$;} \\
    \mathbf D_{r,t}^{-1/2}\widetilde{\mathbf M}_t , & \hbox{$s=\mathrm{R}$;}\\
    \widetilde{\mathbf M}_t\mathbf D_{c,t}^{-1/2}, &\hbox{$s=\mathrm{C}$.}
  \end{array}
  \right.$
\RETURN $\hat{\mathbf M}_t$
\end{algorithmic}
\end{algorithm}
\vspace{-10pt}
% \begin{remark}
% Unless otherwise stated, we use the row-normalized variant $s=\mathrm{R}$. The RC and C modes are kept as lightweight forms for analysis and ablation. 
% When $\beta_t\equiv\beta$, this recovers the standard Nesterov form.
% \end{remark}
\begin{remark}
Unless otherwise stated, we use \(s=\mathrm{R}\); \(\mathrm{RC}\) and
\(\mathrm{C}\) are retained for analysis and ablation. 
% For varying \(\beta_t\),
% the Nesterov line follows the EMA-buffer convention, with effective coefficient
% \(\beta_{t+1}(1-\beta_t)/(1-\beta_{t+1})\); for \(\beta_t\equiv\beta\), this
% recovers the standard form.
For varying \(\beta_t\),
the Nesterov line follows the EMA-buffer convention; for
\(\beta_t\equiv\beta\), it reduces to the Nesterov form~\cite{yuan2024mars,chang2025convergence}. 
\end{remark}
\end{minipage}
\end{tabular}
\label{tab:algorithms}
\vspace{-8pt}
\end{table}
\section{Analysis}
\label{sec:analysis}

The analysis follows the logic of {\method}: Section~\ref{sec:ana_1} establishes the spectrum sensitivity of finite-step Newton--Schulz; Section~\ref{sec:ana_2} interprets row/column normalization as a zeroth-order surrogate for whitening; and Section~\ref{sec:ana_3} motivates the R rather than RC, with row normalization admitting a clean stochastic nonconvex guarantee. 
Together, these results motivate the {\method} family and clarify the trade-off between stronger spectral correction and preprocessing bias. Within the hidden-weight setting studied here, they support using R as our default working variant.
% Together, Theorem~\ref{th_ns}, Proposition~\ref{th_zero_rc}, Proposition~\ref{th_conv_rc}, and Theorem~\ref{th_conv_r} justify the {\method} family and the default choice of R.

\subsection{Why spectral geometry matters for finite-step Newton--Schulz orthogonalization}
\label{sec:ana_1}

For clarity, the theorem below is stated for the wide case $p\le q$; the tall case follows by transposition.

\begin{theorem}
\label{th_ns}
Let $\mathbf{G}\in\mathbb R^{p\times q}$ have rank $r\ge1$ and compact SVD
$\mathbf{G}=\mathbf{U}\Sigma\mathbf{V}^\top$, where
$\Sigma=\operatorname{diag}(\sigma_1,\ldots,\sigma_r)$ with
$\sigma_1\ge\cdots\ge\sigma_r>0$ and $p\le q$.
Consider the Newton--Schulz iteration
\begin{equation*}
\scalebox{0.9}{$\displaystyle
\mathbf{X}_0=\frac{\mathbf{G}}{\|\mathbf{G}\|_F},\quad
\mathbf{X}_{k+1}=\left(
a\mathbf I_p
+b\mathbf{X}_k\mathbf{X}_k^\top
+c(\mathbf{X}_k\mathbf{X}_k^\top)^2
\right)\mathbf{X}_k,
$}
\end{equation*}
where $\phi(s):=as+bs^3+cs^5$ and $q(t):=a+bt+ct^2$ satisfy $a>1, 0<q(t)\le a,t\in[0,1].$
Such coefficient choices include those used in
\citep{jordan6muon,kim2026convergence}.
Then, for each $k\ge0$,
\[
\mathbf{X}_k=\mathbf{U}
\operatorname{diag}\bigl(s_1^{(k)},\ldots,s_r^{(k)}\bigr)
\mathbf{V}^\top,
\qquad
s_i^{(0)}=\frac{\sigma_i}{\|\mathbf{G}\|_F},
\qquad
s_i^{(k+1)}=\phi\bigl(s_i^{(k)}\bigr).
\]
Moreover,
\begin{equation*}
\scalebox{0.8}{$\displaystyle
\frac{1}{\sqrt r}
\bigl\|
\mathbf{X}_k-\operatorname{Orth}(\mathbf{G})
\bigr\|_F
\ge
\frac{1}{\sqrt r}
\left(
\sum_{i=1}^r
\left(
1-\frac{a^k}{\kappa_i\sqrt{\operatorname{sr}(\mathbf{G})}}
\right)_+^2
\right)^{1/2}.$
}
\end{equation*}
Equivalently, the $i$-th singular direction leaves the linear regime at \scalebox{0.85}{$\tau_i =\log_a\left(
\kappa_i\sqrt{\operatorname{sr}(\mathbf{G})}
\right)$}. In particular, \scalebox{0.85}{$\tau_1=\log_a\sqrt{\operatorname{sr}(\mathbf{G})},\tau_r=\log_a\left(
\kappa(\mathbf{G})\sqrt{\operatorname{sr}(\mathbf{G})}
\right),\tau_r-\tau_1=\log_a\kappa(\mathbf{G})$}.
See Appendix~\ref{proof:th_ns} for details.
\end{theorem}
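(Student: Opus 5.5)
We argue in three steps: a spectral invariance induction, a scalar bound $s_i^{(k)}\le a^k s_i^{(0)}$, and a rewriting of $s_i^{(0)}$ in terms of $\kappa_i$ and $\operatorname{sr}(\mathbf G)$.

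\emph{Step 1 (the iterates keep the singular vectors).} We show by induction on $k$ that $\mathbf X_k=\mathbf U\operatorname{diag}(s^{(k)})\mathbf V^\top$. The base case is immediate from the compact SVD of $\mathbf G$. For the inductive step, $\mathbf X_k\mathbf X_k^\top=\mathbf U\operatorname{diag}((s^{(k)})^2)\mathbf U^\top$. Since $\mathbf U^\top\mathbf U=\mathbf I_r$, every power of this matrix has the same form. Multiplying $\mathbf X_k$ by $a\mathbf I_p$ needs no care about the complement of $\operatorname{range}(\mathbf U)$, because $\mathbf I_p\mathbf U=\mathbf U$. Collecting terms gives $\mathbf X_{k+1}=\mathbf U\operatorname{diag}(\phi(s_i^{(k)}))\mathbf V^\top$. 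Hence $\mathbf X_k-\operatorname{Orth}(\mathbf G)=\mathbf U\operatorname{diag}(s_i^{(k)}-1)\mathbf V^\top$, and by orthonormality of the columns of $\mathbf U$ and $\mathbf V$,
\[
\|\mathbf X_k-\operatorname{Orth}(\mathbf G)\|_F^2=\sum_{i=1}^r\bigl(1-s_i^{(k)}\bigr)^2 .
\]

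\emph{Step 2 (the scalar iterates grow at most geometrically).} We claim $0\le s_i^{(k)}\le\min(1,a^k s_i^{(0)})$. Note $s_i^{(0)}=\sigma_i/\|\mathbf G\|_F\in(0,1]$. If $s\in[0,1]$, then $t=s^2\in[0,1]$, so $\phi(s)=s\,q(s^2)\in[0,a s]$ by the hypothesis $0<q\le a$.

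This gives the upper bound $s_i^{(k)}\le a^k s_i^{(0)}$ by induction, provided the iterates stay in $[0,1]$. That invariance is the main obstacle: $\phi$ need not map $[0,1]$ into $[0,1]$. For the Jordan quintic coefficients, $\phi$ overshoots to about $1.2$ and the iterates oscillate around $1$.

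We handle this as follows. Whenever $a^k s_i^{(0)}<1$ we only need the upper bound $s_i^{(k)}\le a^k s_i^{(0)}$, and before the linear regime ends all earlier iterates satisfy $s_i^{(j)}\le a^j s_i^{(0)}<1$. So the induction stays inside $[0,1]$ exactly as long as $a^k s_i^{(0)}\le 1$, which is where it is needed. Nonnegativity follows because $s\ge0$ and $q>0$ give $\phi(s)\ge0$. When $a^k s_i^{(0)}\ge1$, the corresponding summand $(1-a^k s_i^{(0)})_+^2$ is zero, so the bound holds trivially for that index. In all cases, then,
\[
\bigl(1-s_i^{(k)}\bigr)^2\ge\bigl(1-a^k s_i^{(0)}\bigr)_+^2 .
\]

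\emph{Step 3 (rewriting $s_i^{(0)}$ and the exit times).} We write
\[
s_i^{(0)}=\frac{\sigma_i}{\sigma_1}\cdot\frac{\sigma_1}{\|\mathbf G\|_F}=\frac{1}{\kappa_i\sqrt{\operatorname{sr}(\mathbf G)}}.
\]
Substituting into the bound of Step 2, summing over $i$, taking square roots, and dividing by $\sqrt r$ yields the claimed lower bound.

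The exit time $\tau_i$ is the solution of $a^{\tau}s_i^{(0)}=1$, namely $\tau_i=\log_a(\kappa_i\sqrt{\operatorname{sr}(\mathbf G)})$. Since $\kappa_1=1$ and $\kappa_r=\kappa(\mathbf G)$, we get $\tau_1=\log_a\sqrt{\operatorname{sr}(\mathbf G)}$ and $\tau_r=\log_a\bigl(\kappa(\mathbf G)\sqrt{\operatorname{sr}(\mathbf G)}\bigr)$. Their difference is $\tau_r-\tau_1=\log_a\kappa(\mathbf G)$.
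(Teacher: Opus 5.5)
Your proof is correct and follows the paper's argument essentially step for step: the same induction showing the iterates keep $\mathbf U,\mathbf V$, the same exact Frobenius error identity, the same case split on whether $a^k s_i^{(0)}\ge 1$ with an inner induction giving $0<s_i^{(j)}\le a^j s_i^{(0)}<1$ for $j\le k$, and the same rewriting $s_i^{(0)}=1/(\kappa_i\sqrt{\operatorname{sr}(\mathbf G)})$. The only thing to tidy is your opening claim that $0\le s_i^{(k)}\le\min(1,a^k s_i^{(0)})$ holds for all $k$; as you note yourself, this fails once the iterates overshoot $1$, so state only the conditional version (valid while $a^k s_i^{(0)}<1$), which is what you actually prove and use.
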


\begin{remark}
\label{remark:cond}
Theorem~\ref{th_ns} shows that the stable rank controls the onset of the
linear to nonlinear transition, while the condition number controls its width
across singular directions.
\end{remark}

\begin{figure}[!htbp]
\centering

\subfloat[1024$\times$1024]{%
\begin{minipage}[b]{0.24\textwidth}
\centering
\includegraphics[width=\linewidth]{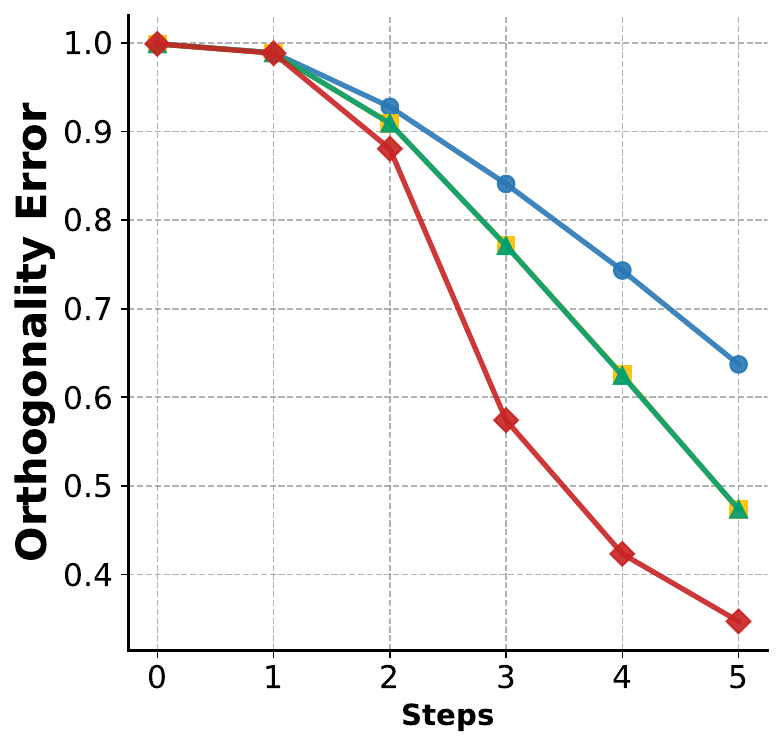}\par\vspace{-3pt}
\includegraphics[width=\linewidth]{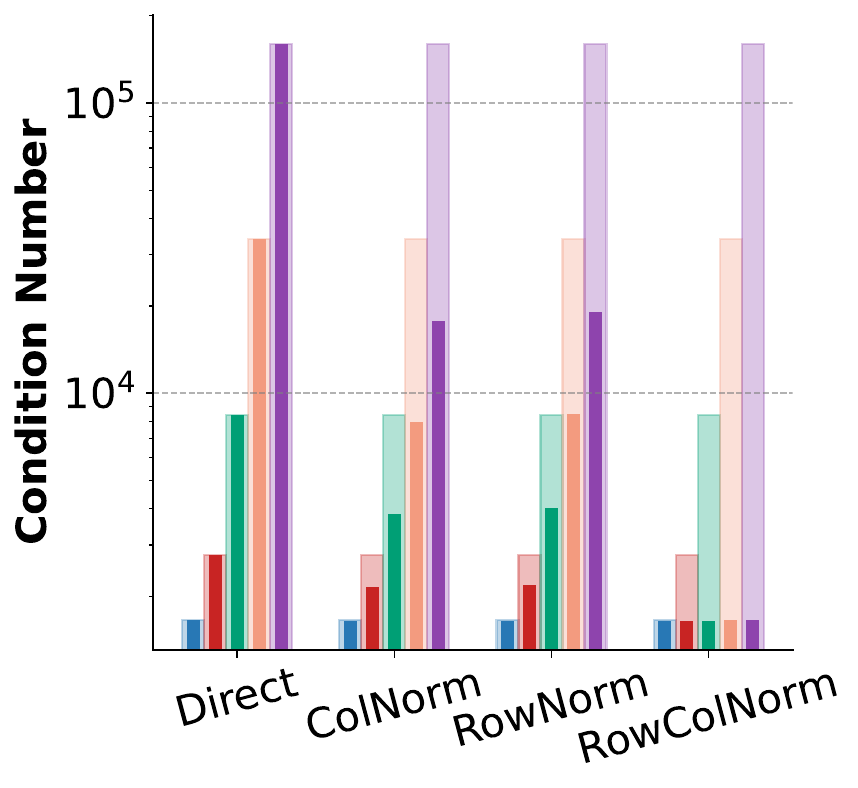}
\end{minipage}
\label{Fig:FG1}%
}
\subfloat[1024$\times$4096]{%
\begin{minipage}[b]{0.23\textwidth}
\centering
\includegraphics[width=\linewidth]{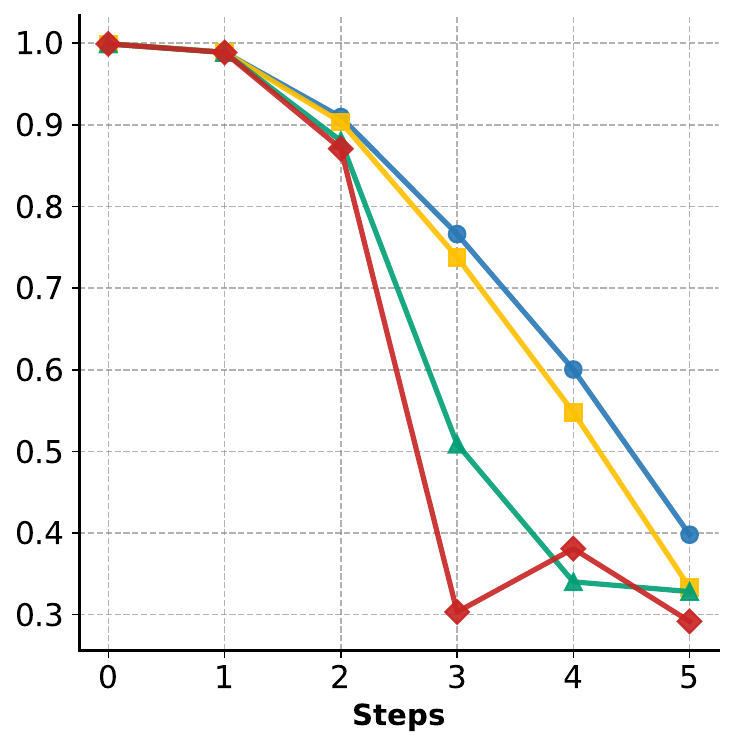}\par\vspace{-3pt}
\includegraphics[width=\linewidth]{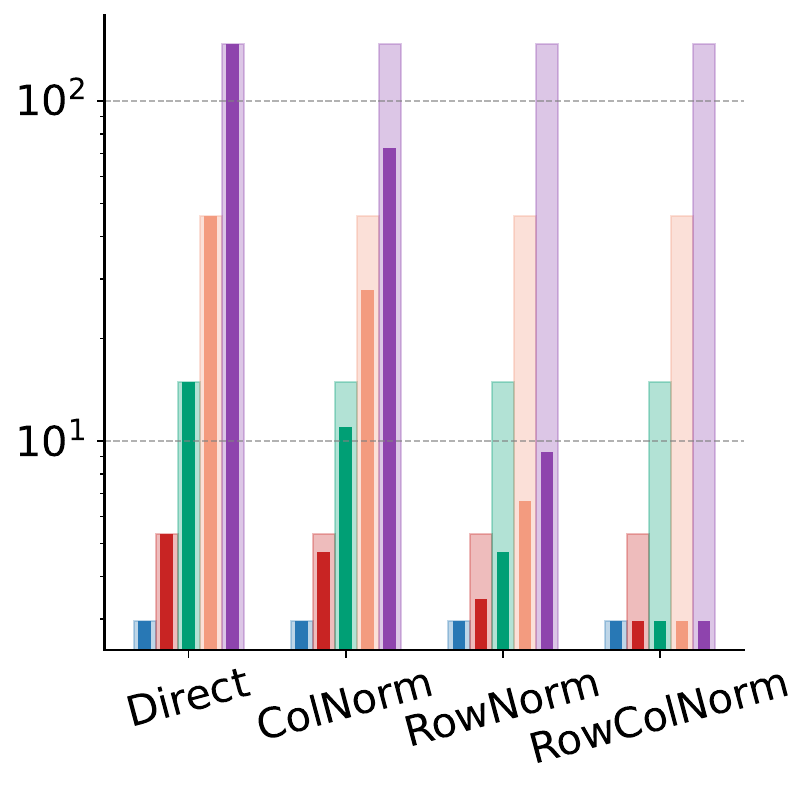}
\end{minipage}
\label{Fig:FG2}%
}
\subfloat[4096$\times$1024]{%
\begin{minipage}[b]{0.23\textwidth}
\centering
\includegraphics[width=\linewidth]{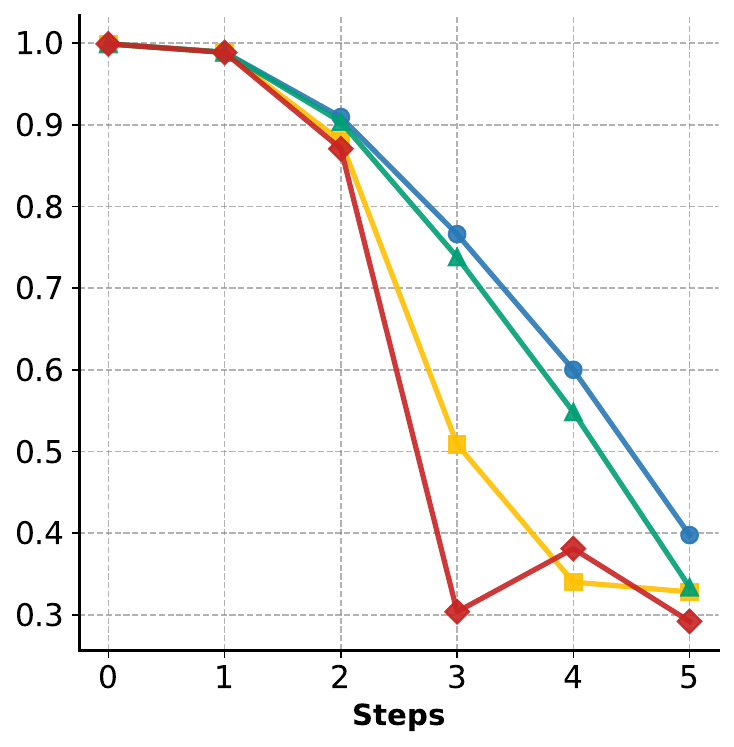}\par\vspace{-3pt}
\includegraphics[width=\linewidth]{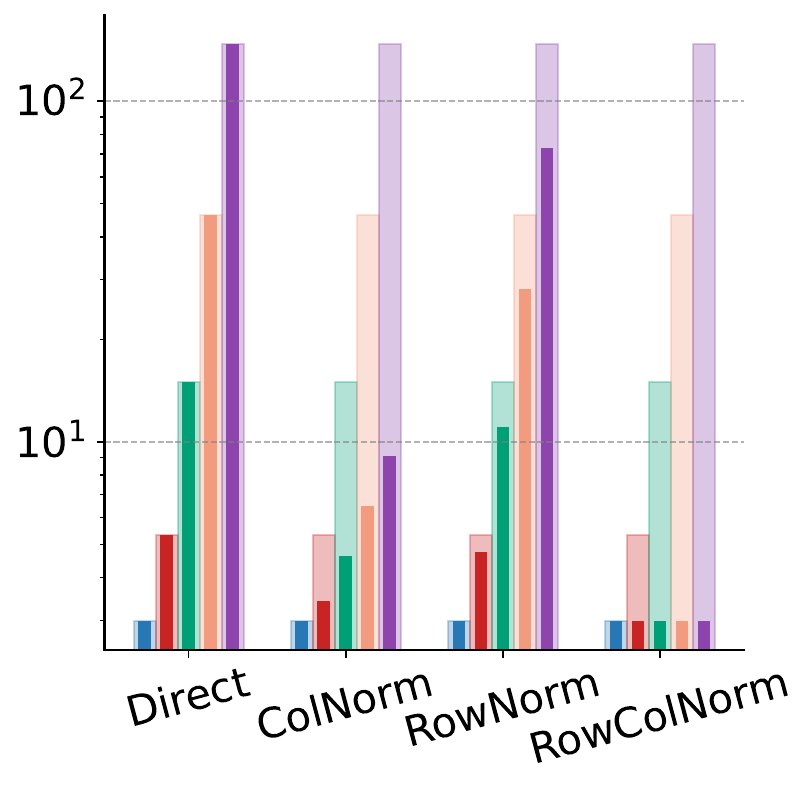}
\end{minipage}
\label{Fig:FG3}%
}
\subfloat[4096$\times$4096]{%
\begin{minipage}[b]{0.23\textwidth}
\centering
\includegraphics[width=\linewidth]{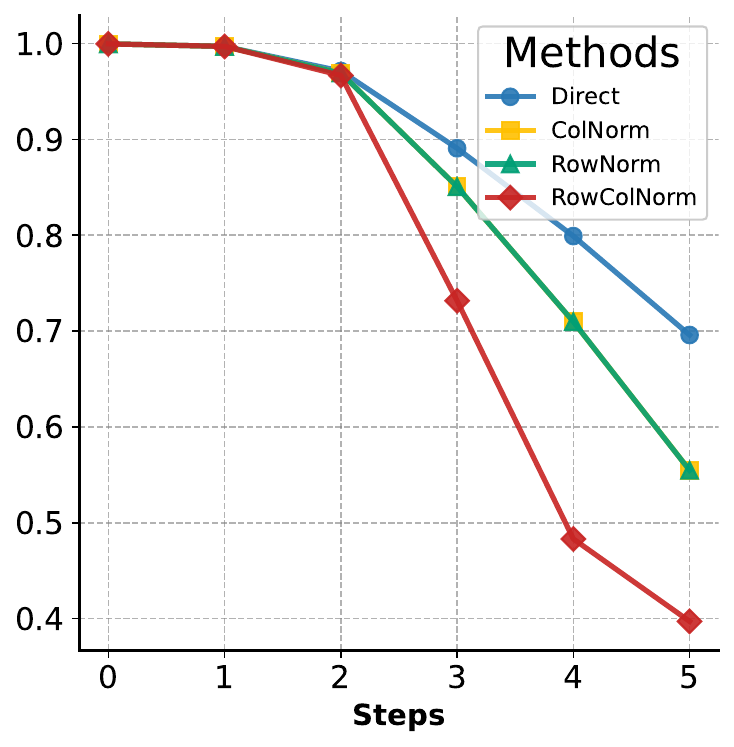}\par\vspace{-3pt}
\includegraphics[width=\linewidth]{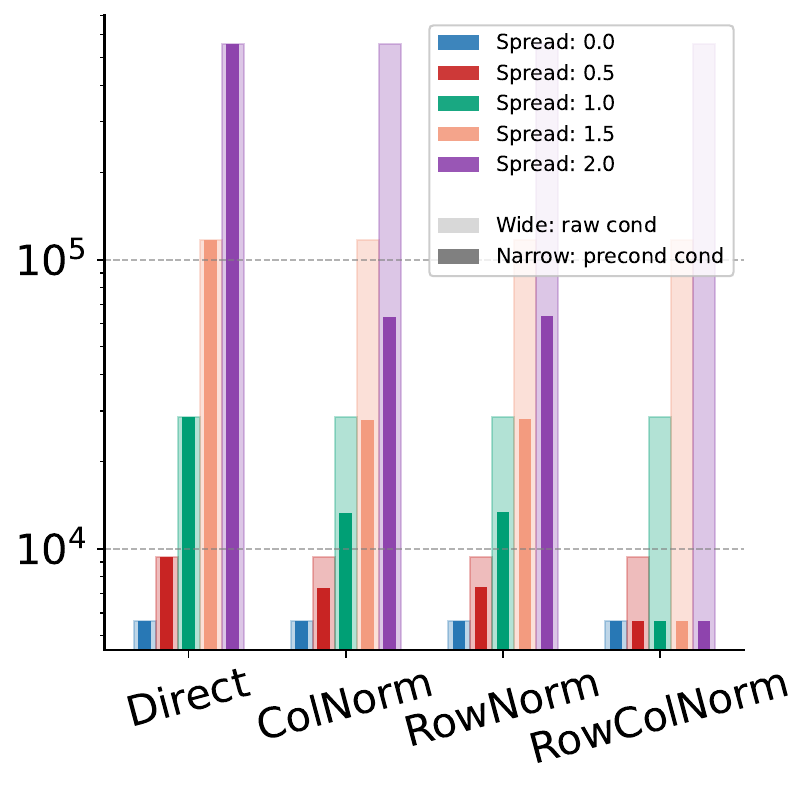}
\end{minipage}
\label{Fig:FG4}%
}
\caption{Random Gaussian matrices with controlled shapes and spectral spreads. Top: finite-step relative Frobenius error to the exact polar factor across Newton--Schulz steps. Bottom: raw and post-normalization condition numbers. Two-sided row/column normalization yields the smallest error and the most consistent spectral compression.}
\label{fig1}
\end{figure}

% Theorem~\ref{th_ns} shows that broad spectra slow finite-step orthogonalization by delaying contraction along small singular directions. 
To isolate this effect, Figure~\ref{fig1} uses random Gaussian matrices with controlled shapes and spectral spreads. The top row shows the finite-step error in Theorem~\ref{th_ns}, and the bottom row shows the corresponding condition numbers before and after normalization.

To verify the same mechanism on real training trajectories, Figure~\ref{fig3} uses pre-NS momentum matrices from a 2.6B-token Muon run on LLaMA2-130M~\cite{Touvron2023Llama2O} trained on C4~\citep{raffel2020exploring}. The top row shows the module-wise median of the error in Theorem~\ref{th_ns}, and the bottom row shows the corresponding mean. Appendix~\ref{appendix:more_res} provides a more detailed module-wise view of the same phenomenon: Figure~\ref{fig:spectral_metrics} reports singular-value entropy and stable-rank diagnostics for the pre-NS momentum matrices, showing that equilibration reshapes the spectrum seen by finite-step Newton--Schulz.

\begin{figure}[!htbp]
\centering
\subfloat[Step 1\%]{%
\begin{minipage}[b]{0.23\textwidth}
\centering
\includegraphics[width=\linewidth]{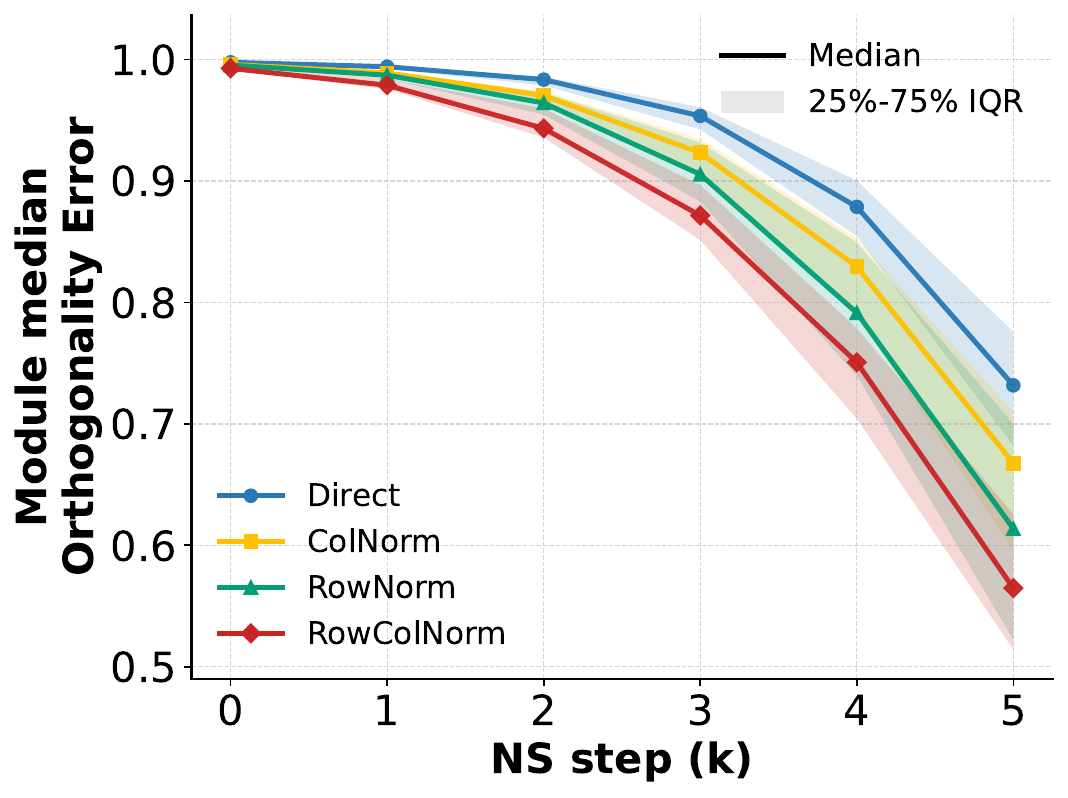}\par\vspace{-3pt}
\includegraphics[width=\linewidth]{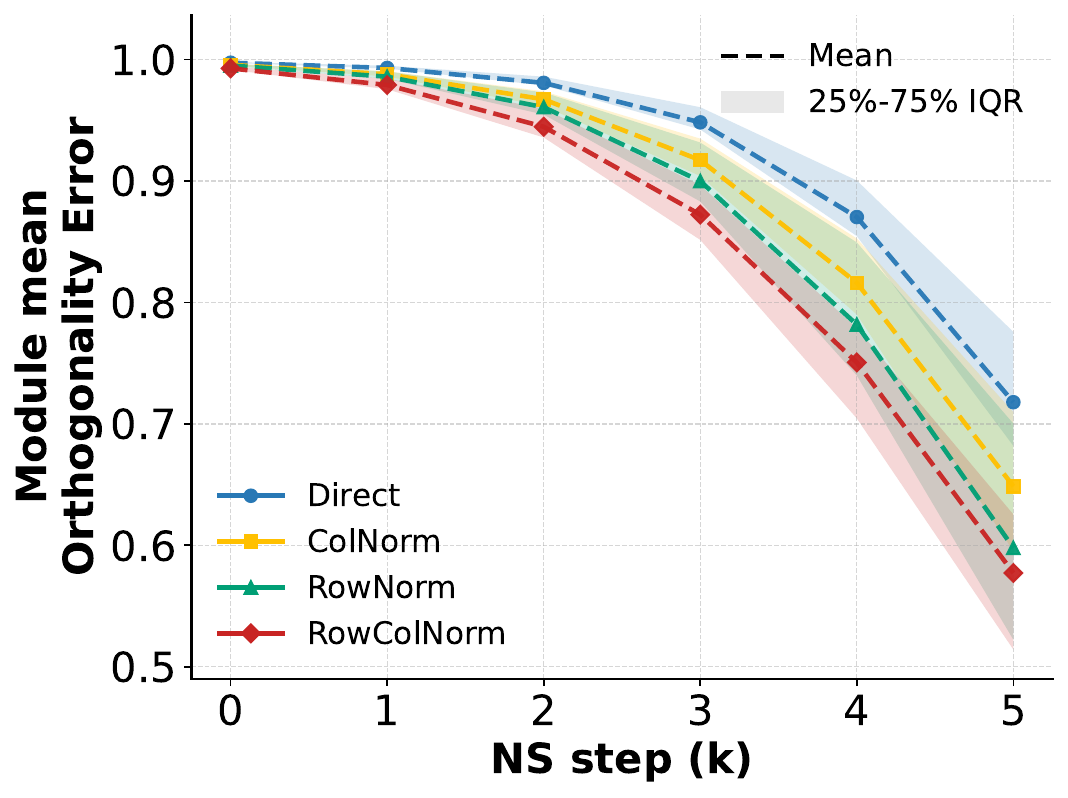}
\end{minipage}
\label{Fig:FG5}%
}
\subfloat[Step 10\%]{%
\begin{minipage}[b]{0.23\textwidth}
\centering
\includegraphics[width=\linewidth]{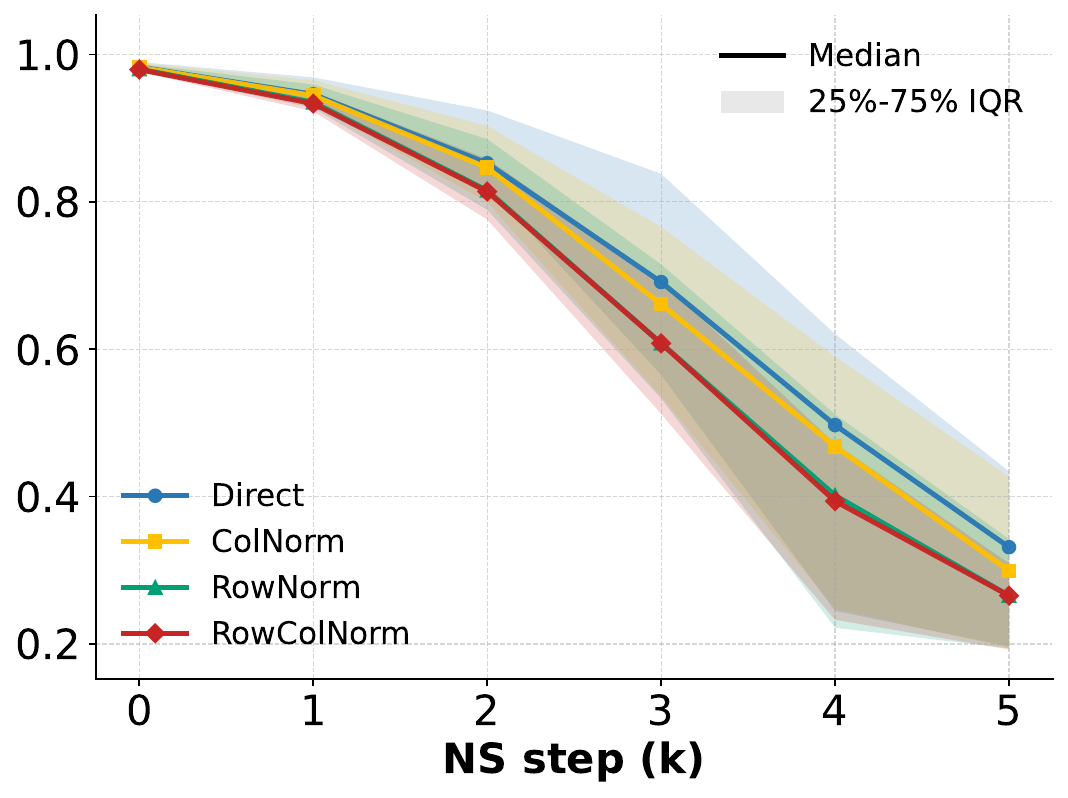}\par\vspace{-3pt}
\includegraphics[width=\linewidth]{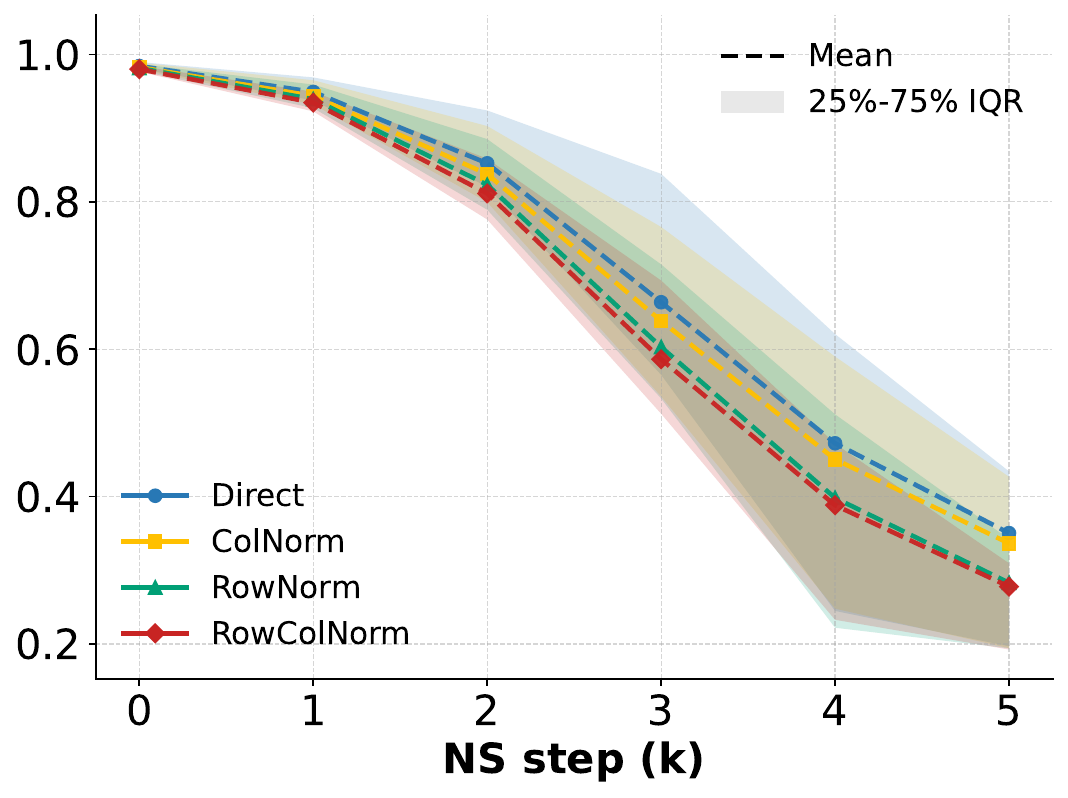}
\end{minipage}
\label{Fig:FG6}%
}
\subfloat[Step 50\%]{%
\begin{minipage}[b]{0.23\textwidth}
\centering
\includegraphics[width=\linewidth]{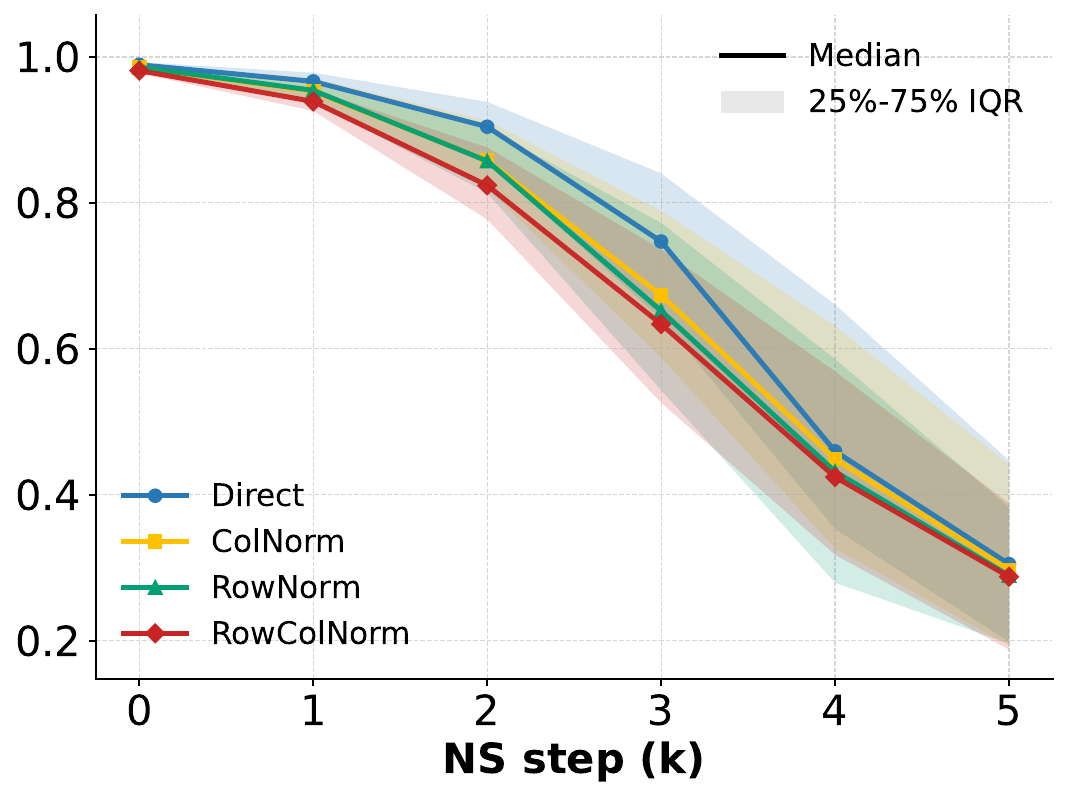}\par\vspace{-3pt}
\includegraphics[width=\linewidth]{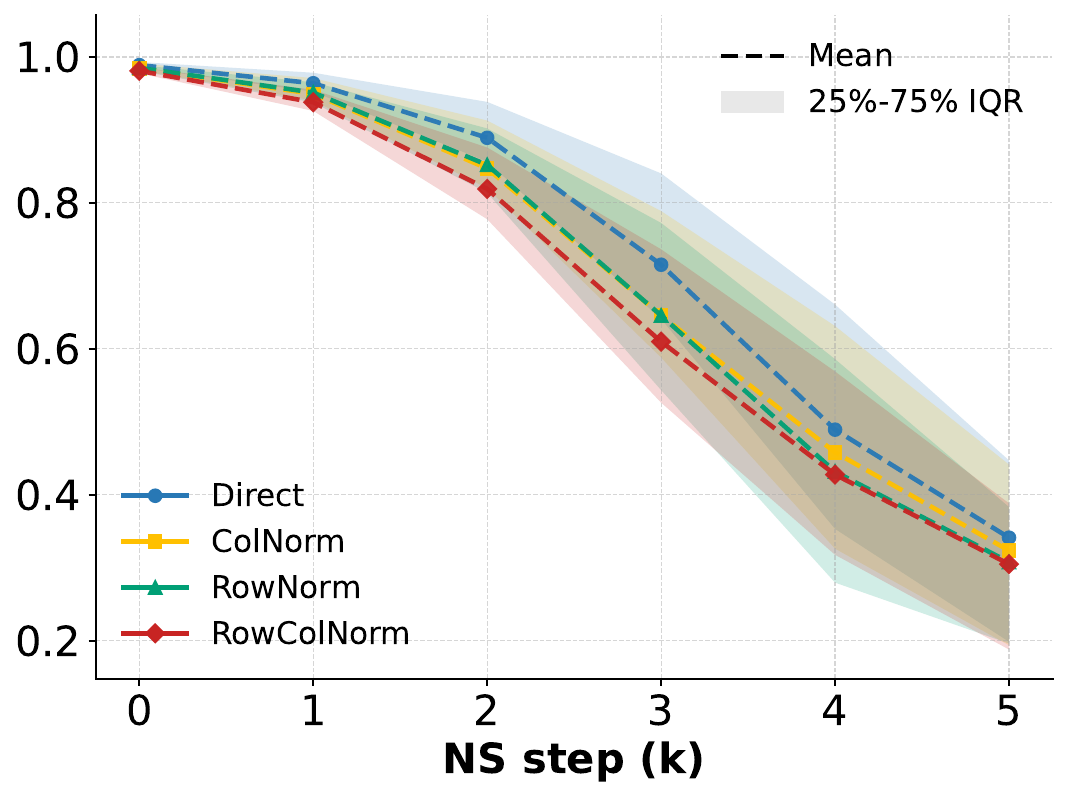}
\end{minipage}
\label{Fig:FG7}%
}
\subfloat[Step 100\%]{%
\begin{minipage}[b]{0.23\textwidth}
\centering
\includegraphics[width=\linewidth]{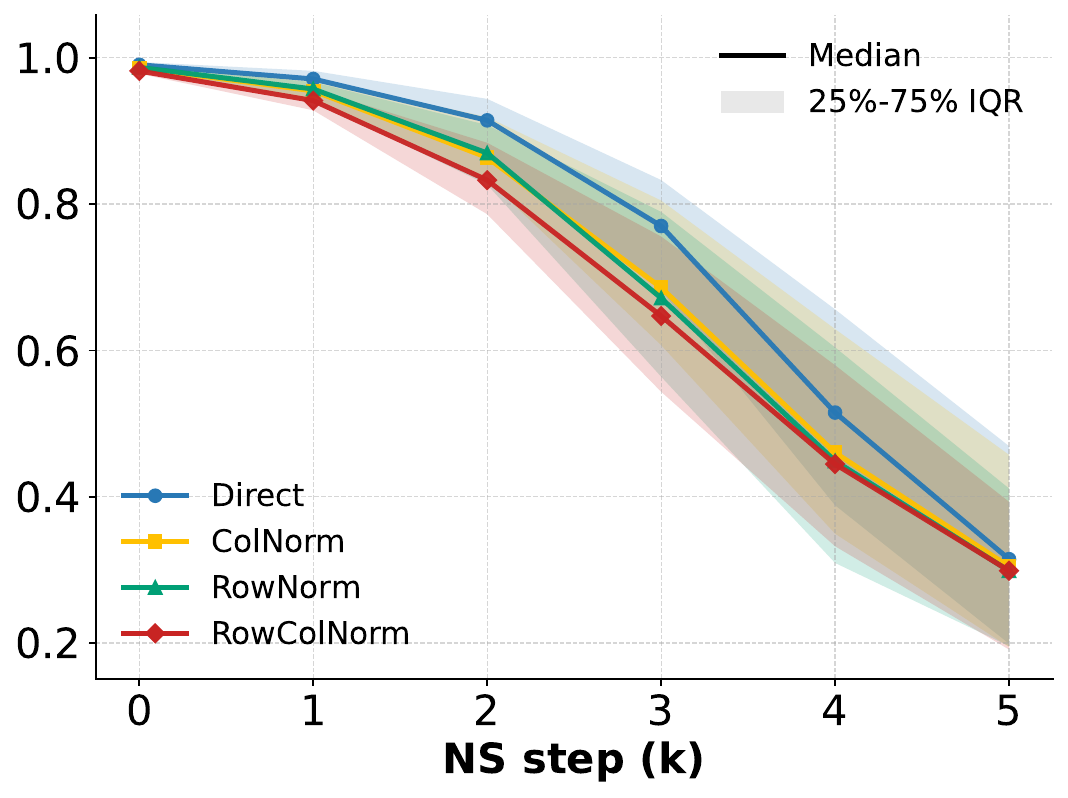}\par\vspace{-3pt}
\includegraphics[width=\linewidth]{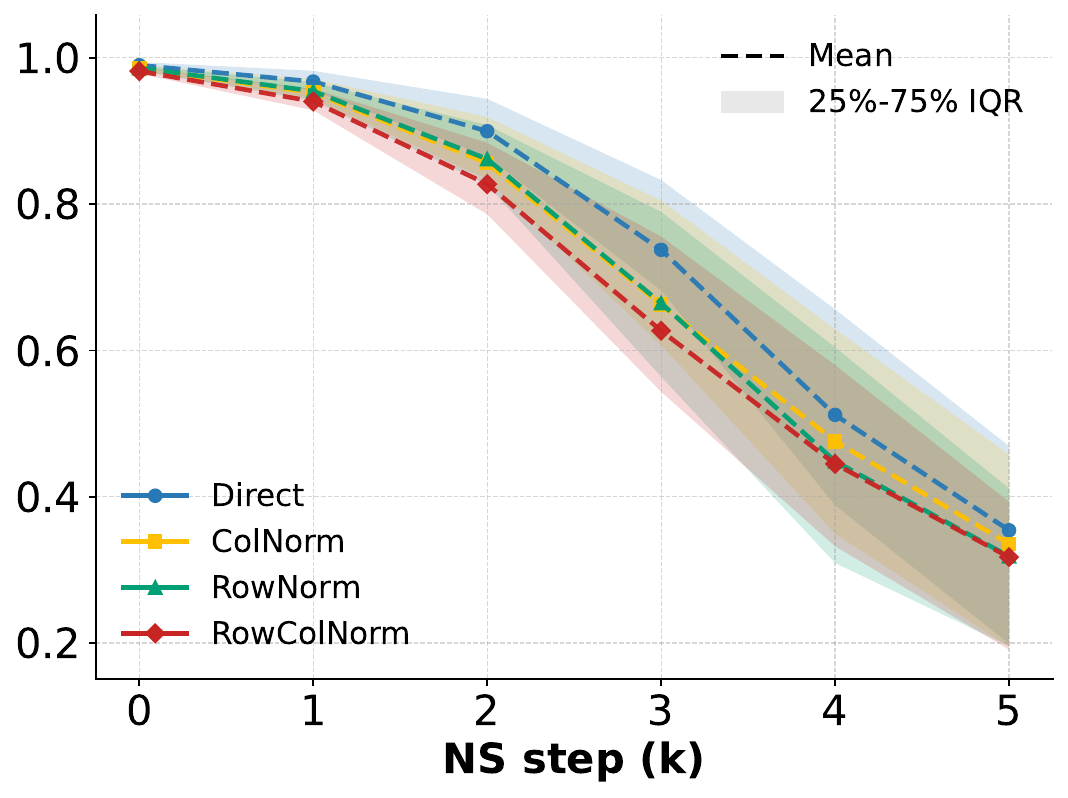}
\end{minipage}
\label{Fig:FG8}%
}
\caption{Finite-step orthogonalization error across Newton--Schulz steps at 1\%, 10\%, 50\%, and 100\% of training. In each column, the top panel shows the module-wise median and the bottom panel shows the module-wise mean; shaded bands denote the 25\%--75\% range. Two-sided row/column normalization decays fastest.}
\label{fig3}
\end{figure}
\vspace{-5pt}
\subsection{Row/column normalization as a zeroth-order surrogate for whitening}
\label{sec:ana_2}

The next result shows that normalization is not merely heuristic rescaling. It is the leading-order term of exact whitening/orthogonalization when the normalized Gram matrix is close to identity. Consequently, normalization removes marginal scale mismatch first and leaves only the correlation correction to the polar step.

\begin{proposition}
\label{th_zero_rc}
The following first-order expansions hold.

\textit{(i) Column/right whitening.} 

Let $\mathbf{M}\in\mathbb R^{p\times q}$ have full column rank, and set
$\mathbf{D}_c:=\operatorname{diag}(\|\mathbf{M}_{:,1}\|_2,\ldots,\|\mathbf{M}_{:,q}\|_2)\succ 0$,
$\mathbf N_c:=\mathbf{M}\mathbf{D}_c^{-1}$, and
$\mathbf C_c:=\mathbf N_c^\top\mathbf N_c-\mathbf I_q$.
If $\|\mathbf{D}_c\mathbf C_c\mathbf{D}_c\|_2$ is sufficiently small, then
$\operatorname{Orth}_r(\mathbf{M})
:=\mathbf{M}(\mathbf{M}^\top\mathbf{M})^{-1/2}
=\mathbf N_c-\mathbf N_c\mathbf L_c\mathbf{D}_c^{-1}
+\mathcal O\bigl(\|\mathbf{D}_c\mathbf C_c\mathbf{D}_c\|_2^2\bigr),$ where $\mathbf L_c$ is the unique solution of $\mathbf{D}_c\mathbf L_c+\mathbf L_c\mathbf{D}_c=\mathbf{D}_c\mathbf C_c\mathbf{D}_c.$
In particular,
\[
\begin{aligned}
\operatorname{Orth}_r(\mathbf{M})=\mathbf N_c+\mathcal O\bigl(\|\mathbf C_c\|_2\bigr).
\end{aligned}
\]
\textit{(ii) Row/left whitening.} 

Let $\mathbf{M}\in\mathbb R^{p\times q}$ have full row rank, and set
$\mathbf{D}_r:=\operatorname{diag}(\|\mathbf{M}_{1,:}\|_2,\ldots,\|\mathbf{M}_{p,:}\|_2)\succ 0$,
$\mathbf N_r:=\mathbf{D}_r^{-1}\mathbf{M}$, and
$\mathbf C_r:=\mathbf N_r\mathbf N_r^\top-\mathbf I_p$.
If $\|\mathbf{D}_r\mathbf C_r\mathbf{D}_r\|_2$ is sufficiently small, then
$\operatorname{Orth}_\ell(\mathbf{M})
:=(\mathbf{M}\mathbf{M}^\top)^{-1/2}\mathbf{M}
=\mathbf N_r-\mathbf{D}_r^{-1}\mathbf L_r\mathbf N_r
+\mathcal O\bigl(\|\mathbf{D}_r\mathbf C_r\mathbf{D}_r\|_2^2\bigr),$ where $\mathbf L_r$ is the unique solution of $\mathbf{D}_r\mathbf L_r+\mathbf L_r\mathbf{D}_r=\mathbf{D}_r\mathbf C_r\mathbf{D}_r.$
In particular,
\[
\begin{aligned}
\operatorname{Orth}_\ell(\mathbf{M})=\mathbf N_r+\mathcal O\bigl(\|\mathbf C_r\|_2\bigr).
\end{aligned}
\]
See Appendix~\ref{proof:th_zero_rc} for details.
\end{proposition}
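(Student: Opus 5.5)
We prove part (i) and obtain part (ii) by transposition. Applying (i) to $\mathbf M^\top$ gives the row statement, because $(\mathbf M\mathbf M^\top)^{-1/2}\mathbf M=\bigl(\mathbf M^\top(\mathbf M\mathbf M^\top)^{-1/2}\bigr)^\top$ and the row norms of $\mathbf M$ are the column norms of $\mathbf M^\top$. For (i), the key observation is that the Gram matrix factors exactly around the diagonal scaling:
\[
\mathbf M^\top\mathbf M=\mathbf D_c\mathbf N_c^\top\mathbf N_c\mathbf D_c=\mathbf D_c(\mathbf I_q+\mathbf C_c)\mathbf D_c=\mathbf D_c^2+\mathbf E,\qquad \mathbf E:=\mathbf D_c\mathbf C_c\mathbf D_c .
\]
Here $\mathbf C_c$ has zero diagonal, since the normalized columns have unit norm. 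The whole problem therefore reduces to a perturbation expansion of the matrix function $\mathbf A\mapsto \mathbf A^{-1/2}$ at the positive diagonal point $\mathbf A_0=\mathbf D_c^2$, in the direction $\mathbf E$.

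\textbf{Step 1: square root.} Write $(\mathbf D_c^2+\mathbf E)^{1/2}=\mathbf D_c+\mathbf L_c+\mathbf R_2$. Squaring and matching first-order terms gives the Sylvester equation $\mathbf D_c\mathbf L_c+\mathbf L_c\mathbf D_c=\mathbf E$. It has the unique solution $(\mathbf L_c)_{ij}=E_{ij}/(d_i+d_j)$ because $d_i+d_j>0$, and this solution satisfies $\|\mathbf L_c\|_2=\mathcal O(\|\mathbf E\|_2)$ with a constant depending on $\min_i d_i$. Since $X\mapsto X^{1/2}$ is analytic (Fréchet differentiable to all orders) on positive definite matrices, the remainder obeys $\|\mathbf R_2\|_2=\mathcal O(\|\mathbf E\|_2^2)$ once $\|\mathbf E\|_2<\min_i d_i^2$. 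To make this concrete, I would either use the Daleckii--Krein formula or integrate the second derivative along the segment $\mathbf D_c^2+t\mathbf E$, $t\in[0,1]$; positive definiteness holds along the whole segment under the smallness assumption.

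\textbf{Step 2: inverse.} A Neumann expansion gives
\[
(\mathbf D_c+\mathbf L_c+\mathbf R_2)^{-1}=\mathbf D_c^{-1}-\mathbf D_c^{-1}\mathbf L_c\mathbf D_c^{-1}+\mathcal O(\|\mathbf E\|_2^2),
\]
valid when $\|\mathbf D_c^{-1}(\mathbf L_c+\mathbf R_2)\|_2<1$. Multiplying on the left by $\mathbf M=\mathbf N_c\mathbf D_c$ yields
\[
\operatorname{Orth}_r(\mathbf M)=\mathbf N_c-\mathbf N_c\mathbf L_c\mathbf D_c^{-1}+\mathcal O(\|\mathbf E\|_2^2).
\]
Here $\|\mathbf N_c\|_2\le\sqrt{1+\|\mathbf C_c\|_2}$ is bounded, so the remainder order is preserved.

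\textbf{Step 3: the ``in particular'' claim.} The Sylvester solution can be rewritten in normalized form as
\[
(\mathbf L_c\mathbf D_c^{-1})_{ij}=\frac{d_i}{d_i+d_j}(\mathbf C_c)_{ij}.
\]
This is a Schur (Hadamard) product of $\mathbf C_c$ with the matrix $K_{ij}=d_i/(d_i+d_j)$, so the $\mathbf D_c$ factors cancel up to this multiplier and $\|\mathbf N_c\mathbf L_c\mathbf D_c^{-1}\|_2=\mathcal O(\|\mathbf C_c\|_2)$. Alternatively, and more cleanly, one can avoid the first-order formula altogether: $\operatorname{Orth}_r(\mathbf M)=\mathbf N_c\mathbf D_c(\mathbf D_c(\mathbf I+\mathbf C_c)\mathbf D_c)^{-1/2}$, which agrees with $\mathbf N_c(\mathbf I+\mathbf C_c)^{-1/2}$ only up to an orthogonal factor. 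So the direct route is to bound $\|\mathbf N_c\mathbf D_c(\mathbf D_c^2+\mathbf E)^{-1/2}-\mathbf N_c\|_2$ by Lipschitz continuity of $\mathbf A\mapsto \mathbf A^{-1/2}$, and then use the Schur-product form to express the constant through $\|\mathbf C_c\|_2$.

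\textbf{Main obstacle.} The hardest part is Step 3. The first-order term naively scales with $\|\mathbf E\|_2\le\|\mathbf D_c\|_2^2\|\mathbf C_c\|_2$, and absorbing the $\mathbf D_c$ factors requires controlling the Schur multiplier $K$. Its norm is not uniformly bounded independently of the ratio $d_{\max}/d_{\min}$. My first attempt would be to accept a hidden constant depending on $\kappa(\mathbf D_c)$, consistent with the $\mathcal O(\cdot)$ notation. I would then check whether the known bound on Schur multipliers of the form $d_i/(d_i+d_j)$ (related to the triangular-truncation/Löwner kernel estimates) yields a logarithmic dependence instead.
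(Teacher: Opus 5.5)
Your proposal is correct and follows essentially the same route as the paper: write $\mathbf M^\top\mathbf M=\mathbf D_c^2+\mathbf D_c\mathbf C_c\mathbf D_c$, expand $(\cdot)^{-1/2}$ to first order via the Sylvester equation for the square-root derivative composed with the inverse, and multiply by $\mathbf M=\mathbf N_c\mathbf D_c$. Getting (ii) by transposition and solving the Sylvester equation entrywise are only cosmetic departures from the paper's repeated argument and integral representation. Your ``main obstacle'' is handled in the paper exactly by your fallback: it bounds $\|\mathbf L_c\mathbf D_c^{-1}\|_2\le\tfrac12\kappa(\mathbf D_c)^2\|\mathbf C_c\|_2$ and absorbs the $\kappa(\mathbf D_c)$-dependence into the $\mathcal O(\cdot)$ constant, so the Schur-multiplier refinement (which would only sharpen that constant) is not needed.
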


\begin{corollary}
\label{cor:row_norm_main}
Under the assumptions of the row/left part of Proposition~\ref{th_zero_rc}, if $\|\mathbf C_r\|_2$ is sufficiently small, then
\[
\begin{aligned}
\operatorname{Orth}_\ell(\mathbf N_r)
=
(\mathbf I_p+\mathbf C_r)^{-1/2}\mathbf N_r
=
\mathbf N_r-\frac12\mathbf C_r\mathbf N_r+\mathcal O\bigl(\|\mathbf C_r\|_2^2\bigr),
\end{aligned}
\]
and hence
\[
\begin{aligned}
\|\operatorname{Orth}_\ell(\mathbf N_r)-\mathbf N_r\|_2
\le
\frac12\|\mathbf N_r\|_2\|\mathbf C_r\|_2
+\mathcal O\bigl(\|\mathbf C_r\|_2^2\bigr).
\end{aligned}
\]
In particular, since $\operatorname{diag}(\mathbf N_r\mathbf N_r^\top)=\mathbf I_p$, row normalization removes the marginal scale mismatch, so the leading whitening correction acts only on the residual off-diagonal Gram error.
\end{corollary}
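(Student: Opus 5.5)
The identity $\operatorname{Orth}_\ell(\mathbf N_r)=(\mathbf I_p+\mathbf C_r)^{-1/2}\mathbf N_r$ follows straight from the definition of the left polar map, and I will then expand $(\mathbf I_p+\mathbf C_r)^{-1/2}$ to first order. I will apply $\operatorname{Orth}_\ell(\mathbf A):=(\mathbf A\mathbf A^\top)^{-1/2}\mathbf A$ from Proposition~\ref{th_zero_rc}(ii) to $\mathbf A=\mathbf N_r$. By definition of $\mathbf C_r$ we have $\mathbf N_r\mathbf N_r^\top=\mathbf I_p+\mathbf C_r$. Since $\mathbf M$ has full row rank and $\mathbf D_r\succ0$, $\mathbf N_r$ also has full row rank, so $\mathbf I_p+\mathbf C_r\succ0$ and its inverse square root is well defined. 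This gives the first equality. It also shows that $\operatorname{Orth}_\ell(\mathbf N_r)=\operatorname{Orth}_\ell(\mathbf M)$, because the positive diagonal left factor $\mathbf D_r^{-1}$ is absorbed by the left polar map. I note this only as a consistency check with Proposition~\ref{th_zero_rc}; it is not needed for the argument.

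For the expansion, $\mathbf C_r$ is symmetric, so I will work spectrally. Write $\mathbf C_r=\mathbf Q\Lambda\mathbf Q^\top$ with eigenvalues $\lambda_j$. Then $(\mathbf I_p+\mathbf C_r)^{-1/2}=\mathbf Q\,\operatorname{diag}\bigl((1+\lambda_j)^{-1/2}\bigr)\mathbf Q^\top$. Assume $\|\mathbf C_r\|_2\le 1/2$, which is what ``sufficiently small'' will mean here. For $|x|\le1/2$, Taylor's theorem with remainder gives
\[
(1+x)^{-1/2}=1-\tfrac12 x+R(x),\qquad |R(x)|\le c\,x^2,
\]
with, for example, $c=\tfrac38(1/2)^{-5/2}$, obtained from the bound on the second derivative. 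Applying this eigenvalue by eigenvalue gives
\[
(\mathbf I_p+\mathbf C_r)^{-1/2}=\mathbf I_p-\tfrac12\mathbf C_r+\mathbf R,\qquad \|\mathbf R\|_2\le c\|\mathbf C_r\|_2^2 .
\]
Multiplying on the right by $\mathbf N_r$ yields the displayed expansion, with remainder $\mathbf R\mathbf N_r$. This remainder is $\mathcal O(\|\mathbf C_r\|_2^2)$ because $\|\mathbf N_r\|_2^2=\|\mathbf I_p+\mathbf C_r\|_2\le 3/2$ is bounded.

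For the norm bound, I will use the triangle inequality and submultiplicativity:
\[
\|\operatorname{Orth}_\ell(\mathbf N_r)-\mathbf N_r\|_2\le\tfrac12\|\mathbf C_r\|_2\|\mathbf N_r\|_2+\|\mathbf R\|_2\|\mathbf N_r\|_2 .
\]
The second term is $\mathcal O(\|\mathbf C_r\|_2^2)$ by the bound on $\|\mathbf N_r\|_2$ above.

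For the final remark, note that $(\mathbf N_r\mathbf N_r^\top)_{ii}=\|\mathbf M_{i,:}\|_2^2/\|\mathbf M_{i,:}\|_2^2=1$. Hence $\operatorname{diag}(\mathbf C_r)=0$, so the first-order correction $-\tfrac12\mathbf C_r\mathbf N_r$ involves only the off-diagonal Gram entries. No step is a real obstacle. The only care needed is to fix an explicit smallness threshold so that the $\mathcal O(\cdot)$ constants are uniform and do not depend on $\mathbf N_r$; the bound $\|\mathbf N_r\|_2^2\le1+\|\mathbf C_r\|_2$ handles that.
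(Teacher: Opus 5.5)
Your argument for the corollary is correct and follows the paper's route. The paper likewise writes $\operatorname{Orth}_\ell(\mathbf N_r)=(\mathbf I_p+\mathbf C_r)^{-1/2}\mathbf N_r$, expands the inverse square root using Lemma~\ref{ana_lemma:a1} with $\mathbf A=\mathbf I_p$ and $\mathbf E=\mathbf C_r$ (the Sylvester equation $\mathbf L+\mathbf L=\mathbf C_r$ gives $\mathbf L=\tfrac12\mathbf C_r$), and concludes by submultiplicativity. You justify the same expansion through the spectral theorem and a scalar Taylor bound, which works because the base point is $\mathbf I_p$. Your version is more elementary and gives an explicit threshold and constant. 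Together with $\|\mathbf N_r\|_2^2\le 1+\|\mathbf C_r\|_2$, it also gives a genuinely uniform $\mathcal O(\|\mathbf C_r\|_2^2)$ remainder; Fréchet differentiability alone, which is what the lemma invokes, only yields $o(\|\mathbf E\|_2)$. The paper's lemma, in turn, is built to handle the non-scalar base point $\mathbf D_r^2$ needed in Proposition~\ref{th_zero_rc}.

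However, your side remark that $\operatorname{Orth}_\ell(\mathbf N_r)=\operatorname{Orth}_\ell(\mathbf M)$ is false and should be deleted. The left polar map absorbs positive scalars, not positive diagonal matrices. In fact $\operatorname{Orth}_\ell(\mathbf D_r^{-1}\mathbf M)=\operatorname{Orth}_\ell(\mathbf M)$ holds if and only if $\mathbf D_r$ commutes with $\mathbf M\mathbf M^\top$, i.e.\ if and only if rows of different norms are mutually orthogonal.

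Proposition~\ref{th_zero_rc} itself exhibits the discrepancy. Solving $\mathbf D_r\mathbf L_r+\mathbf L_r\mathbf D_r=\mathbf D_r\mathbf C_r\mathbf D_r$ entrywise, with $d_i=\|\mathbf M_{i,:}\|_2$, gives $(\mathbf D_r^{-1}\mathbf L_r)_{ij}=d_j(\mathbf C_r)_{ij}/(d_i+d_j)$. This differs from your first-order coefficient $\tfrac12(\mathbf C_r)_{ij}$ whenever $d_i\neq d_j$ and $(\mathbf C_r)_{ij}\neq 0$. Concretely, for $\mathbf M=\bigl(\begin{smallmatrix}1&1\\0&1\end{smallmatrix}\bigr)$ the polar factors of $\mathbf M$ and $\mathbf N_r$ are rotations by $-\arctan(1/2)$ and $-\pi/8$, respectively.

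This difference is exactly the preconditioning-bias term in \eqref{eq:error_decomp}, so the remark contradicts the paper's own framing. Since you never use it, the proof of the corollary itself is unaffected.
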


\begin{remark}
\label{remark:gap}
By contrast, the corresponding bound for $\operatorname{Orth}_\ell(\mathbf{M})-\mathbf{M}$ contains the additional zeroth-order term
\[
\begin{aligned}
\|\mathbf{D}_r-\mathbf I_p\|_2\|\mathbf N_r\|_2.
\end{aligned}
\]
Thus row normalization separates marginal rescaling from the genuine whitening step. The column-normalized/right analogue and the corresponding two-sided normalization statements are completely symmetric; see Appendix~\ref{appendix:normalization}.
\end{remark}

Together, Proposition~\ref{th_zero_rc}, Corollary~\ref{cor:row_norm_main}, and Remark~\ref{remark:gap} explain the division of labor in {\method}: normalization removes marginal-scale mismatch at zeroth order, and the subsequent Newton--Schulz orthogonalization handles the remaining correlation structure. This is exactly why \textbf{DiagPre} Eq. \eqref{eq:diagpre_family} is a meaningful lightweight surrogate for a much more expensive whitening step.

\subsection{Convergence analysis of the RC and R variants}
\label{sec:ana_3}
To analyze the {\method} family, we focus on the two variants that are most informative theoretically: RC and the default row-normalized R. RC gives stronger two-sided spectral correction but leads to a more involved alignment term, while R has a cleaner one-sided geometry. Accordingly, Proposition~\ref{th_conv_rc} studies RC. For R, Theorem~\ref{th_conv_r} gives the main guarantee for the exact-polar update, and Corollary~\ref{cor:conv_r_ns} extends it to the finite-step NS5 implementation. Compared with recent Muon analyses~\cite{chang2025convergence,Sato2025ConvergenceBA,shen2025convergence,kim2026convergence,shulgin2025beyond,nagashima2026improved}, our contribution is not to improve the asymptotic rate, but to establish a guarantee closer to the optimizer setting used in practice: the same $\widetilde{\mathcal{O}}(T^{-1/4})$ stationarity rate is proved while explicitly covering decoupled weight decay, a learning-rate schedule independent of the training horizon, and finite-step NS5 through the inexactness constant $\varepsilon_{\mathrm{ns}}$.
% with the mean spectral-norm discrepancy between NS5 and the exact polar factor validated in Figure~\ref{fig:ns5_spec_norm}.
% When $\varepsilon=0$ in the R mode, we write 

\begin{assumption}
\label{ass:1}
The objective is bounded below: there exists $f^{\star}>-\infty$ such that $f(\mathbf{X})\ge f^{\star}$ for all $\mathbf{X}\in\mathbb R^{m\times n}$.
\end{assumption}

\begin{assumption}
\label{ass:2}
The function $f$ is $L$-smooth, that is, $\|\nabla f(\mathbf{Y})-\nabla f(\mathbf{X})\|_F\le L\|\mathbf{Y}-\mathbf{X}\|_F$
for all $\mathbf{X},\mathbf{Y}\in\mathbb R^{m\times n}$.
\end{assumption}

\begin{assumption}
\label{ass:3}
The stochastic gradient is unbiased with bounded variance. Specifically, there exists $\sigma>0$ such that, for all $\mathbf{X}\in\mathbb R^{m\times n}$, $\mathbb E[\nabla f(\mathbf{X};\xi)]=\nabla f(\mathbf{X}), \mathbb E\|\nabla f(\mathbf{X};\xi)-\nabla f(\mathbf{X})\|_F^2\le \sigma^2.$
\end{assumption}

These assumptions are standard in analyses of adaptive and Muon-type methods~\citep{xie2024adan,chen2025muon,DBLP:conf/nips/LiRJ23,DBLP:conf/nips/WangFZ0023,chang2025mgup,li2026on,huang2025limuon}.

\begin{proposition}
\label{th_conv_rc}
Let $\eta_t=t^{-3/4}$, $\beta_t=1-t^{-1/2}$, $h_t:=1-\beta_t=t^{-1/2}$, and $a:=0.2\sqrt{\max\{m,n\}}$. 
Assume Assumptions~\ref{ass:1}, \ref{ass:2}, and \ref{ass:3}. Moreover, assume there exists $G_\infty>0$ such that $\|\mathbf{G}_t\|_{\infty}\le G_\infty$ almost surely. 
For Algorithm \ref{alg:muoneq}, consider the exact-polar RC variant in \eqref{eq:diagpre_family}, with Nesterov momentum disabled and $\lambda_t=0$.  Suppose further that $\varepsilon \ge \frac{4}{5}G_\infty^2\max\{m,n\}$, and define $\rho_r:=\sqrt{1+\frac{nG_\infty^2}{\varepsilon}},
\rho_c:=\sqrt{1+\frac{mG_\infty^2}{\varepsilon}},
\chi_\varepsilon:= \frac{(\rho_r+1)(\rho_c+1)}{4\rho_r\rho_c}
-\frac{3\rho_r\rho_c-\rho_r-\rho_c-1}{4}.$
Then $\rho_r,\rho_c\le 3/2$ and hence $\chi_\varepsilon\ge 1/144$. Consequently,
\[
\begin{aligned}
\frac1T\sum_{t=1}^T \mathbb E\|\nabla f(\mathbf{X}_t)\|_F
\le
\frac{f(\mathbf{X}_1)-f^{\star}+C_1(1+\ln T)+C_2}
{a\chi_\varepsilon \cdot T^{1/4}},
\end{aligned}
\]
where $C_{1}=\frac{a}{L}\bigl(2\sqrt{2}L^2a^2n+\sigma^2\bigr)
+aL\bigl(\chi_\varepsilon+\rho_r\rho_c\sqrt n\bigr)^2,C_{2}=\frac{a\sigma^2}{L}+\frac{3La^2n}{2}.$
See Appendix~\ref{proof:th_conv_rc} for details.
\end{proposition}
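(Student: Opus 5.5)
The plan follows the standard Muon nonconvex template: a descent lemma, an alignment lower bound for the RC-preconditioned polar direction, and a momentum-error recursion. The new ingredient is the alignment step, where the diagonal scalings must be controlled using $\varepsilon$ and $G_\infty$.

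\textbf{Step 1: bounded scalings.} Since $\mathbf M_t$ is a convex combination of past stochastic gradients, $\|\mathbf M_t\|_\infty\le G_\infty$. Each row squared norm is therefore at most $nG_\infty^2$ and each column squared norm at most $mG_\infty^2$. Hence the entries of $\mathbf D_{r,t}$ lie in $[\varepsilon,\varepsilon\rho_r^2]$ and those of $\mathbf D_{c,t}$ in $[\varepsilon,\varepsilon\rho_c^2]$. Writing $\mathbf P_t:=\sqrt{\varepsilon}\mathbf D_{r,t}^{-1/2}$ and $\mathbf Q_t:=\sqrt{\varepsilon}\mathbf D_{c,t}^{-1/2}$, both are diagonal with spectra in $[1/\rho_r,1]$ and $[1/\rho_c,1]$ respectively. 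Because $\mathrm{Orth}$ is scale-invariant, $\mathbf O_t=\mathrm{Orth}(\mathbf P_t\mathbf M_t\mathbf Q_t)$. The hypothesis $\varepsilon\ge\frac45G_\infty^2\max\{m,n\}$ gives $\rho_r,\rho_c\le\sqrt{1+5/4}=3/2$. Substituting $\rho=3/2$ into $\chi_\varepsilon$ and using monotonicity then yields $\chi_\varepsilon\ge 1/144$; this is a routine check.

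\textbf{Step 2: descent and alignment.} $L$-smoothness together with $\|\mathbf O_t\|_F^2\le n$ gives
\[
f(\mathbf X_{t+1})\le f(\mathbf X_t)-a\eta_t\langle\nabla f(\mathbf X_t),\mathbf O_t\rangle+\tfrac{L}{2}a^2\eta_t^2 n .
\]
Write $\nabla f(\mathbf X_t)=\mathbf M_t+\mathbf E_t$. The error part is bounded by $|\langle \mathbf E_t,\mathbf O_t\rangle|\le\sqrt n\|\mathbf E_t\|_F$. The main obstacle is the lower bound $\langle\mathbf M_t,\mathrm{Orth}(\mathbf P\mathbf M\mathbf Q)\rangle\ge\chi_\varepsilon\|\mathbf M_t\|_F$ (or a similar bound, possibly up to a $\|\mathbf E_t\|$ correction). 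To prove it, I will split $\mathbf P=\bar p\mathbf I+\Delta_P$ and $\mathbf Q=\bar q\mathbf I+\Delta_Q$ around midpoints, with $\|\Delta_P\|_2\le\frac{1-1/\rho_r}{2}$ and similarly for $\Delta_Q$. Then
\[
\langle\mathbf M,\mathrm{Orth}(\mathbf N)\rangle=\langle \mathbf P^{-1}\mathbf N\mathbf Q^{-1},\mathrm{Orth}(\mathbf N)\rangle ,\qquad \mathbf N:=\mathbf P\mathbf M\mathbf Q .
\]
Expanding $\mathbf P^{-1}$ and $\mathbf Q^{-1}$ around the midpoint identity, the leading term $\langle\mathbf N,\mathrm{Orth}(\mathbf N)\rangle=\|\mathbf N\|_*\ge\|\mathbf N\|_F\ge\|\mathbf M\|_F/(\rho_r\rho_c)$ gives the positive piece $\frac{(\rho_r+1)(\rho_c+1)}{4\rho_r\rho_c}$. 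The cross terms are bounded by Hölder in spectral/nuclear norms, producing the subtracted term $\frac{3\rho_r\rho_c-\rho_r-\rho_c-1}{4}$. I expect matching the exact constant to require choosing the expansion point carefully, and I would first try the midpoint of $[1,\rho]$ for $\mathbf P^{-1}$ and $\mathbf Q^{-1}$. If $\mathbf M$ is replaced by $\nabla f$ in the alignment, the constant $(\chi_\varepsilon+\rho_r\rho_c\sqrt n)^2$ in $C_1$ suggests converting cross terms via Young's inequality, $\sqrt{n}\|\mathbf E_t\|\le\frac{\eta}{?}+\ldots$, giving terms $aL\eta_t(\chi_\varepsilon+\rho_r\rho_c\sqrt n)^2+\frac{a}{L\eta_t}h_t\|\mathbf E_t\|^2$-type pieces.

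\textbf{Step 3: momentum error and summation.} The error satisfies
\[
\mathbf E_{t+1}=\beta_{t+1}\mathbf E_t+\beta_{t+1}\bigl(\nabla f(\mathbf X_{t+1})-\nabla f(\mathbf X_t)\bigr)+(1-\beta_{t+1})\bigl(\nabla f(\mathbf X_{t+1})-\mathbf G_{t+1}\bigr).
\]
This gives
\[
\mathbb E\|\mathbf E_{t+1}\|_F^2\le(1-h_{t+1})\mathbb E\|\mathbf E_t\|_F^2+\frac{L^2a^2\eta_t^2n}{h_{t+1}}+h_{t+1}^2\sigma^2 .
\]
With $\eta_t=t^{-3/4}$ and $h_t=t^{-1/2}$ we have $\eta_t^2/h_t=t^{-1}$ and $h_t^2=t^{-1}$, so a potential of the form $\Phi_t=f(\mathbf X_t)+\frac{a}{L}\cdot c\,\eta_{t-1}h_t^{-1}\|\mathbf E_t\|^2$ telescopes. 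The horizon-free schedule yields harmonic sums $\sum 1/t\le 1+\ln T$, which produce $C_1(1+\ln T)$, while the initial terms (with $\mathbf E_1$ variance $\sigma^2$ and the $\frac{3La^2n}{2}$ piece) give $C_2$. Finally, since $\eta_t\ge T^{-3/4}$, I will sum $a\chi_\varepsilon\eta_t\mathbb E\|\nabla f\|_F$, lower-bound it by $a\chi_\varepsilon T^{-3/4}\sum_t\mathbb E\|\nabla f\|_F$, and divide by $T$ to obtain the stated $T^{-1/4}$ bound.
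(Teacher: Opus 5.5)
Your Steps 1--2 follow the paper's route. Expanding $\mathbf P^{-1},\mathbf Q^{-1}$ around $\frac{1+\rho_r}{2}\mathbf I$ and $\frac{1+\rho_c}{2}\mathbf I$ is exactly the paper's rescaling to $\widetilde{\mathbf A}_t,\widetilde{\mathbf B}_t$, and it reproduces $\chi_\varepsilon$. Your bound $|\langle\mathbf E_t,\mathbf O_t\rangle|\le\sqrt n\|\mathbf E_t\|_F$ is sharper than the paper's $\rho_r\rho_c\sqrt n\|\mathbf S_t\|_F$, so it only shrinks $C_1$.

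The step that fails as written is the potential in Step 3.
With weight $w_t=\frac{ac}{L}\eta_{t-1}h_t^{-1}\asymp t^{-1/4}$ (note also that $\eta_0$ is undefined), the recursion dissipates only $w_t-w_{t+1}(1-h_{t+1})=\frac{ac}{L}\eta_t+O(t^{-5/4})$ per step. So you can absorb at most about $\frac{ac}{L}\eta_t\|\mathbf E_t\|_F^2$. Apply Young's inequality to the linear term $aK\eta_t\|\mathbf E_t\|_F$, with $K:=\chi_\varepsilon+\sqrt n$, using that quadratic coefficient. It leaves a remainder $\frac{aLK^2}{4c}\eta_t$, and $\sum_{t\le T}\eta_t=\Theta(T^{1/4})$. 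After dividing by $a\chi_\varepsilon T^{1/4}$ this is an $O(1)$ term, so the $T^{-1/4}$ rate is lost. Tuning $c$ against $w_1\mathbb E\|\mathbf E_1\|_F^2$ gives only $T^{-1/8}$.

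The weight has to be constant, namely $\frac{a}{2L}$:
\begin{itemize}
\item Apply Young with parameter $h_{t+1}/L$, which produces $\frac{ah_{t+1}}{2L}\|\mathbf E_t\|_F^2+\frac{aLK^2}{2}\frac{\eta_t^2}{h_{t+1}}$.
\item Read the recursion as $h_{t+1}\mathbb E\|\mathbf E_t\|_F^2\le\mathbb E\|\mathbf E_t\|_F^2-\mathbb E\|\mathbf E_{t+1}\|_F^2+A_{t+1}$ with $A_{t+1}=h_{t+1}^2(2\sqrt2L^2a^2n+\sigma^2)$. The paper does this via Lemma~\ref{conv_lemma:a2}.
\item This telescopes to $\sum_{t\le T}h_{t+1}\mathbb E\|\mathbf E_t\|_F^2\le 2\sigma^2+2(2\sqrt2L^2a^2n+\sigma^2)(1+\ln T)$.
\end{itemize}
Only then are both remainders of order $\eta_t^2/h_{t+1}\asymp h_{t+1}^2\asymp 1/t$, which is what your harmonic-sum remark actually needs.

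There is also a smaller slip in Step 2: do not pass to the Frobenius norm in the leading term. Your cross terms, bounded by spectral/nuclear H\"older, are multiples of $\|\mathbf N\|_*\le\|\mathbf M_t\|_*$. That quantity can exceed $\|\mathbf M_t\|_F$ by up to a factor $\sqrt n$. So a positive piece proportional to $\|\mathbf M_t\|_F$ minus a cross term proportional to $\|\mathbf M_t\|_*$ does not give $\chi_\varepsilon\|\mathbf M_t\|_F$.

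Keep the leading term in nuclear norm, as the paper does:
\begin{itemize}
\item Use $\sigma_i(\mathbf P\mathbf M\mathbf Q)\ge\sigma_{\min}(\mathbf P)\sigma_i(\mathbf M)\sigma_{\min}(\mathbf Q)$ to get $\|\mathbf N\|_*\ge\|\mathbf M_t\|_*/(\rho_r\rho_c)$.
\item Conclude $\langle\mathbf M_t,\mathbf O_t\rangle\ge\chi_\varepsilon\|\mathbf M_t\|_*$.
\item Only then use $\chi_\varepsilon>0$ and $\|\mathbf M_t\|_*\ge\|\mathbf M_t\|_F$.
\end{itemize}
Alternatively, collecting all four terms as multiples of $\|\mathbf N\|_*$ before lower-bounding also works. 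It gives the larger constant $\frac{1+\rho_r+\rho_c-\rho_r\rho_c}{2\rho_r\rho_c}\ge\chi_\varepsilon$.
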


\begin{theorem}
\label{th_conv_r}
Let $\eta_t=t^{-3/4}$, $\beta_t=1-t^{-1/2}$,
$h_t:=1-\beta_t=t^{-1/2}$, $\varepsilon=0$ and $a:=0.2\sqrt{\max\{m,n\}}$.
Assume Assumptions~\ref{ass:1}, \ref{ass:2}, and \ref{ass:3}.
For Algorithm \ref{alg:muoneq}, consider the exact-polar R variant in \eqref{eq:diagpre_family}, with Nesterov momentum disabled and
$0\le \rho<\frac{a}{\sqrt m}, 0\le \lambda_t \le \frac{\rho}{\|\mathbf{X}_1\|_F+4a\sqrt n}\,t^{-1/4}$. Then, for every $T\ge 1$,
\[
\frac1T\sum_{t=1}^T \mathbb E\|\nabla f(\mathbf{X}_t)\|_F
\le \frac{f(\mathbf{X}_1)-f^\star + C_{1,\rho}^{\mathrm{ep}}(1+\ln T) +
C_{2,\rho}^{\mathrm{ep}}}{\bigl(a/\sqrt m-\rho\bigr)\cdot T^{1/4}},
\]
where $C_{1,\rho}^{\mathrm{ep}} = \frac{a}{L}
\Bigl(2\sqrt2L^2(a\sqrt n+\rho)^2+\sigma^2\Bigr) +
aL\Bigl(\frac1{\sqrt m}+\sqrt n\Bigr)^2,C_{2,\rho}^{\mathrm{ep}} = \frac{a\sigma^2}{L} + \frac{3L}{2}(a\sqrt n+\rho)^2.$
\end{theorem}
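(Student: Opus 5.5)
The plan is a standard descent-lemma argument for momentum methods with a non-Euclidean update direction, adapted to the row-normalized polar factor and the decoupled weight decay. Write $\mathbf O_t=\mathrm{Orth}(\mathbf D_{r,t}^{-1/2}\mathbf M_t)$ with $\varepsilon=0$.

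\textbf{Step 1: structural facts.} Since $\mathbf D_{r,t}^{-1/2}$ is a positive diagonal left factor, $\mathbf D_{r,t}^{-1/2}\mathbf M_t$ and $\mathbf M_t$ have the same row space. Hence $\mathbf O_t$ has orthonormal singular structure and satisfies $\|\mathbf O_t\|_F\le\sqrt n$ and $\|\mathbf O_t\|_2\le 1$.

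\textbf{Step 2: alignment inequality.} The key claim is
\[
\langle \mathbf M_t,\mathbf O_t\rangle_F\ \ge\ \frac{1}{\sqrt m}\,\|\mathbf M_t\|_F .
\]
Write $\mathbf N=\mathbf D^{-1/2}\mathbf M$, so $\mathbf M=\mathbf D^{1/2}\mathbf N$. Then $\langle \mathbf M,\mathrm{Orth}(\mathbf N)\rangle=\langle \mathbf D^{1/2},\mathrm{Orth}(\mathbf N)\mathbf N^\top\rangle$. The matrix $\mathrm{Orth}(\mathbf N)\mathbf N^\top=(\mathbf N\mathbf N^\top)^{1/2}$ is PSD with diagonal entries at least $\|\mathbf N_{i,:}\|_2^2/\|\mathbf N\|_2\ge 1/\|\mathbf N\|_2$, because the rows of $\mathbf N$ have unit norm and $\|\mathbf N\|_2\le\|\mathbf N\|_F=\sqrt m$. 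Since $\mathbf D^{1/2}$ is diagonal, the pairing equals $\sum_i\|\mathbf M_{i,:}\|_2\,[(\mathbf N\mathbf N^\top)^{1/2}]_{ii}\ge \|\mathbf M\|_{2,1}/\sqrt m\ge\|\mathbf M\|_F/\sqrt m$. Zero rows are dropped (or handled by pseudo-inverse), and this is the place where $\varepsilon=0$ is needed.

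This gives $\langle \nabla f,\mathbf O_t\rangle\ge \|\nabla f\|_F/\sqrt m-2\sqrt n\,\|\mathbf M_t-\nabla f(\mathbf X_t)\|_F$, using $\|\mathbf O_t\|_F\le\sqrt n$ and the same lower bound applied at $\mathbf M_t$.

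\textbf{Step 3: descent lemma with weight decay.} Write the step as $\mathbf X_{t+1}-\mathbf X_t=-\eta_t(a\mathbf O_t+\lambda_t\mathbf X_t)$. First bound $\|\mathbf X_t\|_F$ inductively by $\|\mathbf X_1\|_F+4a\sqrt n\,t^{1/4}$ (sum of $a\eta_s\sqrt n\lesssim 4a\sqrt n\,t^{1/4}$, with contraction from decay). This is presumably why $\lambda_t$ carries the factor $t^{-1/4}/(\|\mathbf X_1\|_F+4a\sqrt n)$: it gives $\lambda_t\|\mathbf X_t\|_F\le\rho$. The weight-decay term then costs at most $\eta_t\rho\|\nabla f\|_F$ in the linear term, and the quadratic term is bounded by $\frac L2\eta_t^2(a\sqrt n+\rho)^2$. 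The result is
\[
f(\mathbf X_{t+1})\le f(\mathbf X_t)-\eta_t\bigl(a/\sqrt m-\rho\bigr)\|\nabla f\|_F+2a\sqrt n\,\eta_t\|\mathbf E_t\|_F+\tfrac L2\eta_t^2(a\sqrt n+\rho)^2,
\]
where $\mathbf E_t=\mathbf M_t-\nabla f(\mathbf X_t)$.

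\textbf{Step 4: momentum error.} Use the recursion
\[
\mathbf E_t=\beta_t\mathbf E_{t-1}+\beta_t\bigl(\nabla f(\mathbf X_{t-1})-\nabla f(\mathbf X_t)\bigr)+h_t(\mathbf G_t-\nabla f(\mathbf X_t)),
\]
square it, and take expectations. This yields $\mathbb E\|\mathbf E_t\|^2\le(1-h_t)\mathbb E\|\mathbf E_{t-1}\|^2+\frac{2}{h_t}L^2\eta_{t-1}^2(a\sqrt n+\rho)^2+h_t^2\sigma^2$. With $h_t=t^{-1/2}$ and $\eta_t=t^{-3/4}$, this gives $\mathbb E\|\mathbf E_t\|\lesssim t^{-1/4}$. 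Convert $\eta_t\|\mathbf E_t\|$ via Young's inequality into $\frac{a}{L}(\cdots)\eta_t h_t$-type terms, which explains the $\frac aL(2\sqrt2L^2(a\sqrt n+\rho)^2+\sigma^2)$ structure; a Lyapunov potential $f+c\,\eta_t h_t^{-1}\|\mathbf E_t\|^2$ can absorb the recursion.

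\textbf{Step 5: telescoping.} Sum over $t$. Use $\sum\eta_t^2,\ \sum\eta_t h_t=\sum t^{-5/4}$ bounded (giving $C_2$) and sums of order $t^{-1}$ giving $\ln T$. Use $\eta_t\ge T^{-3/4}$ on the left to divide by $T\,T^{-3/4}$; this yields the $T^{1/4}$ rate.

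\textbf{Main obstacle.} I expect the main obstacle to be the alignment bound in Step 2, where row normalization must not destroy descent. Getting the clean constant $1/\sqrt m$ requires the PSD-diagonal argument above. The second hard point is the weight-decay bookkeeping, specifically the horizon-free bound on $\|\mathbf X_t\|_F$.
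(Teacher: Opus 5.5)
Your skeleton matches the paper's. The alignment bound via the diagonal of $(\mathbf N\mathbf N^\top)^{1/2}$ is exactly Lemma~\ref{conv_r_lemma:a1}, and the envelope $\|\mathbf X_t\|_F\le\|\mathbf X_1\|_F+4a\sqrt n\,t^{1/4}$, the descent inequality, the momentum recursion and the final division by $T\cdot T^{-3/4}$ are all the paper's. The gap is in how you close Steps 4--5.

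The Lyapunov potential $f+c\,\eta_th_t^{-1}\|\mathbf E_t\|_F^2$ has a decaying weight $c\,t^{-1/4}$, so its per-step contraction gain is only about $c\,\eta_t\|\mathbf E_t\|_F^2$. Write $c_{m,n}:=\frac1{\sqrt m}+\sqrt n$. Absorbing the cross term $a c_{m,n}\eta_t\|\mathbf E_t\|_F$ by Young then leaves a residual $\frac{a^2c_{m,n}^2}{4c}\eta_t$. Its sum is of order $T^{1/4}/c$, which leaves a non-vanishing term after dividing by $T^{1/4}$. This weight is also the source of your ``$\eta_th_t$-type terms'', since $c\,\eta_{t+1}h_{t+1}^{-1}\cdot h_{t+1}^2\sigma^2=c\,\eta_{t+1}h_{t+1}\sigma^2$.

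The weight on $\|\mathbf E_t\|_F^2$ must be of constant order. The paper applies Young with parameter $h_{t+1}/L$:
\[
a c_{m,n}\eta_t\|\mathbf S_t\|_F\le\frac{ah_{t+1}}{2L}\|\mathbf S_t\|_F^2+\frac{aL c_{m,n}^2}{2}\frac{\eta_t^2}{h_{t+1}}.
\]
It then bounds $\sum_t h_{t+1}\mathbb E\|\mathbf S_t\|_F^2$ by telescoping the recursion through Lemma~\ref{conv_lemma:a2}. This needs $\mathbb E\|\mathbf S_1\|_F^2\le\sigma^2$, which holds because $\beta_1=0$ gives $\mathbf M_1=\mathbf G_1$; your proposal does not address this initial condition. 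Your other option also yields the rate: $\mathbb E\|\mathbf E_t\|_F\lesssim t^{-1/4}$ via Jensen and an induction $\mathbb E\|\mathbf E_t\|_F^2\le C'h_t$, which needs the same initial bound.

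Even with a working closure, your bookkeeping does not produce the stated constants.
\begin{itemize}
\item $C_{2,\rho}^{\mathrm{ep}}$ is not a $\sum_t\eta_th_t$ term. The piece $\frac{a\sigma^2}{L}=\frac{a}{2L}\cdot2\sigma^2$ is the initial-error contribution, and $\frac{3L}{2}(a\sqrt n+\rho)^2$ comes from $\sum_t\eta_t^2\le3$.
\item $C_{1,\rho}^{\mathrm{ep}}$ collects the two logarithmic sums $\sum_t h_{t+1}^2\le1+\ln T$ and $\sum_t\eta_t^2/h_{t+1}\le2(1+\ln T)$.
\item You must keep the coefficient $c_{m,n}$ rather than loosening it to $2\sqrt n$. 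Otherwise $aLc_{m,n}^2$ becomes $4aLn$.
\item In the momentum recursion, take the Young parameter $h_{t+1}/\beta_{t+1}$. The drift coefficient is then $\beta_{t+1}^2/h_{t+1}\le1/h_{t+1}$, rather than your $2/h_t$, which would double the $2\sqrt2$ in $C_{1,\rho}^{\mathrm{ep}}$.
\end{itemize}
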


\begin{corollary}
\label{cor:conv_r_ns}
Under the assumptions of Theorem~\ref{th_conv_r}, replace the exact-polar step by the finite-step NS5 step $\mathbf O_t:=\mathrm{NS5}(\hat{\mathbf M}_t),$
where $\mathrm{NS5}$ is the five-step pre-scaled Newton--Schulz map in
Lemma~\ref{lemma:ns5_traj_error}. For the fixed horizon $T$, we define 
\[
\varepsilon_{\mathrm{ns}}:=\varepsilon_{\mathrm{ns},T}=\max_{1\le t\le T}
\bigl\|\mathrm{NS5}(\hat{\mathbf M}_t)-\operatorname{Orth}(\hat{\mathbf M}_t)
\bigr\|_2.
\]
By Lemma~\ref{lemma:ns5_traj_error}, $\varepsilon_{\mathrm{ns}}<1$. 
Replace the decoupled weight decay condition by
$0\le \rho<\frac{a(1-\varepsilon_{\mathrm{ns}})}{\sqrt m},
0\le \lambda_t \le\frac{\rho}{\|\mathbf{X}_1\|_F+4a(1+\varepsilon_{\mathrm{ns}})\sqrt n}\,t^{-1/4}.
$
Then, for every $T\ge 1$,
\[
\frac1T\sum_{t=1}^T \mathbb E\|\nabla f(\mathbf{X}_t)\|_F
\le
\frac{f(\mathbf{X}_1)-f^\star + C_{1,\rho}^{\mathrm{ns}}(1+\ln T) + C_{2,\rho}^{\mathrm{ns}}}{\bigl(a(1-\varepsilon_{\mathrm{ns}})/\sqrt m-\rho\bigr)\cdot T^{1/4}},
\]
where \scalebox{0.95}{$C_{1,\rho}^{\mathrm{ns}} =
\frac{a}{L} \Bigl( 2\sqrt2L^2 \bigl(a(1+\varepsilon_{\mathrm{ns}})\sqrt n+\rho\bigr)^2 +\sigma^2 \Bigr) + aL \Bigl( \frac{1-\varepsilon_{\mathrm{ns}}}{\sqrt m} + (1+\varepsilon_{\mathrm{ns}})\sqrt n \Bigr)^2$}, $C_{2,\rho}^{\mathrm{ns}} =
\frac{a\sigma^2}{L} + \frac{3L}{2} \bigl(a(1+\varepsilon_{\mathrm{ns}})\sqrt n+\rho\bigr)^2$. See Appendix~\ref{proof:th_conv_r} for details.
\end{corollary}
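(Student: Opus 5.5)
We would prove Corollary~\ref{cor:conv_r_ns} by rerunning the argument of Theorem~\ref{th_conv_r}. The exact polar factor $\operatorname{Orth}(\hat{\mathbf M}_t)$ is replaced by $\mathbf O_t=\operatorname{Orth}(\hat{\mathbf M}_t)+\mathbf E_t$, where $\|\mathbf E_t\|_2\le\varepsilon_{\mathrm{ns}}<1$ by Lemma~\ref{lemma:ns5_traj_error}. We then track exactly where the two properties of the exact polar step enter. They enter in two places: the upper bound on the update size, $\|\mathbf O_t\|_F\le\sqrt n$ (in the exact case), and the descent (alignment) lower bound $\langle\hat{\mathbf M}_t,\mathbf O_t\rangle\ge\|\hat{\mathbf M}_t\|_*$. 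Everything else in the proof carries over verbatim.

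\textbf{Step 1: size bounds.} We have $\|\mathbf O_t\|_2\le 1+\varepsilon_{\mathrm{ns}}$, so $\|\mathbf O_t\|_F\le(1+\varepsilon_{\mathrm{ns}})\sqrt n$. Unrolling $\mathbf X_{t+1}=(1-\lambda_t\eta_t)\mathbf X_t-a\eta_t\mathbf O_t$ as in the exact-polar proof gives a uniform bound of the form $\|\mathbf X_t\|_F\le\|\mathbf X_1\|_F+4a(1+\varepsilon_{\mathrm{ns}})\sqrt n$. Here the factor $4$ comes from $\sum_t\eta_t\lambda_t$-type sums with $\eta_t=t^{-3/4}$ and the decay of $\lambda_t$. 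Combined with the new condition on $\lambda_t$, this yields $\lambda_t\|\mathbf X_t\|_F\le\rho\,t^{-1/4}$. Hence the total per-step displacement satisfies $\|\mathbf X_{t+1}-\mathbf X_t\|_F\le\eta_t\bigl(a(1+\varepsilon_{\mathrm{ns}})\sqrt n+\rho\bigr)$. This substitution of $a\sqrt n+\rho\mapsto a(1+\varepsilon_{\mathrm{ns}})\sqrt n+\rho$ produces the corresponding terms in $C^{\mathrm{ns}}_{1,\rho}$ and $C^{\mathrm{ns}}_{2,\rho}$: the $L$-smoothness quadratic term and the momentum-drift term $\|\nabla f(\mathbf X_t)-\nabla f(\mathbf X_{t-1})\|$ in the recursive bound for $\mathbb E\|\mathbf M_t-\nabla f(\mathbf X_t)\|_F$.

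\textbf{Step 2: alignment.} In the R variant with $\varepsilon=0$, $\hat{\mathbf M}_t=\mathbf D_{r,t}^{-1/2}\mathbf M_t$ has unit rows. The exact proof uses $\langle\mathbf M_t,\operatorname{Orth}(\hat{\mathbf M}_t)\rangle\ge\|\mathbf M_t\|_F/\sqrt m$. Obtaining this requires writing $\mathbf M_t=\mathbf D_{r,t}^{1/2}\hat{\mathbf M}_t$, using $\hat{\mathbf M}_t\operatorname{Orth}(\hat{\mathbf M}_t)^\top=(\hat{\mathbf M}_t\hat{\mathbf M}_t^\top)^{1/2}\succeq 0$ with diagonal-dominance-type bounds, and using that the row norms have $\ell_1$ sum at least $\|\mathbf M_t\|_F$. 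For the perturbed step we bound $|\langle\mathbf M_t,\mathbf E_t\rangle|$ by the corresponding quantity scaled by $\varepsilon_{\mathrm{ns}}$. The cleanest route is to apply the same trace inequality to $\hat{\mathbf M}_t\mathbf E_t^\top$ with $\|\mathbf E_t\|_2\le\varepsilon_{\mathrm{ns}}$ times the polar part, aiming for $\langle\mathbf M_t,\mathbf O_t\rangle\ge(1-\varepsilon_{\mathrm{ns}})\|\mathbf M_t\|_F/\sqrt m$. We expect this step to be the main obstacle: a crude $|\langle\mathbf M_t,\mathbf E_t\rangle|\le\varepsilon_{\mathrm{ns}}\|\mathbf M_t\|_*$ bound loses a factor $\sqrt n$ relative to the $1/\sqrt m$ lower bound. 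If the direct route fails, we would first try a multiplicative representation $\mathbf O_t=\operatorname{Orth}(\hat{\mathbf M}_t)(\mathbf I+\mathbf P_t)$ with $\mathbf P_t$ symmetric in the singular basis. This representation holds because NS5 acts as $\mathbf U\,\mathrm{diag}(\phi^{(5)}(s_i))\mathbf V^\top$ by Theorem~\ref{th_ns}. Then $\langle\hat{\mathbf M}_t,\mathbf O_t\rangle$ becomes a sum of $\sigma_i\phi^{(5)}(s_i)$ with $\phi^{(5)}(s_i)\ge 1-\varepsilon_{\mathrm{ns}}$ whenever $\sigma_i>0$, which gives the $(1-\varepsilon_{\mathrm{ns}})$ factor exactly, including after reweighting by $\mathbf D_{r,t}^{1/2}$.

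\textbf{Step 3: combining.} The descent inequality becomes
$f(\mathbf X_{t+1})\le f(\mathbf X_t)-a\eta_t\frac{1-\varepsilon_{\mathrm{ns}}}{\sqrt m}\|\nabla f(\mathbf X_t)\|_F+a\eta_t\bigl(\tfrac{1-\varepsilon_{\mathrm{ns}}}{\sqrt m}+(1+\varepsilon_{\mathrm{ns}})\sqrt n\bigr)\|\mathbf M_t-\nabla f(\mathbf X_t)\|_F+\eta_t\rho\|\nabla f(\mathbf X_t)\|_F+\tfrac{L}{2}\eta_t^2(\cdots)^2$.
Here the weight-decay inner product is absorbed using $\lambda_t\|\mathbf X_t\|_F\le\rho t^{-1/4}\le\rho$. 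Next we apply Young's inequality to the momentum-error term, giving the $aL(\cdot)^2$ contribution to $C_1^{\mathrm{ns}}$, and plug in the standard momentum-error recursion with $h_t=t^{-1/2}$. Summing over $t$, using $\eta_t\ge T^{-3/4}$, $\sum_t t^{-1}\le 1+\ln T$, and telescoping $f$ against $f^\star$, gives the stated bound with denominator $(a(1-\varepsilon_{\mathrm{ns}})/\sqrt m-\rho)T^{1/4}$, which is positive by the assumption on $\rho$.
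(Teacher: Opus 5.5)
Your proposal follows the paper's proof essentially step for step. The Step 2 fallback is exactly Lemma~\ref{lemma:ns5_vs_orth} plus Lemma~\ref{conv_r_lemma:a1_ns}: NS5 shares the singular vectors of $\hat{\mathbf M}_t$, so $\mathbf O_t\hat{\mathbf M}_t^\top=\mathbf U_t\widetilde{\boldsymbol\Sigma}_t\boldsymbol\Sigma_t\mathbf U_t^\top\succeq(1-\varepsilon_{\mathrm{ns}})\mathbf H_t$, and the trace against the PSD diagonal $\mathbf D_t$ preserves this. Steps 1 and 3 are Lemma~\ref{conv_r_lemma:wd_envelope} and the appendix summation argument.

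One slip in Step 1: the unrolled bound is not uniform in $t$, since $\lambda_t=0$ is allowed and the iterates can drift. The correct estimate is $\|\mathbf X_t\|_F\le\|\mathbf X_1\|_F+4a(1+\varepsilon_{\mathrm{ns}})\sqrt n\,t^{1/4}$, where the $4$ comes from $\sum_{s<t}s^{-3/4}\le 4t^{1/4}$, not from the decay of $\lambda_t$. The $t^{-1/4}$ in the $\lambda_t$ condition cancels this growth and gives $\lambda_t\|\mathbf X_t\|_F\le\rho$, not $\rho\, t^{-1/4}$. That weaker bound is all you use in Step 3, so the argument stands once this is corrected.
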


\begin{remark}
\label{rem:r_main}\label{rem:alg_align}
When \(\varepsilon=0\), \(\mathbf{D}_{r,t}^{-1/2}\) is interpreted as
\((\mathbf{D}_{r,t}^{1/2})^\dagger\), leaving zero rows unchanged. Theorem~\ref{th_conv_r} is the main exact-polar guarantee for the default R variant. Corollary~\ref{cor:conv_r_ns} transfers the same $T^{-1/4}$ dependence to the NS5 implementation, up to the explicit $(1\pm\varepsilon_{\mathrm{ns}})$ constants.
% Figure~\ref{fig:ns5_spec_norm} verifies $\varepsilon_{\mathrm{ns}}<1$ on the MuonEq (R) trajectory considered here. 
In Algorithm~\ref{alg:muoneq}, enabling Nesterov momentum simply amounts to replacing $\mathbf{M}_t$ with $\widetilde{\mathbf{M}}t$ in both $\mathbf{D}{r,t}$ and $\hat{\mathbf{M}}_t$; this should not change the stated $T^{-1/4}$ stationarity result, consistent with~\cite{chen2025muon,chang2025convergence}.
% In Algorithm~\ref{alg:muoneq}, enabling Nesterov simply replaces $\mathbf{M}_t$ by $\widetilde{\mathbf{M}}_t$ in $\mathbf{D}_{r,t}$ and $\hat{\mathbf{M}}_t$, and decoupled weight decay changes the step to $\mathbf{X}_{t+1}=(1-\lambda\eta_t)\mathbf{X}_t-a\eta_t\mathbf O_t$; these standard extensions do not change the stated $T^{-1/4}$ stationarity conclusion, as in~\cite{chang2025convergence,chen2025muon,sfyraki2025lions}. The pseudoinverse formulation above is exactly the $\varepsilon=0$ version of the practical R implementation.
\end{remark}

\begin{remark}
\label{rem:rc_to_r}\label{remark:conv}
Equation~\eqref{eq:error_decomp} separates finite-step orthogonalization error from preprocessing bias. RC is the stronger two-sided spectral corrector, but Proposition~\ref{th_conv_rc} requires the auxiliary condition $\chi_\varepsilon>0$. By contrast, the row-normalized map satisfies
\[
\left\langle \mathbf{M},\operatorname{Orth}(\mathbf P(\mathbf{M})\mathbf{M})\right\rangle
\ge \frac{1}{\sqrt m}\|\mathbf{M}\|_F,
\qquad
\mathbf P(\mathbf{M}):=\bigl(\operatorname{diag}(\operatorname{rowsum}(\mathbf{M}\odot\mathbf{M}))^{1/2}\bigr)^\dagger,
\]
so Theorem~\ref{th_conv_r} needs no $\chi_\varepsilon$-type condition, and Corollary~\ref{cor:conv_r_ns} only degrades this coefficient by the explicit factor $1-\varepsilon_{\mathrm{ns}}$. Hence the default R variant obeys
\[
\frac1T\sum_{t=1}^T \mathbb E\|\nabla f(\mathbf{X}_t)\|_F
=\mathcal O\left(\frac{1+\ln T}{T^{1/4}}\right),
\]
matching the standard Muon-type baseline up to logarithmic~\citep{chen2025muon,chang2025convergence,kim2026convergence,shulgin2025beyond} and constant factors, while C is retained as the embedding-aligned companion and studied empirically~\citep{Xu2026moga}. Improvements beyond this baseline typically require additional mechanisms such as variance reduction~\citep{yuan2024mars,chang2025convergence,liu2025mars,qian2025muon,Fang2018SPIDERNN,zhou2020stochastic,Cutkosky2019MomentumBasedVR,Huang2021SUPERADAMFA}.
\end{remark}
\section{Experiments}
\label{sec:exp}
\subsection{Main Results}
In this section, we evaluate the performance of the proposed {\method} optimizers on LLM pretraining. We use the same 8-node Ascend 910C cluster for all LLaMA experiments; 130M uses 32 NPUs, whereas 350M and 1B use 64 NPUs. Detailed experimental settings are provided in Appendix \ref{appendix:exp}. We will refer to the {\method} with Nesterov acceleration enabled as {\method}-Nes.

\begin{figure}[!htbp]
\centering
\subfloat[Training Loss]{%
\begin{minipage}[b]{0.32\textwidth}
\centering
\includegraphics[width=\linewidth]{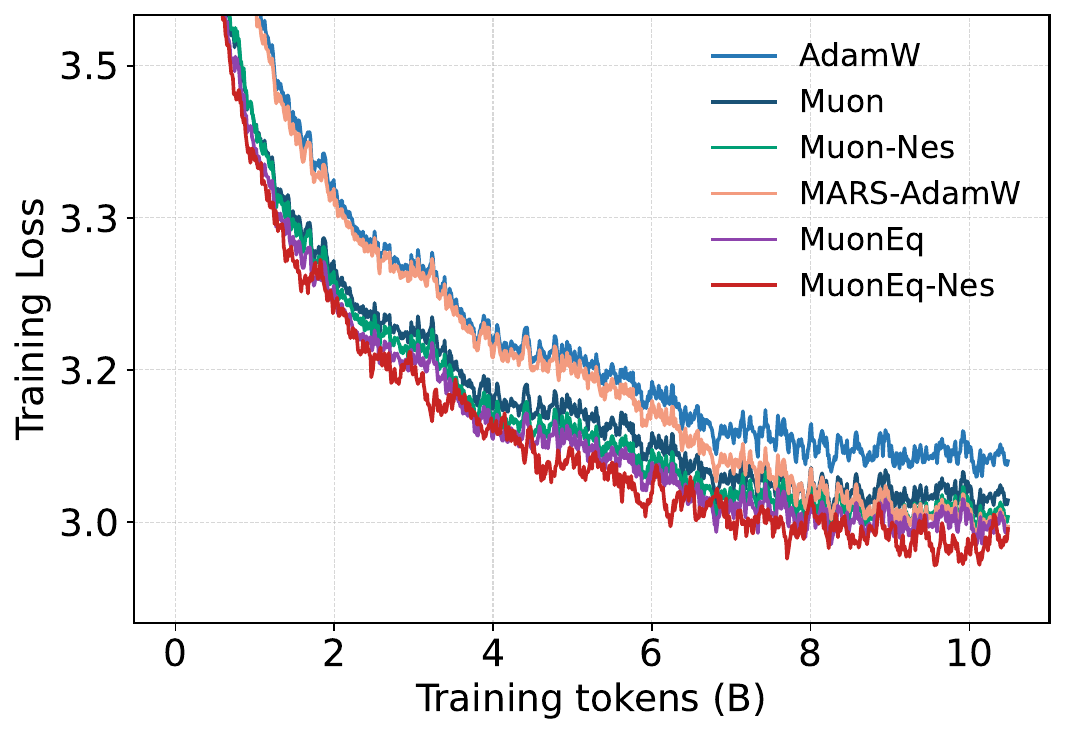}\par\vspace{-3pt}
\includegraphics[width=\linewidth]{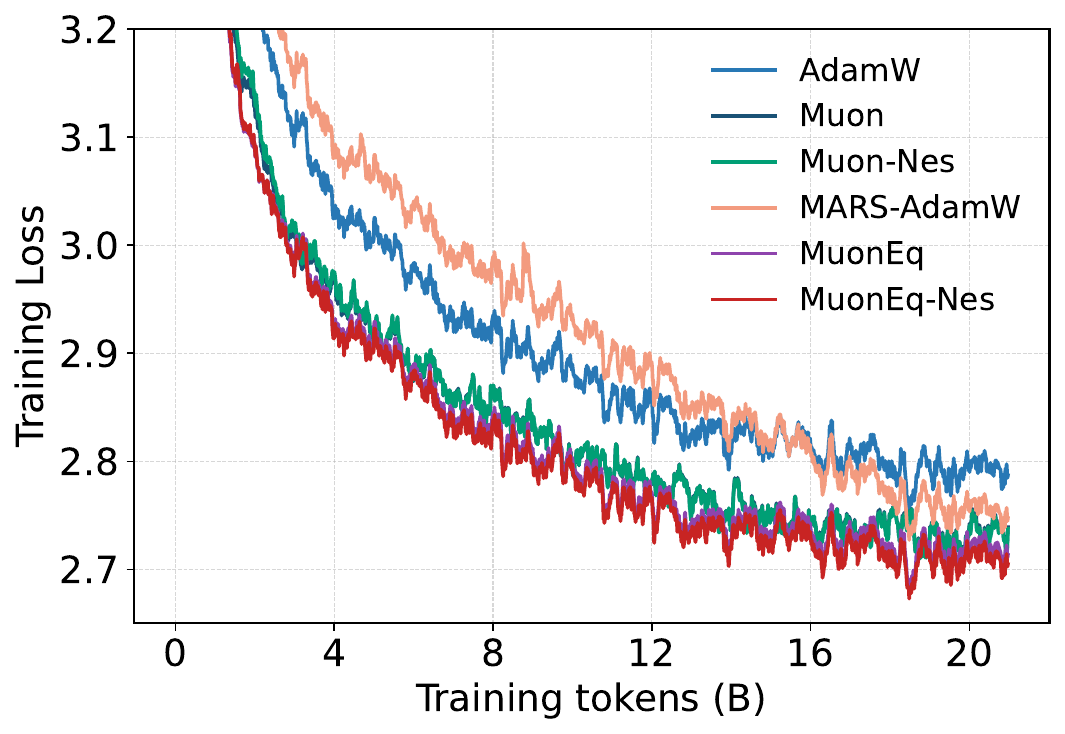}\par\vspace{-3pt}
\includegraphics[width=\linewidth]{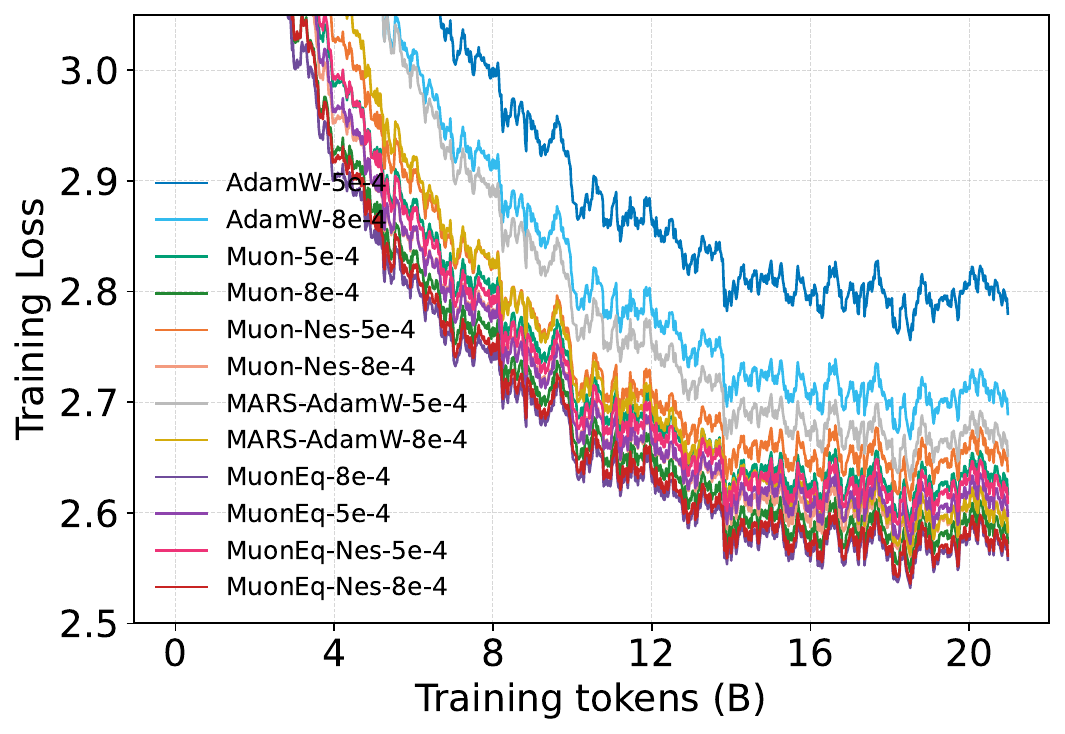}
\end{minipage}
\label{Fig:FG9}%
}
\hfill
\subfloat[Validation Loss]{%
\begin{minipage}[b]{0.32\textwidth}
\centering
\includegraphics[width=\linewidth]{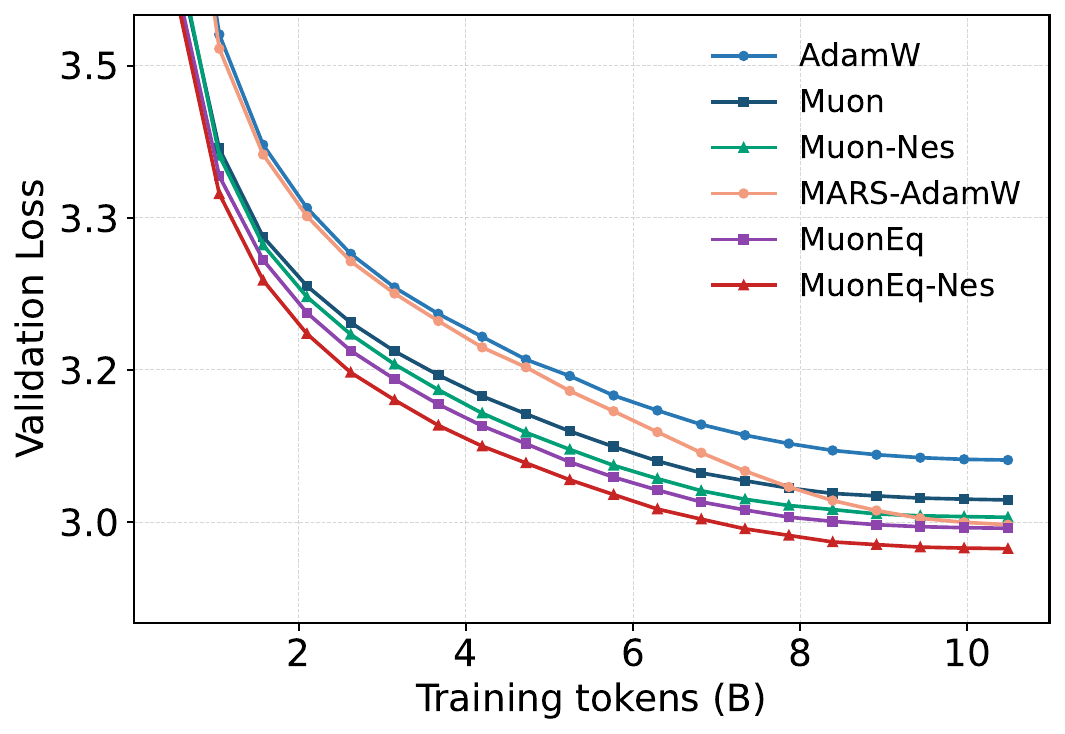}\par\vspace{-2pt}
\includegraphics[width=\linewidth]{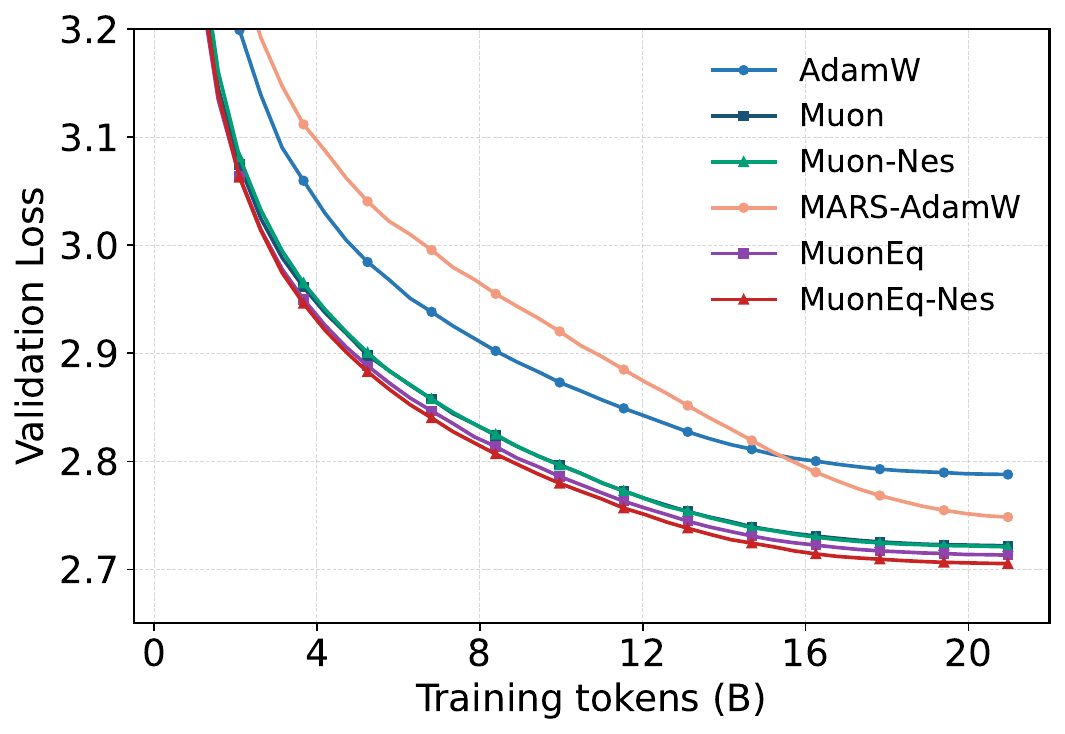}\par\vspace{-2pt}
\includegraphics[width=\linewidth]{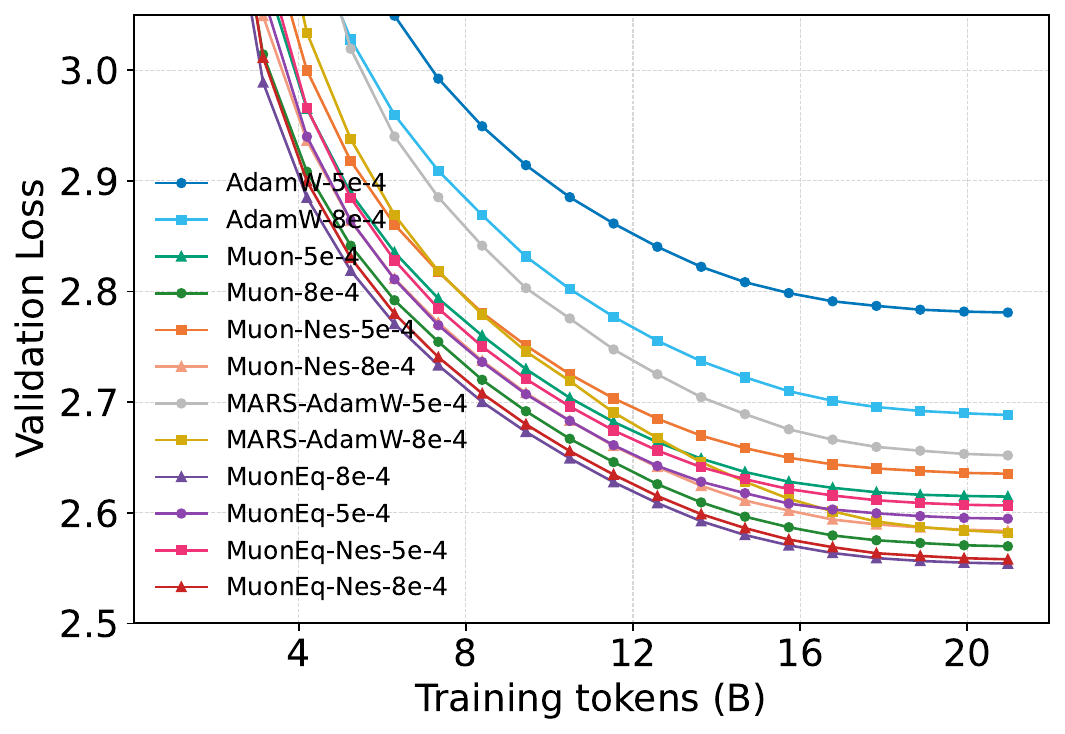}
\end{minipage}
\label{Fig:FG10}%
}
\hfill
\subfloat[Wall-clock Time]{%
\begin{minipage}[b]{0.32\textwidth}
\centering
\includegraphics[width=\linewidth]{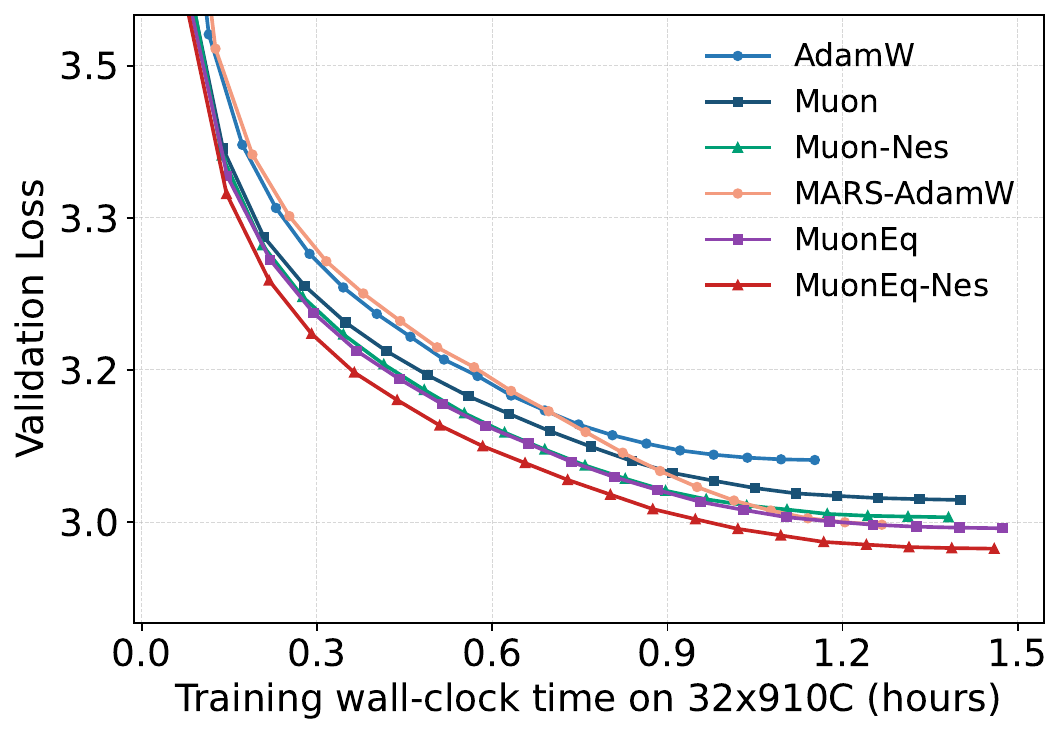}\par\vspace{-2pt}
\includegraphics[width=\linewidth]{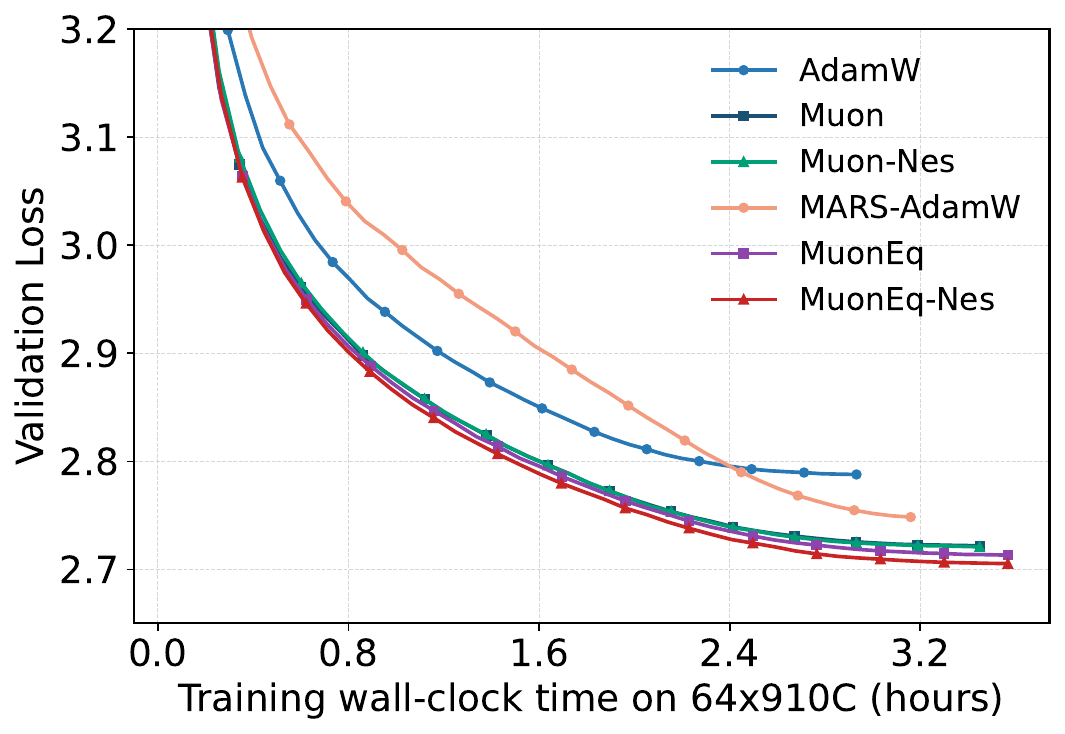}\par\vspace{-2pt}
\includegraphics[width=\linewidth]{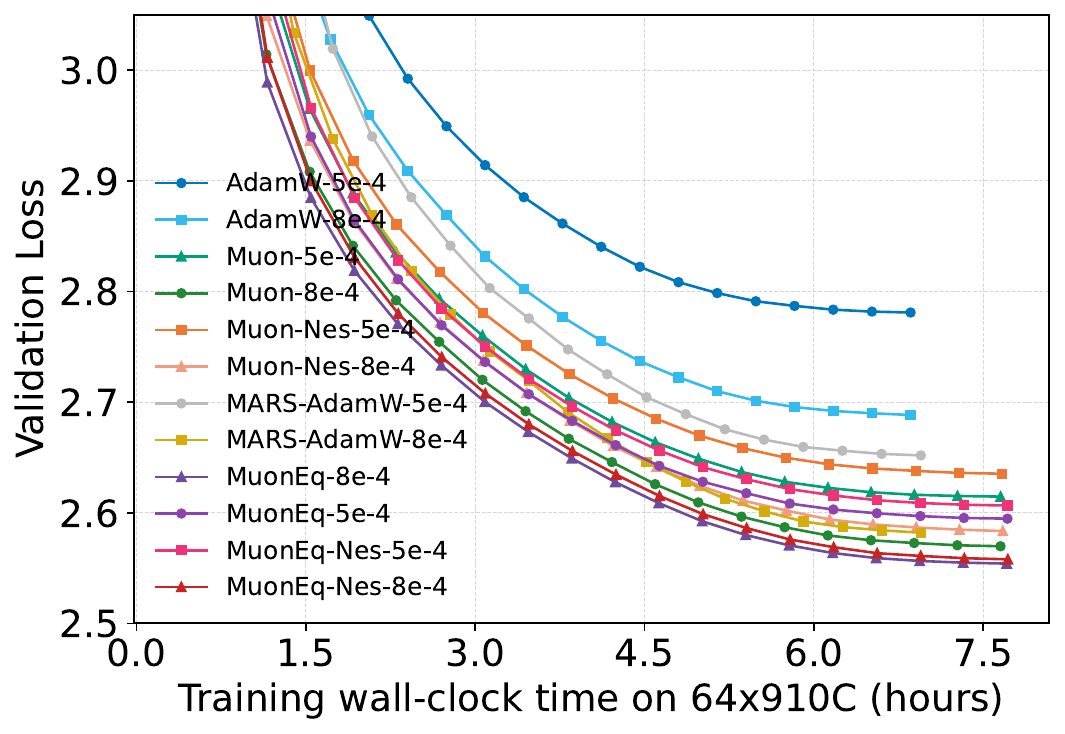}
\end{minipage}
\label{Fig:FG11}%
}

\caption{Comparison of LLaMA2 130M, 350M, and 1B from three perspectives. In each column, the top panel corresponds to the 130M model, the middle panel corresponds to the 350M model, and the bottom panel corresponds to the 1B model. From left to right, the panels show training loss versus tokens, validation loss versus tokens, and validation loss versus wall-clock time.}
\label{fig:train_val_loss_130m_350m_1b}
\end{figure}

$\blacktriangleright$ \textbf{LLaMA2 on C4 Dataset.}
We compare {\method}/{\method}-Nes with AdamW, MARS-AdamW, Muon, and Muon-Nes on LLaMA2-130M, 350M and 1B trained on C4. In the main comparison, 130M is trained on 10.5B tokens, while 350M and 1B are trained on 21.0B tokens. Learning rates are selected through shorter pilot runs using search grids $\{5\mathrm{e}{-4}, 1\mathrm{e}{-3}, 2\mathrm{e}{-3}, 3\mathrm{e}{-3}, 5\mathrm{e}{-3}, 8\mathrm{e}{-3}, 1\mathrm{e}{-2}\}$ for 130M, $\{5\mathrm{e}{-4}, 1\mathrm{e}{-3}, 1.5\mathrm{e}{-3}, 2\mathrm{e}{-3}, 3\mathrm{e}{-3}, 5\mathrm{e}{-3}\}$ for 350M, and $\{5\mathrm{e}{-4}, 8\mathrm{e}{-4}\}$ for 1B. Appendix~\ref{appendix:exp}. Figure~\ref{fig:train_val_loss_130m_350m_1b} shows training loss versus tokens, validation loss versus tokens, and validation loss versus wall-clock time for all three model sizes, while Table~\ref{table-compare} summarizes the best validation perplexity, peak memory usage (GB), and training time (s/step). Overall, {\method}/{\method}-Nes consistently outperforms Muon/Muon-Nes.

\begin{table}[!h]
\vspace{-5pt}
\caption{Best validation perplexity, peak memory usage, and wall-clock training time on LLaMA2/C4.}
\label{table-compare}
\begin{center}
\begin{small}
\resizebox{1.0\textwidth}{!}{%
\begin{tabular}{lccccccccc}
\toprule
\multicolumn{1}{c}{\multirow{2}{*}{\textbf{Optimizer}}}
& \multicolumn{3}{c}{\textbf{130M}} 
& \multicolumn{3}{c}{\textbf{350M}}
& \multicolumn{3}{c}{\textbf{1B}} \\
\cmidrule(lr){2-4} \cmidrule(lr){5-7} \cmidrule(lr){8-10}
& \makecell{Perplexity} 
& \makecell{Memory} 
& \makecell{Time}
& \makecell{Perplexity} 
& \makecell{Memory} 
& \makecell{Time} 
& \makecell{Perplexity} 
& \makecell{Memory} 
& \makecell{Time} \\
\midrule
AdamW~\cite{ilya2019adamw}  & 21.35 & 12.94 & 0.21 & 16.25 & 29.43 & 0.53 & 14.71 & 39.56 & 2.47 \\
MARS-AdamW~\cite{yuan2024mars}  & 20.03 & 12.99 & 0.23 & 15.62 & 29.47 & 0.57 & 13.22 & 39.57  & 2.50\\
Muon~\cite{jordan6muon,liu2025muon}  & 20.52 & 12.52 & 0.25 & 15.21 & 28.18 & 0.61 & 13.06 & 34.81 & 2.76\\
Muon-Nes~\cite{jordan6muon,liu2025muon,chang2025convergence}    & 20.18 & 12.52 & 0.25 & 15.19 & 28.18 & 0.61 & 13.24 & 34.81 & 2.76\\
\oursrow 
{\method}    & \textbf{19.96}\dec{0.56} & 12.52 & 0.26 & \textbf{15.07}\dec{0.14} & 28.18 & 0.63 & \textbf{12.86}\dec{0.20} & 34.81 & 2.78\\
\oursrow 
{\method}-Nes  & \textbf{19.56}\dec{0.62} & 12.52 & 0.26 & \textbf{14.96}\dec{0.23} & 28.18 & 0.63 & \textbf{12.91}\dec{0.33} & 34.81 & 2.78\\
\midrule
Training Tokens 
& \multicolumn{3}{c}{10.5B} 
& \multicolumn{3}{c}{21.0B}
& \multicolumn{3}{c}{21.0B} \\
\bottomrule
\end{tabular}
}
\end{small}
\end{center}
\vspace{-5pt}
\end{table}

\subsection{Ablation Study}
We ablate the three static {\method}-Nes variants, RC, R, and C, to isolate the effect of pre-orthogonalization geometry. All ablations run on 4 RTX Pro6000 (96GB) GPUs. Detailed experimental settings are provided in Appendix~\ref{app:ablation_details}.

Tables~\ref{tab:cifar10_resnet18_ablation_left} and~\ref{tab:cifar10_resnet18_ablation_right} show that, with results averaged over three random seeds, R is consistently the strongest static variant across both benchmarks, achieving higher test accuracy than Muon-Nes on CIFAR-10 with ResNet-18 and lower validation perplexity on FineWeb-10.5B tokens with GPT2-small.

\begin{figure}[!htbp]
    \centering
    \captionsetup{font=small}
    \setlength{\abovecaptionskip}{4pt}
    \setlength{\belowcaptionskip}{-2pt}
    
    \subfloat[Validation Perplexity\label{fig:muon-val}]{
        \includegraphics[width=0.24\textwidth]{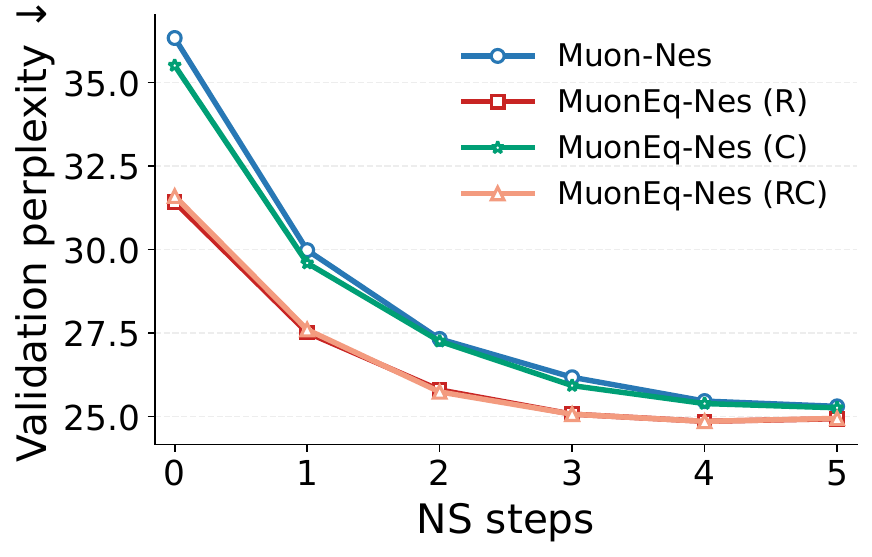}
    }
    \subfloat[Step Time\label{fig:muon-speed}]{
        \includegraphics[width=0.24\textwidth]{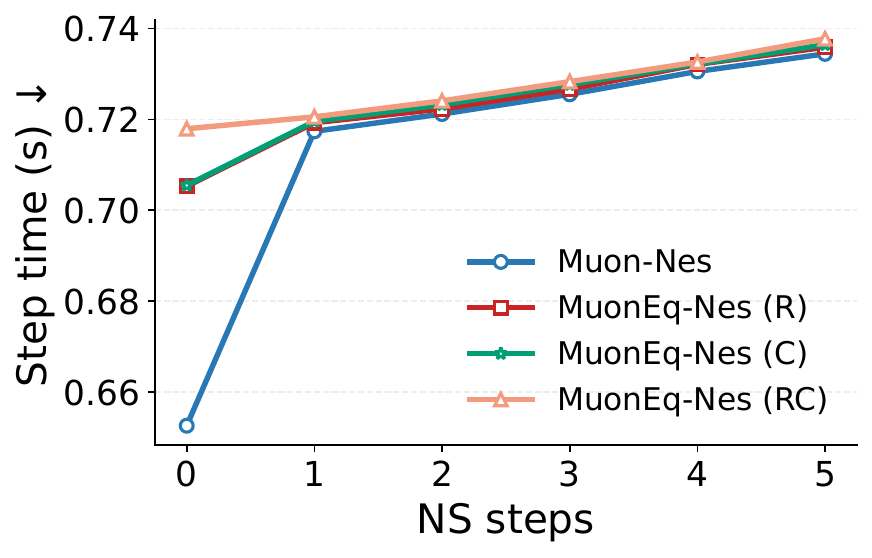}
    }
    \subfloat[Muon-Nes | K=5\label{fig:heatmap-muon-k5}]{
        \includegraphics[width=0.24\textwidth]{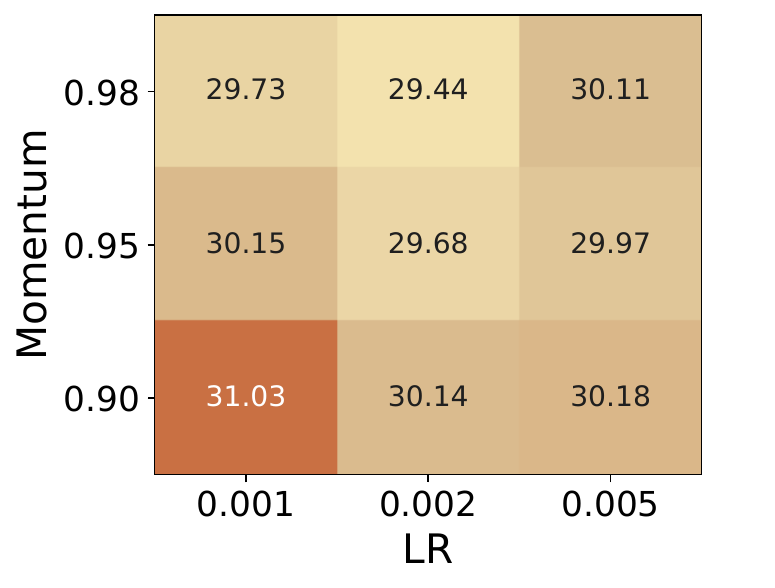}
    }
    \subfloat[MuonEq-Nes (R) | K=5\label{fig:heatmap-muoneq-k5}]{
        \includegraphics[width=0.24\textwidth]{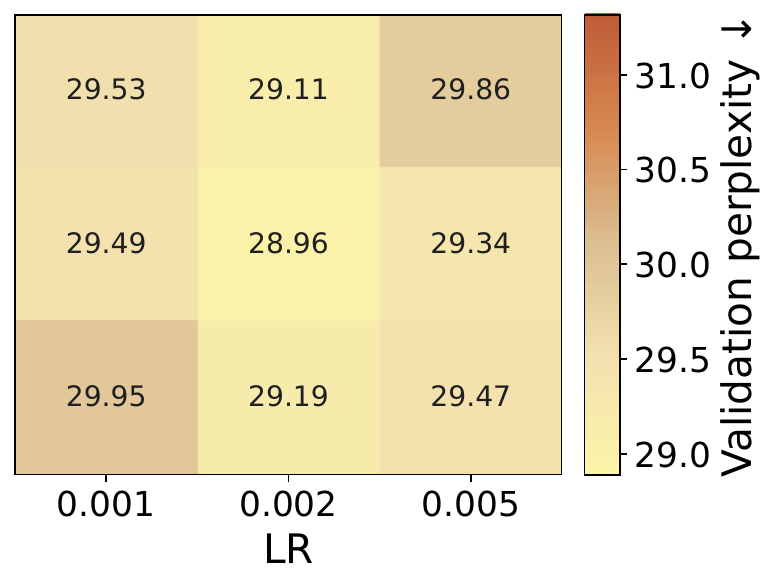}
    }
    \caption{(a,b) Validation perplexity and step time vs. Newton--Schulz iterations $K$ on GPT2-small/FineWeb (10.5B tokens); (c,d) validation perplexity heatmaps over learning rate and momentum at $K=5$ (2.6B tokens).}
    \label{fig:muon_compact}
\end{figure}

\begin{table*}[h]
\centering

\begin{minipage}[t]{0.48\textwidth}
\centering
\caption{Ablation on CIFAR-10 with ResNet-18.}
\label{tab:cifar10_resnet18_ablation_left}
\resizebox{0.92\textwidth}{!}{%
\begin{tabular}{lccc}
\toprule
\multirow{2}{*}{Method} & \multicolumn{2}{c}{Component} & \multirow{2}{*}{\makecell{Test Acc. (\%)}} \\
\cmidrule(lr){2-3}
& Row & Col & \\
\midrule
\multicolumn{4}{l}{\textit{Baseline Experiments}} \\
SGD      &            &            & 92.82$_{\pm0.04}$ \\
AdamW    &            &            & 93.36$_{\pm0.10}$ \\
Muon &            &            & 94.39$_{\pm0.04}$ \\
Muon-Nes &            &            & 94.44$_{\pm0.05}$ \\
\midrule
\multicolumn{4}{l}{\textit{Ablation Experiments}} \\
MuonEq-Nes (RC)   & \checkmark & \checkmark & 94.29$_{\pm0.10}$ \\
MuonEq-Nes (C) &            & \checkmark & 94.45$_{\pm0.02}$ \\
\oursrow 
MuonEq-Nes (R) & \checkmark &            & \textbf{94.57}$_{\pm0.07}$\inc{0.13} \\
\bottomrule
\end{tabular}%
}
\end{minipage}
\hfill
\begin{minipage}[t]{0.48\textwidth}
\centering
\caption{Ablation on FineWeb with GPT2-small.}
\label{tab:cifar10_resnet18_ablation_right}
\resizebox{0.9\textwidth}{!}{%
\begin{tabular}{lccc}
\toprule
\multirow{2}{*}{Method} & \multicolumn{2}{c}{Component} & \multirow{2}{*}{\makecell{Validation \\ Perplexity}} \\
\cmidrule(lr){2-3}
& Row & Col & \\
\midrule
\multicolumn{4}{l}{\textit{Baseline Experiments}} \\
AdamW    &            &            & 26.48$_{\pm0.12}$ \\
Muon-Nes &            &            & 25.23$_{\pm0.08}$ \\
AdaMuon &            &            & 24.99$_{\pm0.13}$ \\
Muon+ &            &            & 25.12$_{\pm0.08}$ \\
% FISMO &            &            & 25.27$_{\pm0.10}$ \\
Mousse &            &            & 25.16$_{\pm0.10}$ \\
\midrule
\multicolumn{4}{l}{\textit{Ablation Experiments}} \\
MuonEq-Nes (RC)   & \checkmark & \checkmark & 24.94$_{\pm0.04}$ \\
MuonEq-Nes (C) &            & \checkmark &  25.23$_{\pm0.06}$ \\
\oursrow
MuonEq-Nes (R) & \checkmark &            & \textbf{24.88}$_{\pm0.02}$\dec{0.35} \\
\bottomrule
\end{tabular}%
}
\end{minipage}
\end{table*}

Figure~\ref{fig:muon-val} and \ref{fig:muon-speed} show that the default R variant of {\method}-Nes is markedly less sensitive to the NS iteration count $K$ than Muon-Nes: its validation perplexity remains stable over a wider range of $K$, while step time stays comparable throughout the sweep. Complementary robustness evidence is given in Figure~\ref{fig:heatmap-muon-k5} and~\ref{fig:heatmap-muoneq-k5}, where {\method}-Nes (R) exhibits a broader low-perplexity region over learning-rate and momentum sweeps than Muon-Nes, especially at $K=5$. This pattern is consistent with Eq.~\eqref{eq:error_decomp} and Section~\ref{sec:analysis}. RC applies stronger two-sided spectral correction and can reduce finite-step NS error more aggressively, but at the cost of larger preconditioning bias. 
% For the hidden weights targeted by {\method}-Nes, the geometry is primarily row-sided, making R the best trade-off between spectral correction and geometric alignment, consistent with \citep{Xu2026moga}. 
For the hidden-weight settings we tested, R provides a favorable trade-off between spectral correction and preprocessing bias, consistent with \citep{Xu2026moga}. 
Comparisons with \citep{pethick2025training} and \citep{glentis2025minimalist} require caution, as row/column terms vary by layout and task; see Appendix~\ref{app:ablation_term_note}.

\section{Conclusion}
We improve the matrix geometry encountered by finite-step orthogonalization in Muon-style optimization without resorting to full whitening. {\method} reshapes the input to Newton--Schulz using lightweight pre-orthogonalization equilibration in three forms, RC, R, and C; R is used by default for hidden weights. Theoretically, these schemes serve as zeroth-order whitening surrogates: RC provides stronger two-sided correction, whereas R yields cleaner one-sided geometry and supports the main $\widetilde{\mathcal O}(T^{-1/4})$ guarantee. Empirically, R consistently outperforms Muon in LLaMA2 pretraining on C4 across all tested scales and budgets.

% In the unusual situation where you want a paper to appear in the
% references without citing it in the main text, use \nocite
\bibliographystyle{unsrt}
\bibliography{manuscript}

%%%%%%%%%%%%%%%%%%%%%%%%%%%%%%%%%%%%%%%%%%%%%%%%%%%%%%%%%%%%%%%%%%%%%%%%%%%%%%%
%%%%%%%%%%%%%%%%%%%%%%%%%%%%%%%%%%%%%%%%%%%%%%%%%%%%%%%%%%%%%%%%%%%%%%%%%%%%%%%
% APPENDIX
%%%%%%%%%%%%%%%%%%%%%%%%%%%%%%%%%%%%%%%%%%%%%%%%%%%%%%%%%%%%%%%%%%%%%%%%%%%%%%%
%%%%%%%%%%%%%%%%%%%%%%%%%%%%%%%%%%%%%%%%%%%%%%%%%%%%%%%%%%%%%%%%%%%%%%%%%%%%%%%
\newpage
\appendix

\onecolumn
\section*{\LARGE Appendix}
\section{Related Work}
\label{sec:rw}

{\method} is viewed as a lightweight pre-balanced Muon family rather than as a fully whitened optimizer. It inserts diagonal equilibration before Newton--Schulz orthogonalization and includes three forms: two-sided row/column normalization (RC), row normalization (R), and column normalization (C), with R as the default choice for the hidden matrix weights considered in this paper. 
% All three use only row/column squared-norm statistics and preserve the $\mathcal O(m+n)$ state profile.
All three compute row/column squared norms from the current momentum and add no persistent optimizer state beyond Muon.

\textbf{Muon and orthogonalized-update optimizers.} Muon~\cite{jordan6muon} introduced the core design of orthogonalizing matrix-valued momentum with a small fixed number of Newton--Schulz iterations. Subsequent work has expanded this idea into a broader family of orthogonalized-update optimizers, including post-orthogonalization normalization and rescaling methods such as Muon+, NorMuon, and AdaMuon~\cite{zhang2026muon+,li2025normuon,si2025adamuon}; adaptive stepsize variants such as AdaGO and NAMO~\cite{Zhang2025AdaGradMM,zhang2026namo}; distributed and communication-aware extensions such as Dion and MuLoCo~\cite{ahn2025dion,therien2025muloco}; reduced-state implementations~\cite{gupta2025_8bitmuon}; and fine-tuning-oriented variants~\cite{page2025muonall}. These methods mainly alter the orthogonalized update itself or the surrounding optimization machinery. By contrast, {\method} modifies the matrix fed into orthogonalization through lightweight pre-orthogonalization equilibration.

\textbf{Theory of Muon-style methods.} Recent theory interprets Muon as steepest descent under spectral-norm or nuclear-norm geometry~\cite{bernstein2024old,chen2025muon}, establishes stochastic and nonconvex convergence guarantees~\cite{shen2025convergence,chang2025convergence,nagashima2026improved,zhang2026convergence}, and analyzes practical ingredients such as weight decay, critical batch size, and inexact orthogonalization~\cite{Sato2025ConvergenceBA,kim2026convergence,shulgin2025beyond}. This literature is directly relevant to {\method} because we retain the same finite-step orthogonalization primitive as Muon and change only the geometry of its input. Our analysis further highlights the role of input spectrum in finite-step Newton--Schulz and clarifies how diagonal equilibration trades stronger spectral correction against preprocessing bias.

\textbf{Structured preconditioning, factorized statistics, and whitening-inspired methods.} A separate line of work studies matrix-aware preconditioning, including Shampoo~\cite{gupta2018shampoo}, K-FAC~\cite{martens2015optimizing,eschenhagen2023kronecker}, SOAP~\cite{vyas2024soap}, Sketchy~\cite{feinberg2023sketchy}, and ASGO~\cite{an2025asgo}. Adafactor~\cite{Shazeer2018AdafactorAL} is especially relevant from a systems perspective because it shows how row- and column-factorized second-moment statistics reduce optimizer state to $\mathcal O(m+n)$ for matrix-valued parameters. Recent Muon-adjacent whitening or preconditioned-polar methods, including FISMO and Mousse~\cite{Xu2026FISMOFM,Zhang2026MousseRT}, pursue a similar high-level goal: improve the geometry before imposing spectral constraints. The key difference is that these methods rely on richer matrix- or Kronecker-structured preconditioners, whereas {\method} restricts the intervention to diagonal row/column equilibration. In this sense, {\method} is closer to low-cost equilibration than to full whitening.

\textbf{Orthogonalization algorithms.} There is also a complementary numerical literature on faster and more accurate polar/orthogonalization routines, including improved Newton--Schulz variants and matrix-sign methods such as CANS and Polar Express~\cite{grishina2025accelerating,amsel2025polar}. This literature matters because Muon-style optimizers operate in the regime of approximate orthogonalization, where only a small number of iterations is affordable; {\method} is designed explicitly for this setting.
\section{Proofs of Theorem \ref{th_ns}}
\label{proof:th_ns}
\begin{proof}
Let $\alpha>0$. For $k=0$, the desired form follows directly from
\[
\mathbf{X}_0=\alpha^{-1}\mathbf{G} = \mathbf{U}\operatorname{diag}\Bigl(\frac{\sigma_1}{\alpha},\dots,\frac{\sigma_r}{\alpha}\Bigr)\mathbf{V}^\top.
\]
Assume that, for some $k\ge0$,
\[
\mathbf{X}_k=\mathbf{U}\operatorname{diag}\bigl(s_1^{(k)},\dots,s_r^{(k)}\bigr)\mathbf{V}^\top.
\]
Then
\[
\mathbf{X}_k\mathbf{X}_k^\top = \mathbf{U}\operatorname{diag}\bigl((s_1^{(k)})^2,\dots,(s_r^{(k)})^2\bigr)\mathbf{U}^\top.
\]
Inserting this expression into the iteration and using $\mathbf{U}^\top\mathbf{U}=\mathbf{I}_r$ gives
\[
\mathbf{X}_{k+1} = \mathbf{U}\operatorname{diag}\bigl(\phi(s_1^{(k)}),\dots,\phi(s_r^{(k)})\bigr)\mathbf{V}^\top.
\]
By induction,
\[
\mathbf{X}_k = \mathbf{U}\operatorname{diag}\bigl(s_1^{(k)},\dots,s_r^{(k)}\bigr)\mathbf{V}^\top,
\qquad s_i^{(k+1)}=\phi\left(s_i^{(k)}\right).
\]
We also have
\[
\operatorname{Orth}(\mathbf{G})=\mathbf{U}\mathbf{V}^\top =\mathbf{U} \mathbf{I}_r\mathbf{V}^\top,
\quad \mathbf{X}_k-\operatorname{Orth}(\mathbf{G}) =\mathbf{U}\Bigl(\operatorname{diag}(s_1^{(k)},\dots,s_r^{(k)})-\mathbf{I}_r\Bigr)\mathbf{V}^\top.
\]
Left and right multiplication by matrices with orthonormal columns preserves the Frobenius norm, so
\[
\bigl\|\mathbf{X}_k-\operatorname{Orth}(\mathbf{G})\bigr\|_F^2 = \sum_{i=1}^r (s_i^{(k)}-1)^2.
\]
This proves the exact error representation.

We now establish the lower bound. Recall that
\[
q(t):=a+bt+ct^2.
\]
By assumption,
\[0<q(t)\le a, \qquad t\in[0,1].\]
Fix $i\in[r]$ and $k\ge0$. If $a^k\frac{\sigma_i}{\alpha}\ge1,$ then
\[
\bigl|1-s_i^{(k)}\bigr|\ge0=\left(1-a^k\frac{\sigma_i}{\alpha}\right)_+.
\]
Now suppose that $a^k\sigma_i/\alpha<1$. We claim that, for every $0\le j\le k$,
\[
0<s_i^{(j)}\le a^j\frac{\sigma_i}{\alpha}<1.
\]
The statement is immediate for $j=0$. Suppose it holds for some $j<k$. Then $s_i^{(j)}\in(0,1)$, and hence $(s_i^{(j)})^2\in(0,1)$. Therefore,
\[
0<q\bigl((s_i^{(j)})^2\bigr)\le a.
\]
It follows that
\[
0<s_i^{(j+1)}=s_i^{(j)}q \bigl((s_i^{(j)})^2\bigr) \le a s_i^{(j)}\le a^{j+1}\frac{\sigma_i}{\alpha}.
\]
Because $j+1\le k$,
\[
a^{j+1}\frac{\sigma_i}{\alpha}\le a^k\frac{\sigma_i}{\alpha}<1.
\]
This proves the claim. In particular,
\[
0<s_i^{(k)}\le a^k\frac{\sigma_i}{\alpha}<1,
\]
and thus
\[
|1-s_i^{(k)}| =1-s_i^{(k)} \ge 1-a^k\frac{\sigma_i}{\alpha} = \Bigl(1-a^k\frac{\sigma_i}{\alpha}\Bigr)_+.
\]
Combining the two cases gives, for every $i$,
\[
|1-s_i^{(k)}| \ge \Bigl(1-a^k\frac{\sigma_i}{\alpha}\Bigr)_+.
\]
Finally, set $\alpha = \|\mathbf{G}\|_F$. Squaring both sides, summing over $i$, and using the exact error representation together with
\[
\frac{\sigma_i}{\|\mathbf{G}\|_F} =\frac{\sigma_i/\sigma_1}{\|\mathbf{G}\|_F/\sigma_1} = \frac{1}{\kappa_i\sqrt{\operatorname{sr}(\mathbf{G})}},
\]
we obtain
\[
\frac{1}{\sqrt r}\bigl\|\mathbf{X}_k-\operatorname{Orth}(\mathbf{G})\bigr\|_F \ge \frac{1}{\sqrt r}\left(\sum_{i=1}^r\left(1-\frac{a^k}{\kappa_i\sqrt{\operatorname{sr}(\mathbf{G})}}\right)_+^2\right)^{1/2}.
\]
Moreover,
\[
\tau_i:=\log_a\frac{\|\mathbf{G}\|_F}{\sigma_i}=\log_a\left(\kappa_i\sqrt{\operatorname{sr}(\mathbf{G})}\right),
\]
so
\[
\tau_1=\log_a\sqrt{\operatorname{sr}(\mathbf{G})},
\qquad
\tau_r=\log_a\left(
\kappa(\mathbf{G})\sqrt{\operatorname{sr}(\mathbf{G})}
\right),
\qquad
\tau_r-\tau_1=\log_a\kappa(\mathbf{G}).
\]
This completes the proof.
\end{proof}

\section{Lemmas for Proposition \ref{th_zero_rc}}
\subsection{Lemma \ref{ana_lemma:a1}}
\begin{lemma}
\label{ana_lemma:a1}
Let $\mathbf{A}\in\mathbb{R}^{n\times n}$ be symmetric positive definite, and let
$\mathbf{E}\in\mathbb{R}^{n\times n}$ be symmetric. Define $g(\mathbf{X})=\mathbf{X}^{1/2},f(\mathbf{X})=\mathbf{X}^{-1/2}$. Then $g$ and $f$ are Fr\'echet differentiable at $\mathbf{A}$. We use $L_h(\mathbf{A};\mathbf{E})$ to denote the Fr\'echet derivative of
a matrix function $h$ at $\mathbf{A}$ applied to $\mathbf{E}$, i.e., $h(\mathbf{A}+\mathbf{E})=h(\mathbf{A})+L_h(\mathbf{A};\mathbf{E}) +o(\|\mathbf{E}\|_2)$. If $\mathbf{S}=\mathbf{A}^{1/2},\mathbf{L}=L_g(\mathbf{A};\mathbf{E})$, then $\mathbf{L}$ is the unique symmetric solution of the Sylvester equation
\[
\mathbf{S}\mathbf{L}+\mathbf{L}\mathbf{S}=\mathbf{E}.
\]
Moreover,
\[
L_f(\mathbf{A};\mathbf{E})=-\mathbf{A}^{-1/2}\mathbf{L}\mathbf{A}^{-1/2}.
\]
Consequently, for $\|\mathbf{E}\|_2$ sufficiently small,
\[
(\mathbf{A}+\mathbf{E})^{-1/2}
=\mathbf{A}^{-1/2}-\mathbf{A}^{-1/2}\mathbf{L}\mathbf{A}^{-1/2}+\mathcal{O}(\|\mathbf{E}\|_2^2).
\]
\end{lemma}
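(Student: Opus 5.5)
We take $\mathbf{A}\succ 0$ with spectral decomposition $\mathbf{A}=\mathbf{Q}\Lambda\mathbf{Q}^\top$, $\Lambda=\operatorname{diag}(\lambda_1,\dots,\lambda_n)$, $\lambda_i>0$. We establish the two derivative formulas in turn and then deduce the expansion.

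\emph{Fréchet differentiability.} Since $\mathbf{A}$ is positive definite, the symmetric matrices $\mathbf{A}+\mathbf{E}$ with $\|\mathbf{E}\|_2<\lambda_{\min}(\mathbf{A})/2$ have spectrum in a compact interval $[\lambda_{\min}/2,\,2\lambda_{\max}]\subset(0,\infty)$. On a neighbourhood of this interval, $t\mapsto t^{\pm1/2}$ is analytic. Hence both maps can be written through the Cauchy integral $h(\mathbf{X})=\frac{1}{2\pi i}\oint h(z)(z\mathbf{I}-\mathbf{X})^{-1}\,dz$, taken over a contour enclosing the interval in the right half-plane. The resolvent expansion $(z\mathbf{I}-\mathbf{X}-\mathbf{E})^{-1}=(z\mathbf{I}-\mathbf{X})^{-1}+(z\mathbf{I}-\mathbf{X})^{-1}\mathbf{E}(z\mathbf{I}-\mathbf{X})^{-1}+\mathcal O(\|\mathbf{E}\|_2^2)$ holds uniformly on the contour. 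Therefore $g$ and $f$ are Fréchet differentiable, and in fact $C^2$, near $\mathbf{A}$. In particular the remainder is $\mathcal O(\|\mathbf{E}\|_2^2)$, not just $o(\|\mathbf{E}\|_2)$. This uniform second-order remainder is the one point needing care; the rest is algebra.

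\emph{Derivative of $g$ via Sylvester.} Differentiate the identity $g(\mathbf{X})^2=\mathbf{X}$ at $\mathbf{A}$ in direction $\mathbf{E}$ using the product rule. This gives $\mathbf{S}\mathbf{L}+\mathbf{L}\mathbf{S}=\mathbf{E}$. For uniqueness, conjugate by $\mathbf{Q}$ and set $\widetilde{\mathbf{L}}=\mathbf{Q}^\top\mathbf{L}\mathbf{Q}$. The equation becomes entrywise $(\sqrt{\lambda_i}+\sqrt{\lambda_j})\widetilde{L}_{ij}=\widetilde{E}_{ij}$. Since $\sqrt{\lambda_i}+\sqrt{\lambda_j}>0$, the solution is unique, namely $\widetilde{L}_{ij}=\widetilde{E}_{ij}/(\sqrt{\lambda_i}+\sqrt{\lambda_j})$, and it is symmetric because $\widetilde{\mathbf{E}}$ is. Thus $\mathbf{L}=L_g(\mathbf{A};\mathbf{E})$ is the unique symmetric solution.

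\emph{Derivative of $f$.} Write $f(\mathbf{X})=g(\mathbf{X})^{-1}$ and use the derivative of matrix inversion, $D(\mathbf{Y}^{-1})[\mathbf{H}]=-\mathbf{Y}^{-1}\mathbf{H}\mathbf{Y}^{-1}$. By the chain rule, $L_f(\mathbf{A};\mathbf{E})=-\mathbf{S}^{-1}\mathbf{L}\mathbf{S}^{-1}=-\mathbf{A}^{-1/2}\mathbf{L}\mathbf{A}^{-1/2}$.

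\emph{Expansion.} Combining this with the $\mathcal O(\|\mathbf{E}\|_2^2)$ remainder from the first step gives $(\mathbf{A}+\mathbf{E})^{-1/2}=\mathbf{A}^{-1/2}-\mathbf{A}^{-1/2}\mathbf{L}\mathbf{A}^{-1/2}+\mathcal O(\|\mathbf{E}\|_2^2)$. Equivalently, one can verify the expansion directly in the eigenbasis of $\mathbf{A}$ using the Daleckii--Krein divided-difference formula: the divided differences of $t^{-1/2}$ equal $-\bigl(\sqrt{\lambda_i\lambda_j}(\sqrt{\lambda_i}+\sqrt{\lambda_j})\bigr)^{-1}$, which matches $-\mathbf{A}^{-1/2}\mathbf{L}\mathbf{A}^{-1/2}$ entrywise.
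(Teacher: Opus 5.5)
Your proof is correct and follows the paper's route: you differentiate $g(\mathbf X)^2=\mathbf X$ to obtain $\mathbf S\mathbf L+\mathbf L\mathbf S=\mathbf E$, you get uniqueness from the invertibility of $\mathbf X\mapsto\mathbf S\mathbf X+\mathbf X\mathbf S$ (which you make explicit in the eigenbasis of $\mathbf A$), and you apply the chain rule with the derivative of inversion. Your Cauchy-integral step goes further than the paper in two ways: it actually proves Fr\'echet differentiability, and it justifies the $\mathcal O(\|\mathbf E\|_2^2)$ remainder, whereas the paper only asserts that the remainder follows from Fr\'echet differentiability, which by itself gives just $o(\|\mathbf E\|_2)$; your version is therefore the more complete one.
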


\begin{proof}
For completeness, differentiate the identity $
g(\mathbf{X})^2=\mathbf{X} $ at $\mathbf{A}$ in the direction $\mathbf{E}$. Writing $\mathbf{L}=L_g(\mathbf{A};\mathbf{E})$ and $\mathbf{S}=\mathbf{A}^{1/2}$ gives
\[
\mathbf{S}\mathbf{L}+\mathbf{L}\mathbf{S}=\mathbf{E}.
\]
Since $\mathbf{S}\succ0$, the Sylvester operator $\mathbf{X}\mapsto\mathbf{S}\mathbf{X}+\mathbf{X}\mathbf{S}$ is invertible, so the solution $\mathbf{L}$ is unique. Next, since $f=\mathrm{inv}\circ g$, the chain rule and the Fr\'echet derivative of the matrix inverse,
\[
L_{\mathrm{inv}}(\mathbf{Z};\mathbf{W})=-\mathbf{Z}^{-1}\mathbf{W}\mathbf{Z}^{-1},
\]
yield
\[
L_f(\mathbf{A};\mathbf{E})
=-\mathbf{S}^{-1}\mathbf{L}\mathbf{S}^{-1}
=-\mathbf{A}^{-1/2}\mathbf{L}\mathbf{A}^{-1/2}.
\]
The stated first-order expansion follows from the Fr\'echet differentiability of $f$ at $\mathbf{A}$.
\end{proof}

\subsection{Lemma \ref{ana_lemma:a2}}
\begin{lemma}
\label{ana_lemma:a2}
Let $\mathbf{A}\succ0$ and set $\mathbf{S}=\mathbf{A}^{1/2}\succ0$. Consider the Sylvester equation
\[
\mathbf{SL}+\mathbf{LS}=\mathbf{E}.
\]
Its unique solution is given by \cite{bhatia2009positive}
\[
\mathbf{L}=\int_0^\infty e^{-t\mathbf{S}}\mathbf{E}e^{-t\mathbf{S}}dt.
\]
Moreover, in the operator norm,
\[
\|\mathbf{L}\|_2
\le\frac{1}{2\lambda_{\min}(\mathbf{S})}\|\mathbf{E}\|_2
=\frac{1}{2\sqrt{\lambda_{\min}(\mathbf{A})}}\|\mathbf{E}\|_2.
\]
The same bound also holds for the Frobenius norm, and more generally for any unitarily invariant norm.
\end{lemma}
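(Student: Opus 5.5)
The approach is to verify the integral formula directly, and then bound the integral term by term using the spectrum of $\mathbf{S}$.

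\textbf{Integral solution.} Since $\mathbf{S}\succ0$, we have $\|e^{-t\mathbf{S}}\|_2=e^{-t\lambda_{\min}(\mathbf{S})}$. The integrand is therefore bounded by $e^{-2t\lambda_{\min}(\mathbf{S})}\|\mathbf{E}\|_2$, so the improper integral converges absolutely. Differentiating $t\mapsto e^{-t\mathbf{S}}\mathbf{E}e^{-t\mathbf{S}}$ gives $-(\mathbf{S}\,e^{-t\mathbf{S}}\mathbf{E}e^{-t\mathbf{S}}+e^{-t\mathbf{S}}\mathbf{E}e^{-t\mathbf{S}}\,\mathbf{S})$. Integrating from $0$ to $\infty$ then yields
\[
\mathbf{S}\mathbf{L}+\mathbf{L}\mathbf{S}=-\bigl[e^{-t\mathbf{S}}\mathbf{E}e^{-t\mathbf{S}}\bigr]_0^\infty=\mathbf{E},
\]
because the boundary term vanishes at infinity. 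Uniqueness holds because the Sylvester operator $\mathbf{X}\mapsto\mathbf{S}\mathbf{X}+\mathbf{X}\mathbf{S}$ has eigenvalues $\lambda_i+\lambda_j>0$ in the eigenbasis of $\mathbf{S}$. This was already invoked in Lemma~\ref{ana_lemma:a1}, and the formula is attributed to \cite{bhatia2009positive}.

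\textbf{Norm bound.} For any unitarily invariant norm $\|\cdot\|$, we use the inequality $\|\mathbf{A}\mathbf{X}\mathbf{B}\|\le\|\mathbf{A}\|_2\|\mathbf{X}\|\|\mathbf{B}\|_2$ together with the triangle inequality for Bochner/Riemann integrals:
\[
\|\mathbf{L}\|\le\int_0^\infty\|e^{-t\mathbf{S}}\|_2^2\,\|\mathbf{E}\|\,dt=\|\mathbf{E}\|\int_0^\infty e^{-2t\lambda_{\min}(\mathbf{S})}dt=\frac{\|\mathbf{E}\|}{2\lambda_{\min}(\mathbf{S})}.
\]
Since $\lambda_{\min}(\mathbf{S})=\sqrt{\lambda_{\min}(\mathbf{A})}$, this gives the stated bound. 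The operator and Frobenius norms are special cases of unitarily invariant norms.

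Alternatively, in the eigenbasis of $\mathbf{S}$ one can write $L_{ij}=E_{ij}/(\lambda_i+\lambda_j)$. This is a Schur multiplier, and it gives the Frobenius bound immediately.

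The only mildly delicate point is justifying that norms pass inside the integral and that the boundary term vanishes. Both follow from the exponential decay established in the first step. No real obstacle is expected.
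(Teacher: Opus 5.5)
Your proposal is correct and follows essentially the same route as the paper. You differentiate $t\mapsto e^{-t\mathbf S}\mathbf E e^{-t\mathbf S}$ and integrate over $[0,\infty)$ to recover $\mathbf S\mathbf L+\mathbf L\mathbf S=\mathbf E$, then bound the integral using $\|e^{-t\mathbf S}\|_2=e^{-t\lambda_{\min}(\mathbf S)}$. Your extra justifications — absolute convergence, the vanishing boundary term, uniqueness via the eigenvalues $\lambda_i+\lambda_j>0$, and the inequality $\|\mathbf Y\mathbf X\mathbf Z\|\le\|\mathbf Y\|_2\|\mathbf X\|\|\mathbf Z\|_2$ for unitarily invariant norms — supply details the paper only asserts.
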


\begin{proof}
Differentiate the integrand:
\[
\frac{d}{dt}\bigl(e^{-t\mathbf{S}}\mathbf{E}e^{-t\mathbf{S}}\bigr)=-\mathbf{S}e^{-t\mathbf{S}}\mathbf{E}e^{-t\mathbf{S}}-e^{-t\mathbf{S}}\mathbf{E}e^{-t\mathbf{S}}\mathbf{S}.
\]
Integrating from $0$ to $\infty$ yields
\[
\mathbf{S}\mathbf{L}+\mathbf{L}\mathbf{S}=\mathbf{E},
\]
which proves the integral representation.

For the norm bound, using submultiplicativity and the fact that
\[
\|e^{-t\mathbf{S}}\|_2=e^{-t\lambda_{\min}(\mathbf{S})},
\]
we have
\[
\|\mathbf{L}\|_2
\le \int_0^\infty \|e^{-t\mathbf{S}}\|_2^2\|\mathbf{E}\|_2\,dt
= \|\mathbf{E}\|_2\int_0^\infty e^{-2t\lambda_{\min}(\mathbf{S})}dt
= \frac{1}{2\lambda_{\min}(\mathbf{S})}\|\mathbf{E}\|_2.
\]
Since $\lambda_{\min}(\mathbf{S})=\sqrt{\lambda_{\min}(\mathbf{A})}$, the stated bound follows.
\end{proof}

\section{Proofs of Proposition \ref{th_zero_rc}}
\label{proof:th_zero_rc}
\begin{proof}
For the column/right statement, we have
\[
\mathbf{M}^\top\mathbf{M} =\mathbf{D}_c(\mathbf{I}_n+\mathbf{C}_c)\mathbf{D}_c =\mathbf{D}_c^2+\mathbf{D}_c\mathbf{C}_c\mathbf{D}_c.
\]
Apply Lemma \ref{ana_lemma:a1} with $
\mathbf{A}=\mathbf{D}_c^2,\mathbf{E}=\mathbf{D}_c\mathbf{C}_c\mathbf{D}_c$. Then
\[
(\mathbf{M}^\top\mathbf{M})^{-1/2} =\mathbf{D}_c^{-1}-\mathbf{D}_c^{-1}\mathbf{L}_c\mathbf{D}_c^{-1}+\mathcal{O}(\|\mathbf{D}_c\mathbf{C}_c\mathbf{D}_c\|_2^2),
\]
where $\mathbf{L}_c$ solves
$\mathbf{D}_c\mathbf{L}_c+\mathbf{L}_c\mathbf{D}_c=\mathbf{D}_c\mathbf{C}_c\mathbf{D}_c.$

Multiplying on the left by $\mathbf{M}$ yields
\[
\begin{aligned}
\operatorname{Orth}_r(\mathbf{M})&=\mathbf{M}\mathbf{D}_c^{-1}-\mathbf{M}\mathbf{D}_c^{-1}\mathbf{L}_c\mathbf{D}_c^{-1}+\mathcal{O}(\|\mathbf{D}_c\mathbf{C}_c\mathbf{D}_c\|_2^2)\\
&=\mathbf{N}_c-\mathbf{N}_c\mathbf{L}_c\mathbf{D}_c^{-1}
+\mathcal{O}(\|\mathbf{D}_c\mathbf{C}_c\mathbf{D}_c\|_2^2).
\end{aligned}
\]
Moreover, by Lemma \ref{ana_lemma:a2}, we have
\[
\|\mathbf{L}_c\|_2 \le \frac{1}{2\lambda_{\min}(\mathbf{D}_c)}
\|\mathbf{D}_c\mathbf{C}_c\mathbf{D}_c\|_2 \le \frac{\|\mathbf{D}_c\|_2^2}{2\lambda_{\min}(\mathbf{D}_c)}\|\mathbf{C}_c\|_2.
\]
Hence
\[
\|\mathbf{L}_c\mathbf{D}_c^{-1}\|_2 \le \|\mathbf{L}_c\|_2\|\mathbf{D}_c^{-1}\|_2 \le \frac{\kappa(\mathbf{D}_c)^2}{2}\|\mathbf{C}_c\|_2.
\]
Therefore the first-order correction is $\mathcal{O}(\|\mathbf{C}_c\|_2)$, and
\[
\operatorname{Orth}_r(\mathbf{M})=\mathbf{N}_c+\mathcal{O}(\|\mathbf{C}_c\|_2).
\]
For the row/left statement, we have
\[
\mathbf{M}\mathbf{M}^\top = \mathbf{D}_r(\mathbf{I}_m+\mathbf{C}_r)\mathbf{D}_r = \mathbf{D}_r^2+\mathbf{D}_r\mathbf{C}_r\mathbf{D}_r.
\]
Applying Lemma \ref{ana_lemma:a1} with $\mathbf{A}=\mathbf{D}_r^2,
\mathbf{E}=\mathbf{D}_r\mathbf{C}_r\mathbf{D}_r$
gives
\[
(\mathbf{M}\mathbf{M}^\top)^{-1/2} = \mathbf{D}_r^{-1} - \mathbf{D}_r^{-1}\mathbf{L}_r\mathbf{D}_r^{-1} + \mathcal{O}(\|\mathbf{D}_r\mathbf{C}_r\mathbf{D}_r\|_2^2),
\]
where $\mathbf{L}_r$ solves $\mathbf{D}_r\mathbf{L}_r+\mathbf{L}_r\mathbf{D}_r
=\mathbf{D}_r\mathbf{C}_r\mathbf{D}_r.$

Multiplying on the right by $\mathbf{M}$ yields
\[
\begin{aligned}
\operatorname{Orth}_\ell(\mathbf{M})
&=\mathbf{D}_r^{-1}\mathbf{M}-\mathbf{D}_r^{-1}\mathbf{L}_r\mathbf{D}_r^{-1}\mathbf{M}+\mathcal{O}(\|\mathbf{D}_r\mathbf{C}_r\mathbf{D}_r\|_2^2)\\
&=\mathbf{N}_r-\mathbf{D}_r^{-1}\mathbf{L}_r\mathbf{N}_r+\mathcal{O}(\|\mathbf{D}_r\mathbf{C}_r\mathbf{D}_r\|_2^2).
\end{aligned}
\]
Again by Lemma \ref{ana_lemma:a2}, we have
\[
\|\mathbf{L}_r\|_2 \le \frac{1}{2\lambda_{\min}(\mathbf{D}_r)} \|\mathbf{D}_r\mathbf{C}_r\mathbf{D}_r\|_2 \le \frac{\|\mathbf{D}_r\|_2^2}{2\lambda_{\min}(\mathbf{D}_r)}
\|\mathbf{C}_r\|_2,
\]
so
\[
\|\mathbf{D}_r^{-1}\mathbf{L}_r\|_2 \le\|\mathbf{D}_r^{-1}\|_2\|\mathbf{L}_r\|_2 \le\frac{\kappa(\mathbf{D}_r)^2}{2}\|\mathbf{C}_r\|_2.
\]
Therefore
\[
\operatorname{Orth}_\ell(\mathbf{M})=\mathbf{N}_r+\mathcal{O}(\|\mathbf{C}_r\|_2).
\]
\end{proof}

\section{Normalization and whitening}
\label{appendix:normalization}
This appendix collects the normalization--whitening estimates that are omitted from the main text. We use the notation of Proposition~\ref{th_zero_rc}. All $\mathcal O(\cdot)$ terms are taken in operator norm; the implied constants may depend on the fixed matrices in Proposition~\ref{th_zero_rc}, but not on the perturbation terms.

\begin{proposition}
\label{prop:one_sided_norm}
The following estimates hold.

In the column/right setting of Proposition~\ref{th_zero_rc}, if $\|\mathbf C_c\|_2$ is sufficiently small, then
$\operatorname{Orth}_r(\mathbf N_c)=\mathbf N_c(\mathbf I_n+\mathbf C_c)^{-1/2}
=\mathbf N_c-\tfrac12\mathbf N_c\mathbf C_c+\mathcal O(\|\mathbf C_c\|_2^2)$, and therefore
$\|\operatorname{Orth}_r(\mathbf N_c)-\mathbf N_c\|_2
\le \tfrac12\|\mathbf N_c\|_2\|\mathbf C_c\|_2+\mathcal O(\|\mathbf C_c\|_2^2)$.
Moreover,
\[
\begin{aligned}
\|\operatorname{Orth}_r(\mathbf M)-\mathbf M\|_2
\le{}& \|\mathbf N_c\|_2\|\mathbf D_c-\mathbf I_n\|_2 \\
&+ \frac{\kappa(\mathbf D_c)^2}{2}\|\mathbf N_c\|_2\|\mathbf C_c\|_2
+\mathcal O(\|\mathbf D_c\mathbf C_c\mathbf D_c\|_2^2).
\end{aligned}
\]
In the row/left setting of Proposition~\ref{th_zero_rc}, if $\|\mathbf C_r\|_2$ is sufficiently small, then
$\operatorname{Orth}_\ell(\mathbf N_r)=(\mathbf I_m+\mathbf C_r)^{-1/2}\mathbf N_r
=\mathbf N_r-\tfrac12\mathbf C_r\mathbf N_r+\mathcal O(\|\mathbf C_r\|_2^2)$, and therefore
$\|\operatorname{Orth}_\ell(\mathbf N_r)-\mathbf N_r\|_2
\le \tfrac12\|\mathbf N_r\|_2\|\mathbf C_r\|_2+\mathcal O(\|\mathbf C_r\|_2^2)$.
Moreover,
\[
\begin{aligned}
\|\operatorname{Orth}_\ell(\mathbf M)-\mathbf M\|_2
\le{}& \|\mathbf D_r-\mathbf I_m\|_2\|\mathbf N_r\|_2 \\
&+ \frac{\kappa(\mathbf D_r)^2}{2}\|\mathbf N_r\|_2\|\mathbf C_r\|_2
+\mathcal O(\|\mathbf D_r\mathbf C_r\mathbf D_r\|_2^2).
\end{aligned}
\]
\end{proposition}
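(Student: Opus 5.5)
The plan treats the column/right and row/left halves symmetrically; I describe the right case and obtain the left one by transposition.

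\emph{Self-whitening of the normalized matrix.} By definition $\mathbf N_c^\top\mathbf N_c=\mathbf I_n+\mathbf C_c$, so $\operatorname{Orth}_r(\mathbf N_c)=\mathbf N_c(\mathbf I_n+\mathbf C_c)^{-1/2}$. I will apply Lemma~\ref{ana_lemma:a1} with $\mathbf A=\mathbf I_n$ and $\mathbf E=\mathbf C_c$. Then $\mathbf S=\mathbf I_n$, and the Sylvester equation $\mathbf L+\mathbf L=\mathbf C_c$ gives $\mathbf L=\tfrac12\mathbf C_c$. This yields $(\mathbf I_n+\mathbf C_c)^{-1/2}=\mathbf I_n-\tfrac12\mathbf C_c+\mathcal O(\|\mathbf C_c\|_2^2)$. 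If one prefers to avoid the Fréchet remainder, the scalar Taylor series of $(1+x)^{-1/2}$ converges for $\|\mathbf C_c\|_2<1$ and can be applied spectrally. Multiplying on the left by $\mathbf N_c$ gives the expansion. The operator-norm bound then follows from submultiplicativity, $\|\mathbf N_c\mathbf C_c\|_2\le\|\mathbf N_c\|_2\|\mathbf C_c\|_2$. The remainder $\mathbf N_c\cdot\mathcal O(\|\mathbf C_c\|^2)$ stays $\mathcal O(\|\mathbf C_c\|_2^2)$ because $\|\mathbf N_c\|_2^2=\|\mathbf I+\mathbf C_c\|_2$ is bounded near zero.

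\emph{Gap between $\operatorname{Orth}_r(\mathbf M)$ and $\mathbf M$.} I will start from the expansion in Proposition~\ref{th_zero_rc}(i), $\operatorname{Orth}_r(\mathbf M)=\mathbf N_c-\mathbf N_c\mathbf L_c\mathbf D_c^{-1}+\mathcal O(\|\mathbf D_c\mathbf C_c\mathbf D_c\|_2^2)$, and write $\mathbf M=\mathbf N_c\mathbf D_c$. This gives
\[
\operatorname{Orth}_r(\mathbf M)-\mathbf M=\mathbf N_c(\mathbf I_n-\mathbf D_c)-\mathbf N_c\mathbf L_c\mathbf D_c^{-1}+\mathcal O(\|\mathbf D_c\mathbf C_c\mathbf D_c\|_2^2).
\]
Applying the triangle inequality, the first term is at most $\|\mathbf N_c\|_2\|\mathbf D_c-\mathbf I_n\|_2$. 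For the second term I will reuse the bound $\|\mathbf L_c\mathbf D_c^{-1}\|_2\le\tfrac{\kappa(\mathbf D_c)^2}{2}\|\mathbf C_c\|_2$, which was already derived in the proof of Proposition~\ref{th_zero_rc} via Lemma~\ref{ana_lemma:a2}. Together these give the stated inequality.

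\emph{Row/left case.} The argument is identical with left multiplication. First, $\mathbf N_r\mathbf N_r^\top=\mathbf I_m+\mathbf C_r$, so $\operatorname{Orth}_\ell(\mathbf N_r)=(\mathbf I_m+\mathbf C_r)^{-1/2}\mathbf N_r$, and the same $\mathbf L=\tfrac12\mathbf C_r$ computation applies. Second, $\mathbf M=\mathbf D_r\mathbf N_r$ gives the difference $(\mathbf I_m-\mathbf D_r)\mathbf N_r-\mathbf D_r^{-1}\mathbf L_r\mathbf N_r$, bounded using $\|\mathbf D_r^{-1}\mathbf L_r\|_2\le\tfrac{\kappa(\mathbf D_r)^2}{2}\|\mathbf C_r\|_2$.

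\emph{Main obstacle.} The only delicate point is bookkeeping of remainders: the $\mathcal O(\cdot)$ in Lemma~\ref{ana_lemma:a1} must be uniform, with constants depending only on the fixed $\mathbf N_c$ and $\mathbf D_c$. I will justify this by analyticity of $x\mapsto x^{-1/2}$ near $1$, or near the spectrum of $\mathbf D_c^2$, so that the remainder is genuinely quadratic rather than $o(\cdot)$.
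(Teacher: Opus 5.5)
Your proposal is correct and follows the paper's proof essentially step for step. You apply Lemma~\ref{ana_lemma:a1} with $\mathbf A=\mathbf I_n$, $\mathbf E=\mathbf C_c$ to get $\mathbf L=\tfrac12\mathbf C_c$. You then take the expansion from Proposition~\ref{th_zero_rc} together with $\mathbf M=\mathbf N_c\mathbf D_c$, and bound the remaining term via $\|\mathbf L_c\mathbf D_c^{-1}\|_2\le\tfrac12\kappa(\mathbf D_c)^2\|\mathbf C_c\|_2$ from Lemma~\ref{ana_lemma:a2}, with the row case handled symmetrically. Your remarks on the remainder usefully tighten a point the paper leaves implicit, since Lemma~\ref{ana_lemma:a1} as proved only yields an $o(\cdot)$ remainder: these are the convergent binomial series for $\|\mathbf C_c\|_2<1$ and the bound $\|\mathbf N_c\|_2^2=\|\mathbf I_n+\mathbf C_c\|_2$.
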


\begin{proof}
Consider first the column/right case. Applying Lemma~\ref{ana_lemma:a1} with $\mathbf A=\mathbf I_n$ and $\mathbf E=\mathbf C_c$ gives the Sylvester equation $\mathbf L+\mathbf L=\mathbf C_c$. Thus $\mathbf L=\tfrac12\mathbf C_c$, and
$(\mathbf I_n+\mathbf C_c)^{-1/2}=\mathbf I_n-\tfrac12\mathbf C_c+\mathcal O(\|\mathbf C_c\|_2^2)$.
Multiplying on the left by $\mathbf N_c$ gives the expansion of $\operatorname{Orth}_r(\mathbf N_c)$, and the norm bound follows from submultiplicativity.

To compare with $\mathbf M$, Proposition~\ref{th_zero_rc} gives
$\operatorname{Orth}_r(\mathbf M)=\mathbf N_c-\mathbf N_c\mathbf L_c\mathbf D_c^{-1}
+\mathcal O(\|\mathbf D_c\mathbf C_c\mathbf D_c\|_2^2)$.
Since $\mathbf M=\mathbf N_c\mathbf D_c$,
\[
\operatorname{Orth}_r(\mathbf M)-\mathbf M=\mathbf N_c(\mathbf I_n-\mathbf D_c)-\mathbf N_c\mathbf L_c\mathbf D_c^{-1}
+\mathcal O(\|\mathbf D_c\mathbf C_c\mathbf D_c\|_2^2).
\]
Lemma~\ref{ana_lemma:a2} yields
$\|\mathbf L_c\mathbf D_c^{-1}\|_2\le \tfrac12\kappa(\mathbf D_c)^2\|\mathbf C_c\|_2$, which gives the stated estimate.

The row/left case follows by the same argument. Applying Lemma~\ref{ana_lemma:a1} with $\mathbf A=\mathbf I_m$ and $\mathbf E=\mathbf C_r$ gives
$(\mathbf I_m+\mathbf C_r)^{-1/2}=\mathbf I_m-\tfrac12\mathbf C_r+\mathcal O(\|\mathbf C_r\|_2^2)$, and hence
$\operatorname{Orth}_\ell(\mathbf N_r)=(\mathbf I_m+\mathbf C_r)^{-1/2}\mathbf N_r
=\mathbf N_r-\tfrac12\mathbf C_r\mathbf N_r+\mathcal O(\|\mathbf C_r\|_2^2)$.
The norm bound again follows from submultiplicativity. Proposition~\ref{th_zero_rc} also gives
$\operatorname{Orth}_\ell(\mathbf M)=\mathbf N_r-\mathbf D_r^{-1}\mathbf L_r\mathbf N_r
+\mathcal O(\|\mathbf D_r\mathbf C_r\mathbf D_r\|_2^2)$, so
\[
\operatorname{Orth}_\ell(\mathbf M)-\mathbf M = (\mathbf I_m-\mathbf D_r)\mathbf N_r-\mathbf D_r^{-1}\mathbf L_r\mathbf N_r +\mathcal O(\|\mathbf D_r\mathbf C_r\mathbf D_r\|_2^2).
\]
The bound $\|\mathbf D_r^{-1}\mathbf L_r\|_2\le \tfrac12\kappa(\mathbf D_r)^2\|\mathbf C_r\|_2$ completes the proof.
\end{proof}

\begin{remark}
Proposition~\ref{prop:one_sided_norm} shows that one-sided normalization removes the zeroth-order marginal scale terms
$\|\mathbf N_c\|_2\|\mathbf D_c-\mathbf I_n\|_2$ and
$\|\mathbf D_r-\mathbf I_m\|_2\|\mathbf N_r\|_2$.
Whitening then acts only on the first-order Gram error. Since
$\operatorname{diag}(\mathbf N_c^\top\mathbf N_c)=\mathbf I_n$ and
$\operatorname{diag}(\mathbf N_r\mathbf N_r^\top)=\mathbf I_m$,
the leading correction is purely off-diagonal.
\end{remark}

\begin{corollary}
\label{cor:two_sided_norm}
Assume that every row and every column of $\mathbf M$ is nonzero, and set
$\hat{\mathbf M}:=\mathbf D_r^{-1/2}\mathbf M\mathbf D_c^{-1/2}$.

If $\mathbf M$ has full column rank and
$\hat{\mathbf C}_c:=\hat{\mathbf M}^\top\hat{\mathbf M}-\mathbf I_n$, then for $\|\hat{\mathbf C}_c\|_2$ sufficiently small,
$\operatorname{Orth}_r(\hat{\mathbf M})
=\hat{\mathbf M}(\mathbf I_n+\hat{\mathbf C}_c)^{-1/2}
=\hat{\mathbf M}-\tfrac12\hat{\mathbf M}\hat{\mathbf C}_c
+\mathcal O(\|\hat{\mathbf C}_c\|_2^2)$, and therefore
$\|\operatorname{Orth}_r(\hat{\mathbf M})-\hat{\mathbf M}\|_2
\le \tfrac12\|\hat{\mathbf M}\|_2\|\hat{\mathbf C}_c\|_2
+\mathcal O(\|\hat{\mathbf C}_c\|_2^2)$.

If $\mathbf M$ has full row rank and
$\hat{\mathbf C}_r:=\hat{\mathbf M}\hat{\mathbf M}^\top-\mathbf I_m$, then for $\|\hat{\mathbf C}_r\|_2$ sufficiently small,
$\operatorname{Orth}_\ell(\hat{\mathbf M})
=(\mathbf I_m+\hat{\mathbf C}_r)^{-1/2}\hat{\mathbf M}
=\hat{\mathbf M}-\tfrac12\hat{\mathbf C}_r\hat{\mathbf M}
+\mathcal O(\|\hat{\mathbf C}_r\|_2^2)$, and therefore
$\|\operatorname{Orth}_\ell(\hat{\mathbf M})-\hat{\mathbf M}\|_2
\le \tfrac12\|\hat{\mathbf M}\|_2\|\hat{\mathbf C}_r\|_2
+\mathcal O(\|\hat{\mathbf C}_r\|_2^2)$.
\end{corollary}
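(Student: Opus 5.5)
The statement is Corollary~\ref{cor:two_sided_norm}. The plan is to treat $\hat{\mathbf M}$ exactly as $\mathbf N_c$ (resp.\ $\mathbf N_r$) was treated in Proposition~\ref{prop:one_sided_norm}. In that proof only two structural facts were used: the normalized matrix has Gram matrix $\mathbf I+\mathbf C$, and $\mathbf C$ is small. Neither the diagonal of the Gram matrix nor the particular normalization enters. So nothing specific to two-sided scaling should be needed beyond well-definedness and rank preservation.

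First, I check that $\hat{\mathbf M}$ is well defined and has the required rank. Since every row and column of $\mathbf M$ is nonzero, $\mathbf D_r\succ0$ and $\mathbf D_c\succ0$ (when $\varepsilon=0$), so $\mathbf D_r^{-1/2}$ and $\mathbf D_c^{-1/2}$ are invertible diagonal matrices. Multiplying by invertible matrices preserves rank. Hence $\hat{\mathbf M}$ has full column rank whenever $\mathbf M$ does, and full row rank whenever $\mathbf M$ does. In the full column rank case this gives $\hat{\mathbf M}^\top\hat{\mathbf M}=\mathbf I_n+\hat{\mathbf C}_c\succ0$, so $\operatorname{Orth}_r(\hat{\mathbf M})=\hat{\mathbf M}(\hat{\mathbf M}^\top\hat{\mathbf M})^{-1/2}=\hat{\mathbf M}(\mathbf I_n+\hat{\mathbf C}_c)^{-1/2}$ holds by definition.

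Second, I apply Lemma~\ref{ana_lemma:a1} with $\mathbf A=\mathbf I_n$ and $\mathbf E=\hat{\mathbf C}_c$, which is symmetric. Here $\mathbf S=\mathbf I_n$, so the Sylvester equation reads $2\mathbf L=\hat{\mathbf C}_c$ and gives $\mathbf L=\tfrac12\hat{\mathbf C}_c$. The lemma then yields $(\mathbf I_n+\hat{\mathbf C}_c)^{-1/2}=\mathbf I_n-\tfrac12\hat{\mathbf C}_c+\mathcal O(\|\hat{\mathbf C}_c\|_2^2)$. Left-multiplying by $\hat{\mathbf M}$ gives the stated expansion, with the remainder still $\mathcal O(\|\hat{\mathbf C}_c\|_2^2)$ because $\hat{\mathbf M}$ is fixed. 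The norm bound follows by moving $\hat{\mathbf M}$ to the other side and using $\|\hat{\mathbf M}\hat{\mathbf C}_c\|_2\le\|\hat{\mathbf M}\|_2\|\hat{\mathbf C}_c\|_2$.

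Third, the row/left case is symmetric. It uses $\hat{\mathbf M}\hat{\mathbf M}^\top=\mathbf I_m+\hat{\mathbf C}_r$, the same lemma with $\mathbf A=\mathbf I_m$, and right-multiplication by $\hat{\mathbf M}$.

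The only point needing care, and the expected obstacle, is the uniformity of the $\mathcal O(\cdot)$ constant. Lemma~\ref{ana_lemma:a1} gives a Fréchet remainder at the fixed point $\mathbf A=\mathbf I$. To keep this constant independent of $\mathbf M$, I would make it explicit. For $\|\hat{\mathbf C}\|_2\le\tfrac12$, diagonalize $\hat{\mathbf C}$ and use the scalar Taylor bound $|(1+x)^{-1/2}-1+\tfrac{x}{2}|\le Cx^2$ on $[-\tfrac12,\tfrac12]$. This gives an absolute constant, consistent with the appendix convention that constants may depend on the fixed matrices but not on the perturbation.
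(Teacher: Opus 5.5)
Your proposal is correct and follows the paper's proof essentially verbatim: apply Lemma~\ref{ana_lemma:a1} at $\mathbf A=\mathbf I$ with $\mathbf E=\hat{\mathbf C}_c$ (resp.\ $\hat{\mathbf C}_r$), read off $\mathbf L=\tfrac12\hat{\mathbf C}$ from the Sylvester equation, multiply by $\hat{\mathbf M}$ on the appropriate side, and bound the result by submultiplicativity. Your rank-preservation check and your spectral-calculus bound giving an absolute remainder constant are details the paper leaves implicit, not a different route; the constant stays absolute after multiplying by $\hat{\mathbf M}$ because $\|\hat{\mathbf M}\|_2^2=\|\mathbf I+\hat{\mathbf C}\|_2\le\tfrac32$.
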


\begin{proof}
The same expansion applies to the two-sided normalized matrix $\hat{\mathbf M}$.

For the column-rank case, Lemma~\ref{ana_lemma:a1} with $\mathbf A=\mathbf I_n$ and $\mathbf E=\hat{\mathbf C}_c$ gives
$(\mathbf I_n+\hat{\mathbf C}_c)^{-1/2}
=\mathbf I_n-\tfrac12\hat{\mathbf C}_c+\mathcal O(\|\hat{\mathbf C}_c\|_2^2)$.
This yields the expansion of $\operatorname{Orth}_r(\hat{\mathbf M})$ and the corresponding norm bound.

For the row-rank case, Lemma~\ref{ana_lemma:a1} with $\mathbf A=\mathbf I_m$ and $\mathbf E=\hat{\mathbf C}_r$ gives
$(\mathbf I_m+\hat{\mathbf C}_r)^{-1/2}
=\mathbf I_m-\tfrac12\hat{\mathbf C}_r+\mathcal O(\|\hat{\mathbf C}_r\|_2^2)$.
This yields the expansion of $\operatorname{Orth}_\ell(\hat{\mathbf M})$ and the corresponding norm bound.
\end{proof}

\begin{remark}
After two-sided normalization, whitening is centered at the identity Gram matrix of $\hat{\mathbf M}$. It no longer corrects marginal row or column scales; it only removes the residual mismatch left after diagonal equilibration.
\end{remark}
\section{Lemmas for Proposition \ref{th_conv_rc}}
\subsection{Lemma \ref{conv_lemma:a1}}

\begin{lemma}
\label{conv_lemma:a1}
Let the momentum iterate satisfy $\mathbf{M}_t=\beta \mathbf{M}_{t-1}+(1-\beta)\mathbf{G}_t,\beta\in[0,1)$, and assume there exists a constant $G_\infty>0$ such that $\|\mathbf{G}_t\|_\infty \le G_\infty,\|\mathbf{M}_0\|_\infty \le G_\infty$. For $\varepsilon>0$, define
$\mathbf{D}_{r,t}:=\operatorname{diag}\bigl(\operatorname{rowsum}(\mathbf{M}_t\odot \mathbf{M}_t)+\varepsilon\bigr),
\mathbf{D}_{c,t}:=\operatorname{diag}\bigl(\operatorname{colsum}(\mathbf{M}_t\odot \mathbf{M}_t)+\varepsilon\bigr)$, and $\mathbf{P}_t:=\mathbf{D}_{r,t}^{-1/2},\mathbf{Q}_t:=\mathbf{D}_{c,t}^{-1/2}$. Then, for all $t$,
\[
(nG_\infty^2+\varepsilon)^{-1/2}\mathbf{I}_m
\preceq\mathbf{P}_t\preceq\varepsilon^{-1/2}\mathbf{I}_m,\quad
(mG_\infty^2+\varepsilon)^{-1/2}\mathbf{I}_n\preceq\mathbf{Q}_t\preceq\varepsilon^{-1/2}\mathbf{I}_n.
\]
Moreover,
\[
\sqrt{\varepsilon}\mathbf{I}_m\preceq\mathbf{P}_t^{-1}\preceq\sqrt{nG_\infty^2+\varepsilon}\mathbf{I}_m,\quad \sqrt{\varepsilon}\mathbf{I}_n
\preceq\mathbf{Q}_t^{-1}\preceq\sqrt{mG_\infty^2+\varepsilon}\mathbf{I}_n.
\]
\end{lemma}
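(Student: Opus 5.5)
The plan is to show first that every entry of $\mathbf{M}_t$ stays bounded by $G_\infty$ in absolute value. Then each row and column squared-norm sum is bounded, and the diagonal matrices $\mathbf{D}_{r,t},\mathbf{D}_{c,t}$ lie between explicit scalar multiples of the identity. The stated bounds on $\mathbf{P}_t,\mathbf{Q}_t$ and their inverses then follow by taking entrywise square roots and reciprocals of positive diagonal matrices.

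\textbf{Step 1 (entrywise bound by induction).} I will prove $\|\mathbf{M}_t\|_\infty\le G_\infty$ for all $t\ge 0$, where $\|\cdot\|_\infty$ is the entrywise max norm. The base case is the hypothesis $\|\mathbf{M}_0\|_\infty\le G_\infty$. For the inductive step, each entry satisfies
\[
|(\mathbf{M}_t)_{ij}|\le \beta|(\mathbf{M}_{t-1})_{ij}|+(1-\beta)|(\mathbf{G}_t)_{ij}|\le \beta G_\infty+(1-\beta)G_\infty=G_\infty,
\]
since $\beta\in[0,1)$ makes this a convex combination. The same argument works verbatim for time-varying $\beta_t\in[0,1)$, which is the setting actually used in Proposition~\ref{th_conv_rc}. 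It also works for the Nesterov combination, although that case is not needed here.

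\textbf{Step 2 (diagonal bounds).} The $i$-th diagonal entry of $\mathbf{D}_{r,t}$ is $\sum_{j=1}^n(\mathbf{M}_t)_{ij}^2+\varepsilon$. This lies in $[\varepsilon,\,nG_\infty^2+\varepsilon]$, because the sum is nonnegative and has $n$ terms, each at most $G_\infty^2$. Similarly, each diagonal entry of $\mathbf{D}_{c,t}$ lies in $[\varepsilon,\,mG_\infty^2+\varepsilon]$, since a column has $m$ entries. Because $\varepsilon>0$, all these entries are strictly positive, so $\mathbf{P}_t=\mathbf{D}_{r,t}^{-1/2}$ and $\mathbf{Q}_t=\mathbf{D}_{c,t}^{-1/2}$ are well defined positive diagonal matrices.

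\textbf{Step 3 (monotone scalar maps).} For diagonal matrices, the Loewner order is the entrywise order on the diagonal. The map $x\mapsto x^{-1/2}$ is decreasing on $(0,\infty)$, so it reverses the bounds from Step 2 and yields the stated $\preceq$ bounds on $\mathbf{P}_t$ and $\mathbf{Q}_t$. The map $x\mapsto x^{1/2}$ is increasing, so it preserves them and yields the bounds on $\mathbf{P}_t^{-1}=\mathbf{D}_{r,t}^{1/2}$ and $\mathbf{Q}_t^{-1}=\mathbf{D}_{c,t}^{1/2}$.

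Step 1 is the only place where any thought is needed: the bound must be entrywise rather than in Frobenius norm, and it depends on the momentum being a convex combination. Everything after that is routine bookkeeping.
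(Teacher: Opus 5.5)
Your proposal is correct and follows the paper's proof essentially step for step. The paper also runs an induction giving the entrywise bound $|(\mathbf{M}_t)_{ij}|\le G_\infty$ from the convex-combination form of the momentum update, then places the diagonal entries of $\mathbf{D}_{r,t}$ and $\mathbf{D}_{c,t}$ in $[\varepsilon,\,nG_\infty^2+\varepsilon]$ and $[\varepsilon,\,mG_\infty^2+\varepsilon]$, and finally applies the monotone maps $x\mapsto x^{-1/2}$ and $x\mapsto x^{1/2}$ entrywise to these positive diagonal matrices. You also note that the induction works unchanged for time-varying $\beta_t\in[0,1)$; this is worth having, because Proposition~\ref{th_conv_rc} uses $\beta_t=1-t^{-1/2}$, while the lemma is stated for a constant $\beta$.
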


\begin{proof}
We first show that the momentum iterate $\mathbf{M}_t$ is uniformly bounded entrywise.

Since
\[
\mathbf{M}_t=\beta \mathbf{M}_{t-1}+(1-\beta)\mathbf{G}_t,
\]
we have
\[
\|\mathbf{M}_t\|_\infty
\le \beta \|\mathbf{M}_{t-1}\|_\infty + (1-\beta)\|\mathbf{G}_t\|_\infty \le \beta \|\mathbf{M}_{t-1}\|_\infty + (1-\beta)G_\infty.
\]
Using $\|\mathbf{M}_0\|_{\infty}=0\le G_{\infty}$, an induction on $t$ yields
\[
\|\mathbf{M}_t\|_\infty \le G_\infty,
\qquad \forall t.
\]
Therefore, every entry of $\mathbf{M}_t$ satisfies
\[
|(\mathbf{M}_t)_{ij}| \le G_\infty,
\qquad \forall i,j,t.
\]
Now fix any row $i$. Since that row has $n$ entries,
\[
\sum_{j=1}^n (\mathbf{M}_t)_{ij}^2 \le n G_\infty^2.
\]
Hence each diagonal entry of $\mathbf{D}_{r,t}$ satisfies
\[
\varepsilon \le (\mathbf{D}_{r,t})_{ii} = \sum_{j=1}^n (\mathbf{M}_t)_{ij}^2+\varepsilon \le nG_\infty^2+\varepsilon.
\]
Similarly, for any column $j$,
\[
\sum_{i=1}^m (\mathbf{M}_t)_{ij}^2 \le m G_\infty^2,
\]
so each diagonal entry of $\mathbf{D}_{c,t}$ satisfies
\[
\varepsilon \le (\mathbf{D}_{c,t})_{jj}
=\sum_{i=1}^m (\mathbf{M}_t)_{ij}^2+\varepsilon
\le mG_\infty^2+\varepsilon.
\]
Since $\mathbf{P}_t=\mathbf{D}_{r,t}^{-1/2}$ and $\mathbf{Q}_t=\mathbf{D}_{c,t}^{-1/2}$ are diagonal matrices, their diagonal entries are
\[
(\mathbf{P}_t)_{ii}
=\left(\sum_{j=1}^n (\mathbf{M}_t)_{ij}^2+\varepsilon\right)^{-1/2},\quad (\mathbf{Q}_t)_{jj}
=\left(\sum_{i=1}^m(\mathbf{M}_t)_{ij}^2+\varepsilon\right)^{-1/2}.
\]
Using the bounds above, we obtain
\[
(nG_\infty^2+\varepsilon)^{-1/2}\le(\mathbf{P}_t)_{ii}\le\varepsilon^{-1/2},\quad (mG_\infty^2+\varepsilon)^{-1/2}\le(\mathbf{Q}_t)_{jj}\le\varepsilon^{-1/2}.
\]
Since positive diagonal matrices are ordered entrywise in the Loewner sense, it follows that
\[
(nG_\infty^2+\varepsilon)^{-1/2}\mathbf{I}_m
\preceq\mathbf{P}_t\preceq\varepsilon^{-1/2}\mathbf{I}_m,\quad (mG_\infty^2+\varepsilon)^{-1/2}\mathbf{I}_n
\preceq\mathbf{Q}_t\preceq\varepsilon^{-1/2}\mathbf{I}_n.
\]
Because $\mathbf{P}_t$ and $\mathbf{Q}_t$ are symmetric positive definite diagonal matrices, their singular values are exactly their diagonal entries. Therefore,
\[
(nG_\infty^2+\varepsilon)^{-1/2}\le\sigma_{\min}(\mathbf{P}_t)\le\|\mathbf{P}_t\|_2\le\varepsilon^{-1/2},\quad (mG_\infty^2+\varepsilon)^{-1/2}\le\sigma_{\min}(\mathbf{Q}_t)\le\|\mathbf{Q}_t\|_2
\le\varepsilon^{-1/2}.
\]
Finally, inverting the above matrix inequalities gives
\[
\sqrt{\varepsilon}\mathbf{I}_m\preceq\mathbf{P}_t^{-1}\preceq\sqrt{nG_\infty^2+\varepsilon}\mathbf{I}_m,\quad \sqrt{\varepsilon}\mathbf{I}_n\preceq\mathbf{Q}_t^{-1}\preceq\sqrt{mG_\infty^2+\varepsilon}\mathbf{I}_n.
\]
This completes the proof.
\end{proof}

\subsection{Lemma \ref{conv_lemma:a2}}
\begin{lemma}
\label{conv_lemma:a2}
Suppose that $\{E_t,A_t\}$ are nonnegative sequences. Assume
$
E_{t+1}\le (1-\alpha_{t+1})E_t + A_{t+1}
$
where $\alpha_t=t^{-p}$, $p\in(0,1]$. Then
$
\alpha_t E_t \le 2(E_t-E_{t+1}+A_{t+1}).
$
\end{lemma}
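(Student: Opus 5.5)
The plan is to rearrange the assumed recursion so that the bracket on the right-hand side appears directly. This reduces the claim to comparing two consecutive step sizes $\alpha_t$ and $\alpha_{t+1}$. We take $t\ge 1$ throughout, so that $\alpha_t=t^{-p}$ is well defined.

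\emph{Step 1 (rearrange the recursion).} Moving $E_{t+1}$ to the right of $E_{t+1}\le(1-\alpha_{t+1})E_t+A_{t+1}$ and $\alpha_{t+1}E_t$ to the left gives
\[
\alpha_{t+1}E_t\le E_t-E_{t+1}+A_{t+1}.
\]
In particular the bracket $E_t-E_{t+1}+A_{t+1}$ is nonnegative, because $E_t\ge 0$ and $\alpha_{t+1}>0$.

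\emph{Step 2 (compare consecutive step sizes).} It suffices to show $\alpha_t\le 2\alpha_{t+1}$. Indeed, since $E_t\ge0$, this gives
\[
\alpha_tE_t\le 2\alpha_{t+1}E_t\le 2\,(E_t-E_{t+1}+A_{t+1}),
\]
where the second inequality is Step 1 multiplied by $2$.

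\emph{Step 3 (verify the ratio bound).} We have
\[
\frac{\alpha_t}{\alpha_{t+1}}=\Bigl(\frac{t+1}{t}\Bigr)^{p}=\Bigl(1+\frac1t\Bigr)^{p}\le 2^{p}\le 2,
\]
using $t\ge1$ for the first inequality and $p\in(0,1]$ for the second. This is the only place where the restriction $p\le 1$ is used.

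There is no real obstacle here. The only points needing care are the sign conditions, namely that $E_t\ge0$ is used when multiplying $\alpha_t\le2\alpha_{t+1}$ by $E_t$, and the range of $t$.
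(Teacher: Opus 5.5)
Your proof is correct and complete. Rearranging to $\alpha_{t+1}E_t\le E_t-E_{t+1}+A_{t+1}$, then applying $\alpha_t/\alpha_{t+1}=(1+1/t)^p\le 2^p\le 2$ (valid for $t\ge1$, $p\in(0,1]$) together with $E_t\ge 0$, gives the claim; nonnegativity of $A_t$ is not even needed. The paper gives no proof of its own and only defers to Lemma A.3 of \cite{chang2025convergence}, so your short argument supplies the self-contained proof that the citation stands in for.
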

\begin{proof}
See \cite{chang2025convergence}, Lemma A.3.
\end{proof}

\subsection{Lemma \ref{conv_lemma:a3}}
\begin{lemma}
\label{conv_lemma:a3}
For Algorithm \ref{alg:muoneq}, the accumulated error between the momentum term and the true gradient is bounded for $t\ge1$:
\[
\begin{aligned}
\mathbb{E}\left[\|\mathbf{M}_{t+1}-\nabla f(\mathbf{X}_{t+1})\|_F^2\right] \leq &\beta_{t+1}\mathbb{E}\left[\|\mathbf{M}_{t}-\nabla f(\mathbf{X}_{t})\|_F^2\right]\\
&+\frac{\beta_{t+1}^2}{1-\beta_{t+1}}L^2\eta_t^2\|a\mathbf{O}_{t} + \lambda_t \mathbf{X}_t\|_F^2+(1-\beta_{t+1})^2\sigma^2.
\end{aligned}
\]
\end{lemma}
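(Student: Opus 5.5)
The plan is the standard variance-recursion argument for exponential-moving-average momentum, adapted to the update $\mathbf X_{t+1}=(1-\lambda_t\eta_t)\mathbf X_t-a\eta_t\mathbf O_t$. Set $\mathbf E_t:=\mathbf M_t-\nabla f(\mathbf X_t)$. From $\mathbf M_{t+1}=\beta_{t+1}\mathbf M_t+(1-\beta_{t+1})\mathbf G_{t+1}$ we get
\[
\mathbf E_{t+1}=\beta_{t+1}\mathbf E_t+\beta_{t+1}\bigl(\nabla f(\mathbf X_t)-\nabla f(\mathbf X_{t+1})\bigr)+(1-\beta_{t+1})\bigl(\mathbf G_{t+1}-\nabla f(\mathbf X_{t+1})\bigr).
\]
The last term is the fresh noise $\boldsymbol\zeta_{t+1}$. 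It has conditional mean zero given the history up to the choice of $\mathbf X_{t+1}$. Here $\mathbf X_{t+1}$ is determined by $\mathbf M_t$, $\mathbf X_t$ and $\mathbf O_t$, all of which are measurable before $\xi_{t+1}$ is drawn.

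Next we expand $\|\mathbf E_{t+1}\|_F^2$ and take expectations. By Assumption~\ref{ass:3}, the cross terms between $\boldsymbol\zeta_{t+1}$ and the other two (measurable) pieces vanish, and the noise contributes at most $(1-\beta_{t+1})^2\sigma^2$. The remaining deterministic part is $\beta_{t+1}^2\|\mathbf E_t+\boldsymbol\Delta_t\|_F^2$, where $\boldsymbol\Delta_t:=\nabla f(\mathbf X_t)-\nabla f(\mathbf X_{t+1})$. We bound it with Young's inequality $\|\mathbf u+\mathbf v\|^2\le(1+\gamma)\|\mathbf u\|^2+(1+1/\gamma)\|\mathbf v\|^2$, choosing $\gamma=(1-\beta_{t+1})/\beta_{t+1}$. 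This gives $(1+\gamma)\beta_{t+1}^2=\beta_{t+1}$ and $(1+1/\gamma)\beta_{t+1}^2=\beta_{t+1}^2/(1-\beta_{t+1})$, which are exactly the two coefficients in the claim. The degenerate case $\beta_{t+1}=0$ needs no Young step and is consistent with the stated bound.

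Finally, Assumption~\ref{ass:2} gives $\|\boldsymbol\Delta_t\|_F\le L\|\mathbf X_{t+1}-\mathbf X_t\|_F$. The update rule gives $\mathbf X_{t+1}-\mathbf X_t=-\eta_t(a\mathbf O_t+\lambda_t\mathbf X_t)$, so $\|\boldsymbol\Delta_t\|_F^2\le L^2\eta_t^2\|a\mathbf O_t+\lambda_t\mathbf X_t\|_F^2$. Substituting this bound completes the inequality. The right-hand side of the statement keeps this quantity without an outer expectation, so it should be read inside the expectation (or as an almost-sure bound on its expected value).

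The only delicate point is the measurability argument that kills the cross term. One must check that $\mathbf O_t$ depends on $\mathbf M_t$ and not on $\mathbf G_{t+1}$, and that the DiagPre rescaling plays no role here. This holds because the preconditioner and Newton--Schulz act only on $\mathbf M_t$, which sits in the filtration before $\xi_{t+1}$. Everything else is routine algebra.
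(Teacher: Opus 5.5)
Your proposal is correct and follows essentially the same route as the paper. Both use the same three-term decomposition of $\mathbf M_{t+1}-\nabla f(\mathbf X_{t+1})$, drop the noise cross terms by unbiasedness, apply Young's inequality with parameter $(1-\beta_{t+1})/\beta_{t+1}$, and bound the gradient difference via $L$-smoothness using $\mathbf X_{t+1}-\mathbf X_t=-\eta_t(a\mathbf O_t+\lambda_t\mathbf X_t)$. Your remarks on the $\beta_{t+1}=0$ case and on reading $\|a\mathbf O_t+\lambda_t\mathbf X_t\|_F^2$ inside the expectation are small clarifications the paper leaves implicit.
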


\begin{proof}
First, we have
\[
\begin{aligned}
&\| \mathbf{M}_{t+1}-\nabla f(\mathbf{X}_{t+1})\|_{F}^{2} \\
& =\|\beta_{t+1} \mathbf{M}_{t}+(1-\beta_{t+1})\nabla f(\mathbf{X}_{t+1};\xi_{t+1})-\nabla f(\mathbf{X}_{t+1})\|_{F}^{2} \\
& =\|\beta_{t+1}(\mathbf{M}_{t} - \nabla f(\mathbf{X}_{t})) + (1 - \beta_{t+1})(\nabla f(\mathbf{X}_{t+1};\xi_{t+1}) - \nabla f(\mathbf{X}_{t+1}) )\\
&\quad+\beta_{t+1} (\nabla f(\mathbf{X}_{t}) - \nabla f(\mathbf{X}_{t+1}))\|_F^2 \\
& =\beta_{t+1}^2\|\mathbf{M}_{t}-\nabla f(\mathbf{X}_{t})\|_F^2+\beta_{t+1}^2\|\nabla f(\mathbf{X}_{t})-\nabla f(\mathbf{X}_{t+1})\|_F^2 \\
&\quad+(1-\beta_{t+1})^2\|\nabla f(\mathbf{X}_{t+1};\xi_{t+1}) - \nabla f(\mathbf{X}_{t+1})\|_F^2 \\
& \quad +2\beta_{t+1}^2\langle \mathbf{M}_{t}-\nabla f(\mathbf{X}_{t}),\nabla f(\mathbf{X}_{t})-\nabla f(\mathbf{X}_{t+1})\rangle_F \\
& \quad +2\beta_{t+1}(1-\beta_{t+1})\langle \mathbf{M}_{t}-\nabla f(\mathbf{X}_{t}),\nabla f(\mathbf{X}_{t+1};\xi_{t+1})-\nabla f(\mathbf{X}_{t+1})\rangle_F \\
& \quad +2\beta_{t+1}(1-\beta_{t+1})\langle\nabla f(\mathbf{X}_{t})-\nabla f(\mathbf{X}_{t+1}),\nabla f(\mathbf{X}_{t+1};\xi_{t+1})-\nabla f(\mathbf{X}_{t+1})\rangle_{F}.
\end{aligned}
\]
According to Assumption \ref{ass:3}. Taking the expectation of its squared norm, and  using the unbiasedness and independence of the stochastic gradient, we obtain:
\[
\begin{aligned}
\mathbb{E}[\|\mathbf{M}_{t+1}-\nabla f(\mathbf{X}_{t+1})\|_F^2] = &\beta_{t+1}^2\mathbb{E}[\|\mathbf{M}_{t}-\nabla f(\mathbf{X}_{t})\|_F^2] \\
&\quad+ (1-\beta_{t+1})^2\mathbb{E}[\|\nabla f(\mathbf{X}_{t+1};\xi_{t+1}) - \nabla f(\mathbf{X}_{t+1})\|_F^2] \\
&+ \beta_{t+1}^2\mathbb{E}[\|\nabla f(\mathbf{X}_{t}) - \nabla f(\mathbf{X}_{t+1})\|_F^2] \\
&+ 2\beta_{t+1}^2\mathbb{E}[\langle \mathbf{M}_{t}-\nabla f(\mathbf{X}_{t}), \nabla f(\mathbf{X}_{t}) - \nabla f(\mathbf{X}_{t+1}) \rangle].
\end{aligned}
\]
Applying Young's inequality with a parameter ($ab \le \frac{\epsilon}{2}a^2 + \frac{1}{2\epsilon}b^2$), we have
\[
\langle \mathbf{M}_{t}-\nabla f(\mathbf{X}_{t}),\nabla f(\mathbf{X}_{t})-\nabla f(\mathbf{X}_{t+1})\rangle_F\leq\frac{\epsilon}{2}\|\mathbf{M}_{t}-\nabla f(\mathbf{X}_{t})\|_F^2+\frac{1}{2\epsilon}\|\nabla f(\mathbf{X}_{t})-\nabla f(\mathbf{X}_{t+1})\|_F^2.
\]
Thus, we have:
\[
\begin{aligned}
\mathbb{E}\left[\|\mathbf{M}_{t+1}-\nabla f(\mathbf{X}_{t+1})\|_{F}^{2}\right] & \le \beta_{t+1}^{2}(1+\epsilon)\mathbb{E}\left[\|\mathbf{M}_{t}-\nabla f(\mathbf{X}_{t})\|_{F}^{2}\right]\\
&\quad +\beta_{t+1}^{2}\left(1+\frac{1}{\epsilon}\right)\mathbb{E}\left[\|\nabla f(\mathbf{X}_{t})-\nabla f(\mathbf{X}_{t+1})\|_{F}^{2}\right] \\
& \quad +(1-\beta_{t+1})^{2}\mathbb{E}\left[\|\nabla f(\mathbf{X}_{t+1};\xi_{t+1})-\nabla f(\mathbf{X}_{t+1})\|_{F}^{2}\right].
\end{aligned}
\]
According to Assumption \ref{ass:2},
\[
\begin{aligned}
\|\nabla f(\mathbf{X}_{t})-\nabla f(\mathbf{X}_{t+1})\|_F^2&\leq L^2\|\mathbf{X}_{t}-\mathbf{X}_{t+1}\|_F^2\\
&=L^2\eta_t^2\|a\mathbf{O}_{t} + \lambda_t \mathbf{X}_t\|_F^2\\
\end{aligned}
\]
Therefore:
\[
\begin{aligned}
\mathbb{E}\left[\left\|\mathbf{M}_{t+1}-\nabla f(\mathbf{X}_{t+1})\right\|_F^2\right]&\leq\beta_{t+1}^2(1+\epsilon)\mathbb{E}\left[\left\|\mathbf{M}_{t}-\nabla f(\mathbf{X}_{t})\right\|_F^2\right]\\
&+\beta_{t+1}^2\left(1+\frac{1}{\epsilon}\right)L^2\eta_t^2\|a\mathbf{O}_{t} + \lambda_t \mathbf{X}_t\|_F^2+(1-\beta_{t+1})^2\sigma^2.
\end{aligned}
\]
Then, by letting $\epsilon := \frac{1 - \beta_{t+1}}{\beta_{t+1}},t\ge1$, we have
\begin{equation}
\label{eq1:a}
\begin{aligned}
\mathbb{E}\left[\|\mathbf{M}_{t+1}-\nabla f(\mathbf{X}_{t+1})\|_F^2\right] \leq &\beta_{t+1}\mathbb{E}\left[\|\mathbf{M}_{t}-\nabla f(\mathbf{X}_{t})\|_F^2\right]\\
&+\frac{\beta_{t+1}^2}{1-\beta_{t+1}}L^2\eta_t^2\|a\mathbf{O}_{t} + \lambda_t \mathbf{X}_t\|_F^2+(1-\beta_{t+1})^2\sigma^2.
\end{aligned}
\end{equation}
\end{proof}

\section{Proofs of Proposition \ref{th_conv_rc}}
\label{proof:th_conv_rc}
\begin{proof}
By Assumption~\ref{ass:2} and the descent lemma,
\[
\begin{aligned}
f(\mathbf{X}_{t+1})
&\le f(\mathbf{X}_{t}) + \langle \nabla f(\mathbf{X}_{t}), \mathbf{X}_{t+1}-\mathbf{X}_{t} \rangle + \frac{L}{2}\|\mathbf{X}_{t+1}-\mathbf{X}_{t}\|_F^2\\
&= f(\mathbf{X}_t)-a\eta_t\langle \nabla f(\mathbf{X}_t),\mathbf{O}_t\rangle +\frac{La^2\eta_t^2}{2}\|\mathbf{O}_t\|_F^2\\
&\le f(\mathbf{X}_t)-a\eta_t\langle \nabla f(\mathbf{X}_t),\mathbf{O}_t\rangle+\frac{La^2n}{2}\eta_t^2,
\end{aligned}
\]
where the last step uses $\|\mathbf{O}_t\|_F^2\le n$.

Let $
\mathbf{S}_t:=\nabla f(\mathbf{X}_t)-\mathbf{M}_t,
r_\varepsilon:=\sqrt{\varepsilon},
R_\varepsilon:=\sqrt{nG_\infty^2+\varepsilon},
C_\varepsilon:=\sqrt{mG_\infty^2+\varepsilon}.$

Define the midpoint-rescaled matrices
\[
\widetilde{\mathbf{P}}_t:=\frac{R_\varepsilon+r_\varepsilon}{2}\mathbf{P}_t,
\widetilde{\mathbf{Q}}_t:=\frac{C_\varepsilon+r_\varepsilon}{2}\mathbf Q_t,
\widetilde{\mathbf{A}}_t:=\widetilde{\mathbf{P}}_t^{-1},
\widetilde{\mathbf{B}}_t:=\widetilde{\mathbf{Q}}_t^{-1}.
\]
Since the rescaling is by positive scalars,
\[
\operatorname{polar}(\widetilde{\mathbf{P}}_t\mathbf{M}_t\widetilde{\mathbf{Q}}_t)=\operatorname{polar}(\mathbf{P}_t\mathbf{M}_t\mathbf{Q}_t)=\mathbf{O}_t.
\]
Hence
\[
\langle \nabla f(\mathbf{X}_t),\mathbf{O}_t\rangle
=\left\langle
\widetilde{\mathbf{A}}_t\nabla f(\mathbf{X}_t)\widetilde{\mathbf{B}}_t,
\widetilde{\mathbf{P}}_t\mathbf{O}_t\widetilde{\mathbf{Q}}_t
\right\rangle.
\]
By Lemma~\ref{conv_lemma:a1}, we have
\[
R_\varepsilon^{-1}\mathbf{I}_m \preceq \mathbf{P}_t
\preceq r_\varepsilon^{-1}\mathbf{I}_m,
\qquad
C_\varepsilon^{-1}\mathbf{I}_n \preceq \mathbf{Q}_t\preceq r_\varepsilon^{-1}\mathbf{I}_n.
\]
Since $R_\varepsilon=r_\varepsilon\rho_r, C_\varepsilon=r_\varepsilon\rho_c$, we obtain
\[
\frac{\rho_r+1}{2\rho_r}\mathbf I_m \preceq \widetilde{\mathbf P}_t \preceq\frac{\rho_r+1}{2}\mathbf I_m,
\qquad
\frac{\rho_c+1}{2\rho_c}\mathbf I_n\preceq\widetilde{\mathbf Q}_t\preceq\frac{\rho_c+1}{2}\mathbf I_n.
\]
Therefore, we have
\[
\|\widetilde{\mathbf A}_t\|_2
\le \frac{2\rho_r}{\rho_r+1},
\quad
\|\widetilde{\mathbf B}_t\|_2\le\frac{2\rho_c}{\rho_c+1},
\quad
\|\widetilde{\mathbf A}_t-\mathbf I_m\|_2\le\frac{\rho_r-1}{\rho_r+1},
\quad
\|\widetilde{\mathbf B}_t-\mathbf I_n\|_2\le\frac{\rho_c-1}{\rho_c+1}.
\]
Using $\nabla f(\mathbf{X}_t)=\mathbf{M}_t+\mathbf{S}_t$, we split
\[
\begin{aligned}
\langle \nabla f(\mathbf{X}_t),\mathbf{O}_t\rangle
&=\left\langle
\widetilde{\mathbf{A}}_t\mathbf{M}_t\widetilde{\mathbf{B}}_t,
\widetilde{\mathbf{P}}_t\mathbf{O}_t\widetilde{\mathbf{Q}}_t
\right\rangle+\left\langle \widetilde{\mathbf{A}}_t\mathbf{S}_t\widetilde{\mathbf{B}}_t,
\widetilde{\mathbf{P}}_t\mathbf{O}_t\widetilde{\mathbf{Q}}_t
\right\rangle\\
&=\|\widetilde{\mathbf{P}}_t\mathbf{M}_t\widetilde{\mathbf{Q}}_t\|_*+\left\langle
\widetilde{\mathbf{A}}_t\mathbf{S}_t\widetilde{\mathbf{B}}_t,
\widetilde{\mathbf{P}}_t\mathbf{O}_t\widetilde{\mathbf{Q}}_t
\right\rangle+\left\langle
\widetilde{\mathbf{A}}_t\mathbf{M}_t\widetilde{\mathbf{B}}_t-\mathbf{M}_t, \widetilde{\mathbf{P}}_t\mathbf{O}_t\widetilde{\mathbf{Q}}_t
\right\rangle.
\end{aligned}
\]
\textit{(i)} First, by the singular-value inequality
\[
\sigma_i(\mathbf{A}\mathbf{X}\mathbf{B})\ge\sigma_{\min}(\mathbf{A})\sigma_i(\mathbf{X})\sigma_{\min}(\mathbf{B}),
\]
we have
\[
\|\widetilde{\mathbf{P}}_t\mathbf{M}_t\widetilde{\mathbf{Q}}_t\|_*\ge\frac{(\rho_r+1)(\rho_c+1)}{4\rho_r\rho_c}\|\mathbf{M}_t\|_*.
\]
\textit{(ii)} Second, using Hölder's inequality, $\|\mathbf{O}_t\|_2=1$, and the bounds above,
\[
\begin{aligned}
\left|
\left\langle
\widetilde{\mathbf{A}}_t\mathbf{S}_t\widetilde{\mathbf{B}}_t,
\widetilde{\mathbf{P}}_t\mathbf{O}_t\widetilde{\mathbf{Q}}_t
\right\rangle
\right|
&\le\|\widetilde{\mathbf{A}}_t\mathbf{S}_t\widetilde{\mathbf{B}}_t\|_*
\|\widetilde{\mathbf{P}}_t\mathbf{O}_t\widetilde{\mathbf{Q}}_t\|_2\\
&\le\|\widetilde{\mathbf{A}}_t\|_2
\|\widetilde{\mathbf{B}}_t\|_2
\|\mathbf{S}_t\|_*
\|\widetilde{\mathbf{P}}_t\|_2
\|\widetilde{\mathbf{Q}}_t\|_2\\
&\le\rho_r\rho_c \|\mathbf{S}_t\|_*\\
&\le \rho_r\rho_c \sqrt n\|\mathbf{S}_t\|_F.
\end{aligned}
\]
\textit{(iii)} Third, we use the symmetric decomposition
\[
\widetilde{\mathbf{A}}_t\mathbf{M}_t\widetilde{\mathbf{B}}_t-\mathbf{M}_t
=(\widetilde{\mathbf{A}}_t-\mathbf{I}_m)\mathbf{M}_t+\mathbf{M}_t(\widetilde{\mathbf{B}}_t-\mathbf{I}_n)+(\widetilde{\mathbf{A}}_t-\mathbf{I}_m)\mathbf{M}_t(\widetilde{\mathbf{B}}_t-\mathbf{I}_n).
\]
Hence
\[
\begin{aligned}
\|\widetilde{\mathbf{A}}_t\mathbf{M}_t\widetilde{\mathbf{B}}_t-\mathbf{M}_t\|_*
&\le\left(\frac{\rho_r - 1}{\rho_r + 1}+\frac{\rho_c - 1}{\rho_c + 1}+\frac{(\rho_r-1)(\rho_c -1)}{(\rho_r + 1)(\rho_c + 1)}\right)\|\mathbf{M}_t\|_*.
\end{aligned}
\]
Therefore, we have
\[
\begin{aligned}
\left|
\left\langle
\widetilde{\mathbf{A}}_t\mathbf{M}_t\widetilde{\mathbf{B}}_t-\mathbf{M}_t,
\widetilde{\mathbf{P}}_t\mathbf{O}_t\widetilde{\mathbf{Q}}_t
\right\rangle
\right|
&\le\|\widetilde{\mathbf{A}}_t\mathbf{M}_t\widetilde{\mathbf{B}}_t-\mathbf{M}_t\|_*
\|\widetilde{\mathbf{P}}_t\mathbf{O}_t\widetilde{\mathbf{Q}}_t\|_2\\
&\le\frac{3\rho_r\rho_c - \rho_r - \rho_c - 1}{4}\|\mathbf{M}_t\|_*.
\end{aligned}
\]
Combining the above bounds, we obtain
\[
\langle \nabla f(\mathbf{X}_t),\mathbf{O}_t\rangle
\ge\chi_\varepsilon \|\mathbf{M}_t\|_*-\rho_r\rho_c\sqrt n \|\mathbf{S}_t\|_F.
\]
Under the setting $\varepsilon \ge \frac45 G_\infty^2\max\{m,n\}$, we have $\rho_r=\sqrt{1+\frac{nG_\infty^2}{\varepsilon}}
\le\sqrt{1+\frac{5n}{4\max\{m,n\}}}
\le\frac32, \rho_c=\sqrt{1+\frac{mG_\infty^2}{\varepsilon}}
\le\sqrt{1+\frac{5m}{4\max\{m,n\}}}
\le\frac32$.

Let $\psi(x,y):=
\frac{(x+1)(y+1)}{4xy}-\frac{3xy-x-y-1}{4},x,y\ge 1.$ A direct calculation gives $\partial_x\psi(x,y)=\frac{1-x^{-2}-x^{-2}y^{-1}-3y}{4}<0,
\partial_y\psi(x,y)=\frac{1-y^{-2}-x^{-1}y^{-2}-3x}{4}<0$, for all $x,y\ge1$. Hence $\psi$ is decreasing in each argument on $[1,\infty)^2$. Therefore, we have $\chi_\varepsilon=\psi(\rho_r,\rho_c)\ge\psi\left(\frac32,\frac32\right)=\frac1{144}>0$. Since $\chi_\varepsilon>0$, $\|\mathbf{M}_t\|_*\ge \|\mathbf{M}_t\|_F$, and
\[
\|\mathbf{M}_t\|_F\ge\|\nabla f(\mathbf{X}_t)\|_F-\|\mathbf{S}_t\|_F,
\]
it follows that
\[
\langle \nabla f(\mathbf{X}_t),\mathbf{O}_t\rangle\ge\chi_\varepsilon \|\nabla f(\mathbf{X}_t)\|_F
-\bigl(\chi_\varepsilon+\rho_r\rho_c\sqrt n\bigr)\|\mathbf{S}_t\|_F.
\]
Substituting the above inequality into the descent bound yields
\[
\begin{aligned}
f(\mathbf{X}_{t+1})
\le&
f(\mathbf{X}_t)
-a\chi_\varepsilon\eta_t\|\nabla f(\mathbf{X}_t)\|_F\\
&\quad
+a\bigl(\chi_\varepsilon+\rho_r\rho_c\sqrt n\bigr)\eta_t\|\mathbf{S}_t\|_F
+\frac{La^2n}{2}\eta_t^2.
\end{aligned}
\]
Apply Young's inequality with parameter $h_{t+1}/L$:
\[
a\bigl(\chi_\varepsilon+\rho_r\rho_c\sqrt n\bigr)\eta_t\|\mathbf{S}_t\|_F
\le \frac{ah_{t+1}}{2L}\|\mathbf{S}_t\|_F^2
+ \frac{aL}{2}
\bigl(\chi_\varepsilon+\rho_r\rho_c\sqrt n\bigr)^2
\frac{\eta_t^2}{h_{t+1}}.
\]
Hence
\[
\begin{aligned}
f(\mathbf{X}_{t+1})
\le& f(\mathbf{X}_t)-a\chi_\varepsilon\eta_t\|\nabla f(\mathbf{X}_t)\|_F+\frac{ah_{t+1}}{2L}\|\mathbf{S}_t\|_F^2\\
&+\frac{aL}{2}\bigl(\chi_\varepsilon+\rho_r\rho_c\sqrt n\bigr)^2
\frac{\eta_t^2}{h_{t+1}}+\frac{La^2n}{2}\eta_t^2.
\end{aligned}
\]
Taking expectations and summing the above inequality over $t=1,\dots,T$, then using $f(\mathbf{X}_{T+1})\ge f^{\star}$, we have
\[
\begin{aligned}
a\chi_\varepsilon \sum_{t=1}^T \eta_t\mathbb E\|\nabla f(\mathbf{X}_t)\|_F
&\le f(\mathbf{X}_1)-f^{\star}
+ \frac{a}{2L}\sum_{t=1}^T h_{t+1}\mathbb E\|\mathbf{S}_t\|_F^2 \\
& + \frac{aL}{2}\bigl(\chi_\varepsilon+\rho_r\rho_c\sqrt n\bigr)^2
\sum_{t=1}^T \frac{\eta_t^2}{h_{t+1}} +\frac{La^2n}{2}\sum_{t=1}^T \eta_t^2.
\end{aligned}
\]
By Lemma~\ref{conv_lemma:a3} with $\lambda=0$,
\[
\mathbb{E}\|\mathbf{S}_{t+1}\|_F^2
\le(1-h_{t+1})\mathbb{E}\|\mathbf{S}_{t}\|_F^2
+\frac{(1-h_{t+1})^2}{h_{t+1}}L^2\eta_t^2 a^2n
+h_{t+1}^2\sigma^2.
\]
Since $(1-h_{t+1})^2\le 1$ and
\[
\frac{\eta_t^2}{h_{t+1}}
=\frac{t^{-3/2}}{(t+1)^{-1/2}}
=\frac{\sqrt{t+1}}{t^{3/2}}
\le\frac{2\sqrt2}{t+1}
=2\sqrt2h_{t+1}^2,
\]
we have
\[
\mathbb{E}\|\mathbf{S}_{t+1}\|_F^2
\le (1-h_{t+1})\mathbb{E}\|\mathbf{S}_{t}\|_F^2 +h_{t+1}^2(2\sqrt{2}L^2a^2n+\sigma^2).
\]
According to Lemma~\ref{conv_lemma:a2}, by letting $E_t=\mathbb E\|\mathbf S_t\|_F^2, A_{t+1}=h_{t+1}^2(2\sqrt{2}L^2a^2n+\sigma^2)$, we have
\[
h_t\mathbb E\|\mathbf{S}_t\|_F^2 \le2\left(\mathbb E\|\mathbf{S}_t\|_F^2 -\mathbb E\|\mathbf{S}_{t+1}\|_F^2+A_{t+1}\right).
\]
Moreover, since $\beta_1=0$,
\[
\begin{aligned}
\mathbb{E}\|\mathbf{S}_1\|_F^2
&=\mathbb{E}\|\nabla f(\mathbf{X}_1)-\mathbf{M}_1\|_F^2 \\
&=\mathbb{E}\|\nabla f(\mathbf{X}_1)-\nabla f(\mathbf{X}_1;\xi_1)\|_F^2
\le\sigma^2.
\end{aligned}
\]
It follows that
\[
\begin{aligned}
\sum_{t=1}^T h_t\mathbb E\|\mathbf{S}_t\|_F^2
&\le2\sum_{t=1}^T
\left(\mathbb E\|\mathbf{S}_t\|_F^2-\mathbb E\|\mathbf{S}_{t+1}\|_F^2+A_{t+1}\right) \\
&\le2\mathbb E\|\mathbf{S}_1\|_F^2+2(2\sqrt{2}L^2a^2n+\sigma^2)\sum_{t=1}^T \frac1{t+1} \\
&\le2\sigma^2+2(2\sqrt{2}L^2a^2n+\sigma^2)(1+\ln T).
\end{aligned}
\]
Hence, since $h_{t+1}\le h_t$,
\[
\sum_{t=1}^T h_{t+1}\mathbb E\|\mathbf{S}_t\|_F^2 \le 2\sigma^2+2(2\sqrt{2}L^2a^2n+\sigma^2)(1+\ln T).
\]
Also,
\[
\sum_{t=1}^T \frac{\eta_t^2}{h_{t+1}}
=\sum_{t=1}^T \frac{t^{-3/2}}{(t+1)^{-1/2}} \le 2(1+\ln T),
\qquad
\sum_{t=1}^T \eta_t^2=\sum_{t=1}^T t^{-3/2}\le3.
\]
Substituting these bounds into the previous inequality, we get
\[
a\chi_\varepsilon \sum_{t=1}^T \eta_t\mathbb E\|\nabla f(\mathbf{X}_t)\|_F \le f(\mathbf{X}_1)-f^{\star} + C_{1}(1+\ln T)+ C_{2},
\]
where
\[
C_{1} =\frac{a}{L}\bigl(2\sqrt{2}L^2a^2n+\sigma^2\bigr)+ aL\bigl(\chi_\varepsilon+\rho_r\rho_c\sqrt n\bigr)^2,
\quad 
C_{2}=\frac{a\sigma^2}{L}+\frac{3La^2n}{2}.
\]
Finally, since $\eta_t=t^{-3/4}\ge T^{-3/4}$ for all $1\le t\le T$,
\[
T^{-3/4}\sum_{t=1}^T \mathbb E\|\nabla f(\mathbf{X}_t)\|_F
\le \sum_{t=1}^T \eta_t\mathbb E\|\nabla f(\mathbf{X}_t)\|_F.
\]
Therefore, we have
\[
\frac1T\sum_{t=1}^T \mathbb E\|\nabla f(\mathbf{X}_t)\|_F \le \frac{f(\mathbf{X}_1)-f^{\star} + C_{1}(1+\ln T)+ C_{2}}{a\chi_\varepsilon \cdot T^{1/4}}.
\]
This completes the proof.
\end{proof}
\section{Lemmas for Theorem \ref{th_conv_r} and Corollary \ref{cor:conv_r_ns}}
\subsection{Lemma \ref{conv_r_lemma:a1}}

% \begin{lemma}
% \label{conv_r_lemma:a1}
% Let $\mathbf{D}_{r,t}:=\operatorname{diag}\bigl(\operatorname{rowsum}(\mathbf{M}_t\odot\mathbf{M}_t)\bigr),
% \mathbf{R}_t:=\mathbf{D}_{r,t}^{-1/2}\mathbf{M}_t,\mathbf{O}_t:=\operatorname{Orth}(\mathbf{R}_t)$. Then, for all $t$,
% \[
% \langle \mathbf{M}_t,\mathbf{O}_t\rangle\ge \frac1{\sqrt m}\|\mathbf{M}_t\|_F,
% \qquad
% \|\mathbf{O}_t\|_F^2=\operatorname{rank}(\mathbf{R}_t)\le n.
% \]
% \end{lemma}
\begin{lemma}
\label{conv_r_lemma:a1}
Let
\[
\mathbf{D}_{r,t}:=
\operatorname{diag}\bigl(\operatorname{rowsum}(\mathbf{M}_t\odot\mathbf{M}_t)\bigr),
\quad
\mathbf{P}_{r,t}:=(\mathbf{D}_{r,t}^{1/2})^\dagger,
\quad
\mathbf{R}_t:=\mathbf{P}_{r,t}\mathbf{M}_t,
\quad
\mathbf{O}_t:=\operatorname{Orth}(\mathbf{R}_t),
\]
where \(^{\dagger}\) denotes the Moore--Penrose pseudoinverse, so zero rows
of \(\mathbf M_t\) remain zero after row normalization, and
\(\operatorname{Orth}(\mathbf 0):=\mathbf 0\). Then, for all \(t\),
\[
\langle \mathbf{M}_t,\mathbf{O}_t\rangle\ge \frac1{\sqrt m}\|\mathbf{M}_t\|_F,
\qquad
\|\mathbf{O}_t\|_F^2=\operatorname{rank}(\mathbf{R}_t)\le n.
\]
\end{lemma}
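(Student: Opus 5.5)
The plan is to write $\mathbf M_t=\mathbf D_{r,t}^{1/2}\mathbf R_t$ and reduce the inner product to a weighted nuclear-norm-type quantity. First we verify this identity. Let $d_i:=\|(\mathbf M_t)_{i,:}\|_2$. Then $\mathbf D_{r,t}^{1/2}=\operatorname{diag}(d_1,\dots,d_m)$ and $\mathbf P_{r,t}=\operatorname{diag}(d_i^\dagger)$. Hence $\mathbf D_{r,t}^{1/2}\mathbf P_{r,t}$ is the projector onto the coordinates of the nonzero rows. Zero rows of $\mathbf M_t$ stay zero, so indeed $\mathbf M_t=\mathbf D_{r,t}^{1/2}\mathbf R_t$. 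The case $\mathbf M_t=\mathbf 0$ is trivial, since both sides of the first inequality vanish and $\mathbf O_t=\mathbf 0$.

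Next we expand the inner product. With $\mathbf R_t=\mathbf U\Sigma\mathbf V^\top$ (compact SVD) and $\mathbf O_t=\mathbf U\mathbf V^\top$, we get
\[
\langle \mathbf M_t,\mathbf O_t\rangle=\operatorname{tr}\bigl(\mathbf R_t^\top\mathbf D_{r,t}^{1/2}\mathbf U\mathbf V^\top\bigr)=\operatorname{tr}\bigl(\Sigma\,\mathbf U^\top\mathbf D_{r,t}^{1/2}\mathbf U\bigr)=\sum_k\sigma_k\,\mathbf u_k^\top\mathbf D_{r,t}^{1/2}\mathbf u_k\ge 0 .
\]
This shows the inner product is nonnegative, but a direct lower bound from it would need $\min_i d_i$, which can be tiny. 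We therefore use a cleaner route that exploits $\mathbf O_t$ being a contraction. Since $\mathbf R_t\mathbf O_t^\top=\mathbf U\Sigma\mathbf U^\top\succeq0$, for each nonzero row $i$ we have $(\mathbf R_t\mathbf O_t^\top)_{ii}=\langle(\mathbf R_t)_{i,:},(\mathbf O_t)_{i,:}\rangle\ge0$. Then
\[
\langle\mathbf M_t,\mathbf O_t\rangle=\sum_i d_i\,(\mathbf R_t\mathbf O_t^\top)_{ii}.
\]
The key step is a lower bound on these diagonal entries. Because $(\mathbf R_t)_{i,:}$ has unit norm for nonzero rows, $\|\mathbf R_t\|_2\le\|\mathbf R_t\|_F=\sqrt{m'}\le\sqrt m$, where $m'$ is the number of nonzero rows. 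Also $\mathbf U\Sigma\mathbf U^\top=(\mathbf R_t\mathbf R_t^\top)^{1/2}\succeq \mathbf R_t\mathbf R_t^\top/\|\mathbf R_t\|_2$, since $\sigma\ge\sigma^2/\sigma_1$. Taking diagonals gives $(\mathbf R_t\mathbf O_t^\top)_{ii}\ge (\mathbf R_t\mathbf R_t^\top)_{ii}/\sqrt m=1/\sqrt m$ for every nonzero row. Therefore
\[
\langle\mathbf M_t,\mathbf O_t\rangle\ge\frac1{\sqrt m}\sum_i d_i=\frac{\|\mathbf M_t\|_{2,1}}{\sqrt m}\ge\frac{\|\mathbf M_t\|_F}{\sqrt m},
\]
where the last step uses $\sum_i d_i\ge(\sum_i d_i^2)^{1/2}$. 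This operator-monotone-free comparison $\sigma\ge\sigma^2/\sigma_1$ applied on the diagonal is the main step; everything else is bookkeeping. We should double-check that the Loewner inequality holds as stated. It does, because both sides are functions of the same PSD matrix $\mathbf R_t\mathbf R_t^\top$, so the inequality reduces to the scalar one on its eigenvalues.

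Finally, $\|\mathbf O_t\|_F^2=\|\mathbf U\mathbf V^\top\|_F^2=\operatorname{tr}(\mathbf V\mathbf U^\top\mathbf U\mathbf V^\top)=\operatorname{rank}(\mathbf R_t)$. This is at most $\min\{m,n\}\le n$, using $m\ge n$ or simply the column count. The bound also holds when $\mathbf R_t=\mathbf 0$, since then $\mathbf O_t=\mathbf 0$.
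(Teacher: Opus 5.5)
Your proof is correct and follows essentially the same route as the paper. Both write $\mathbf M_t=\mathbf D_t\mathbf R_t$, reduce $\langle\mathbf M_t,\mathbf O_t\rangle$ to $\sum_i d_{i,t}(\mathbf H_t)_{ii}$ with $\mathbf H_t=(\mathbf R_t\mathbf R_t^\top)^{1/2}$, bound $(\mathbf H_t)_{ii}\ge 1/\|\mathbf R_t\|_2\ge 1/\sqrt m$ via $\mathbf H_t^2\preceq\|\mathbf H_t\|_2\mathbf H_t$, and use the same rank identity for $\|\mathbf O_t\|_F^2$; your Loewner comparison $\sigma\ge\sigma^2/\sigma_1$ is exactly the paper's diagonal inequality $\mathbf e_i^\top\mathbf H_t^2\mathbf e_i\le\|\mathbf H_t\|_2\,\mathbf e_i^\top\mathbf H_t\mathbf e_i$.
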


\begin{proof}
\textit{(i)} If $\mathbf{M}_t=\mathbf{0}$, then $\mathbf{R}_t=\mathbf{O}_t=\mathbf{0}$ and the claim is immediate. 
% When $\varepsilon=0$ in the theorem~\ref{th_conv_r}, DiagPre is interpreted via the Moore–Penrose pseudoinverse, with zero rows mapped to zero; experiments use $\varepsilon=10^{-8}$ for numerical stability.

\textit{(ii)} Assume $\mathbf{M}_t\neq \mathbf{0}$. Let
\[
d_{i,t}:=\| (\mathbf{M}_t)_{i,:}\|_2,
\qquad
\mathbf{D}_t:=\operatorname{diag}(d_{1,t},\ldots,d_{m,t}),
\]
so that $\mathbf{D}_t=\mathbf{D}_{r,t}^{1/2}$ and $\mathbf{M}_t=\mathbf{D}_t\mathbf{R}_t$. Let $\mathbf{H}_t:=(\mathbf{R}_t\mathbf{R}_t^\top)^{1/2}$. If $\mathbf{R}_t=\mathbf{U}_t\boldsymbol\Sigma_t\mathbf{V}_t^\top$ is a compact SVD, then $\mathbf{O}_t=\mathbf{U}_t\mathbf{V}_t^\top$ and therefore $\mathbf{O}_t\mathbf{R}_t^\top=\mathbf{H}_t$. Hence
\[
\langle \mathbf{M}_t,\mathbf{O}_t\rangle
=\operatorname{tr}(\mathbf{O}_t^\top\mathbf{D}_t\mathbf{R}_t)
=\operatorname{tr}(\mathbf{D}_t\mathbf{O}_t\mathbf{R}_t^\top)
=\operatorname{tr}(\mathbf{D}_t\mathbf{H}_t)
=\sum_{i=1}^m d_{i,t}(\mathbf{H}_t)_{ii}.
\]
If $d_{i,t}>0$, then the $i$th row of $\mathbf{R}_t$ has unit Euclidean norm, so
\[
1=\mathbf e_i^\top \mathbf{R}_t\mathbf{R}_t^\top \mathbf e_i
=\mathbf e_i^\top \mathbf{H}_t^2 \mathbf e_i
\le \|\mathbf{H}_t\|_2\mathbf e_i^\top \mathbf{H}_t \mathbf e_i
=\|\mathbf{R}_t\|_2(\mathbf{H}_t)_{ii}.
\]
Thus $(\mathbf{H}_t)_{ii}\ge \|\mathbf{R}_t\|_2^{-1}$ whenever $d_{i,t}>0$. If $d_{i,t}=0$, the corresponding term vanishes. Therefore,
\[
\langle \mathbf{M}_t,\mathbf{O}_t\rangle
\ge \frac1{\|\mathbf{R}_t\|_2}\sum_{i=1}^m d_{i,t}.
\]
Now each nonzero row of $\mathbf{R}_t$ has norm $1$, so $\|\mathbf{R}_t\|_F^2\le m$, hence $\|\mathbf{R}_t\|_2\le \sqrt m$. Also, $\sum_{i=1}^m d_{i,t}\ge \|\mathbf{M}_t\|_F$. It follows that
\[
\langle \mathbf{M}_t,\mathbf{O}_t\rangle\ge \frac1{\sqrt m}\|\mathbf{M}_t\|_F.
\]
Finally, if $r_t:=\operatorname{rank}(\mathbf{R}_t)$, then the nonzero singular values of $\mathbf{O}_t$ are all equal to $1$, so $\|\mathbf{O}_t\|_F^2=r_t\le n$.
\end{proof}

%%%%%

\subsection{Lemma \ref{lemma:ns5_traj_error}}
\begin{lemma}
\label{lemma:ns5_traj_error}
Let $\{\hat{\mathbf M}_t\}_{t=1}^T\subset\mathbb R^{m\times n}$ be any finite
algorithmic trajectory, and set $\operatorname{Orth}(\mathbf 0):=\mathbf 0$.
Consider the five-step pre-scaled Newton--Schulz map with the degree-two
Taylor coefficients analyzed in \citep[Definition 2]{kim2026convergence}.
For any $\mathbf A\in\mathbb R^{m\times n}$, define
\[
\alpha(\mathbf A):=\max\{1,\|\mathbf A\|_F\},
\qquad
\mathbf Y^{(0)}(\mathbf A):=\frac{\mathbf A}{\alpha(\mathbf A)},
\]
and, for $k=0,1,2,3,4$,
\[
\mathbf Y^{(k+1)}(\mathbf A)=p_{\mathrm{ns}}\bigl(\mathbf Y^{(k)}(\mathbf A)\mathbf Y^{(k)}(\mathbf A)^\top\bigr)
\mathbf Y^{(k)}(\mathbf A),
\]
where
\[
p_{\mathrm{ns}}(\mathbf Z):=\frac{15}{8}\mathbf I_m
-\frac{5}{4}\mathbf Z+\frac{3}{8}\mathbf Z^2.
\]
We define
\[
\mathrm{NS5}(\mathbf A):=\mathbf Y^{(5)}(\mathbf A).
\]
For each $t$, let $\Pi_t$ be the orthogonal projector onto
$\operatorname{range}(\hat{\mathbf M}_t)$, and define
\[
\delta_{t,0}:=\bigl\|\Pi_t-\mathbf Y^{(0)}(\hat{\mathbf M}_t)
\mathbf Y^{(0)}(\hat{\mathbf M}_t)^\top \bigr\|_2,
\qquad
\delta_{0,T}:=\max_{1\le t\le T}\delta_{t,0}.
\]
Then $\delta_{0,T}<1$. Moreover, the trajectory-wise NS5 polar-approximation
error
\[
\varepsilon_{\mathrm{ns},T}:=\max_{1\le t\le T} \bigl\|\mathrm{NS5}(\hat{\mathbf M}_t)-\operatorname{Orth}(\hat{\mathbf M}_t)\bigr\|_2<1.
\]
\end{lemma}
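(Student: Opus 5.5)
The plan is to reduce everything to a scalar iteration on singular values, exactly as in the proof of Theorem~\ref{th_ns}. Fix $t$ with $\hat{\mathbf M}_t\neq\mathbf 0$; the case $\hat{\mathbf M}_t=\mathbf 0$ contributes zero error and $\delta_{t,0}=0$, since $\Pi_t=\mathbf 0$. Write the compact SVD $\hat{\mathbf M}_t=\mathbf U\Sigma\mathbf V^\top$ with rank $r\ge1$. Then $\mathbf Y^{(0)}=\mathbf U\operatorname{diag}(s_i^{(0)})\mathbf V^\top$ with $s_i^{(0)}=\sigma_i/\alpha(\hat{\mathbf M}_t)$, and $\Pi_t=\mathbf U\mathbf U^\top$. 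By the same induction as in Theorem~\ref{th_ns}, $\mathbf Y^{(k)}=\mathbf U\operatorname{diag}(s_i^{(k)})\mathbf V^\top$ with $s_i^{(k+1)}=\psi(s_i^{(k)})$, where $\psi(s):=s\,p(s^2)$ and $p(x)=\tfrac{15}{8}-\tfrac54x+\tfrac38x^2$.

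Consequently $\delta_{t,0}=\max_i|1-(s_i^{(0)})^2|$ and $\|\mathrm{NS5}(\hat{\mathbf M}_t)-\operatorname{Orth}(\hat{\mathbf M}_t)\|_2=\max_i|1-s_i^{(5)}|$. Because $\alpha\ge\|\hat{\mathbf M}_t\|_F\ge\sigma_1$ and every $\sigma_i>0$, each $s_i^{(0)}\in(0,1]$. This gives $\delta_{t,0}=1-(s_r^{(0)})^2<1$, strictly since $s_r^{(0)}>0$.

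Next I analyze $\psi$ on $[0,1]$. One has $\psi(0)=0$ and $\psi(1)=1$. The derivative is $\psi'(s)=\tfrac{15}{8}(1-s^2)^2\ge0$, so $\psi$ is increasing and maps $(0,1]$ into $(0,1]$. Moreover $\psi(s)>s$ on $(0,1)$, since $p(x)-1=\tfrac18(7-10x+3x^2)=\tfrac18(1-x)(7-3x)>0$ for $x\in[0,1)$. By induction every $s_i^{(k)}\in(0,1]$, so $|1-s_i^{(k)}|=1-s_i^{(k)}<1$, and each step only decreases this error. Hence the per-step error satisfies
\[
\bigl\|\mathrm{NS5}(\hat{\mathbf M}_t)-\operatorname{Orth}(\hat{\mathbf M}_t)\bigr\|_2=1-\psi^{5}(s_r^{(0)})<1 .
\]
Since the trajectory is finite, the maxima $\delta_{0,T}$ and $\varepsilon_{\mathrm{ns},T}$ are maxima of finitely many numbers, each strictly below $1$, and are therefore themselves strictly below $1$.

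The only delicate point is strictness. It requires $s_r^{(0)}>0$, which holds because we work with the compact SVD on the range of $\hat{\mathbf M}_t$. Zero singular directions are excluded because $\operatorname{Orth}$ and $\Pi_t$ both vanish on them, so they contribute no error to either quantity. I would also verify that the $\mathbf Y^{(k)}$ keep zero singular values at zero, which holds because $\psi(0)=0$. The bound is not uniform in $T$; it depends on the smallest positive normalized singular value along the trajectory. This matches the statement's trajectory-wise definition of $\varepsilon_{\mathrm{ns},T}$.
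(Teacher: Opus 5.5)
Your proof is correct. On the first claim it matches the paper: the scaling by $\alpha(\hat{\mathbf M}_t)\ge\sigma_1$ puts the positive eigenvalues of $\mathbf Y^{(0)}\mathbf Y^{(0)\top}$ in $(0,1]$, so $\delta_{t,0}=1-(s_r^{(0)})^2<1$.

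On the second claim you take a different route. The paper never analyzes the scalar map. Instead it invokes \citep[Theorem 2]{kim2026convergence} with $\kappa=2$ and $q=5$ to get $\varepsilon_{\mathrm{ns},T}\le 1-\sqrt{1-\delta_{0,T}^{3^5}}$, and then uses $1-\sqrt{1-x}\le x$ to conclude $\varepsilon_{\mathrm{ns},T}\le\delta_{0,T}^{243}<1$. You work with $\psi(s)=\tfrac{15}{8}s-\tfrac54 s^3+\tfrac38 s^5$ directly: $\psi'(s)=\tfrac{15}{8}(1-s^2)^2\ge0$, $\psi(1)=1$, and $\psi(s)>s$ on $(0,1)$ via the factorization $(1-x)(7-3x)$. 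This gives the exact per-step error $1-\psi^{5}(s_r^{(0)})$, with the worst case at the smallest normalized singular value.

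What each buys:
\begin{itemize}
\item Your argument is self-contained. It avoids checking the hypotheses of an external theorem, and it identifies the error exactly rather than bounding it.
\item The paper's citation gives a closed-form bound in terms of $\delta_{0,T}$. It makes the $3^q$-power contraction with the number of Newton--Schulz steps explicit, which your qualitative monotonicity argument leaves implicit.
\end{itemize}
The lemma only asserts $\varepsilon_{\mathrm{ns},T}<1$, so your elementary argument fully suffices.
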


\begin{proof}
If $\hat{\mathbf M}_t=\mathbf 0$, then $\Pi_t=\mathbf 0$ and $\delta_{t,0}=0$.
Otherwise, the pre-scaling by
$\alpha(\hat{\mathbf M}_t)=\max\{1,\|\hat{\mathbf M}_t\|_F\}$ gives
\[
0<\lambda_{\min}^{+}\bigl(\mathbf Y^{(0)}(\hat{\mathbf M}_t)
\mathbf Y^{(0)}(\hat{\mathbf M}_t)^\top\bigr)\le 1
\]
on $\operatorname{range}(\hat{\mathbf M}_t)$, and hence $\delta_{t,0}<1$.
Since the horizon $T$ is finite, $\delta_{0,T}=\max_{1\le t\le T}\delta_{t,0}<1$.

The polynomial above is the degree-two Taylor Newton--Schulz polynomial in
\citep[Definition 2]{kim2026convergence}. Applying
\citep[Theorem 2]{kim2026convergence} with $\kappa=2$ and $q=5$ yields
\[
\varepsilon_{\mathrm{ns},T}\le1-\sqrt{1-\delta_{0,T}^{(\kappa+1)^q}}=1-\sqrt{1-\delta_{0,T}^{3^5}}.
\]
Finally, since $1-\sqrt{1-x}\le x$ for $x\in[0,1]$ and $\delta_{0,T}<1$, we obtain
\[
\varepsilon_{\mathrm{ns},T}\le\delta_{0,T}^{3^5}<1.
\]
\end{proof}

\subsection{Lemma \ref{lemma:ns5_vs_orth}}
\begin{lemma}
\label{lemma:ns5_vs_orth}
Let $\mathbf A\in\mathbb R^{m\times n}$. If $\mathbf A=\mathbf 0$, define
$\operatorname{Orth}(\mathbf A)=\mathbf 0$ and $\mathrm{NS5}(\mathbf A)=\mathbf 0$.
Otherwise, let $\mathbf A=\mathbf U\boldsymbol\Sigma\mathbf V^\top$ be the compact SVD, and set
$\mathbf Q:=\operatorname{Orth}(\mathbf A)=\mathbf U\mathbf V^\top$ and
$\mathbf O:=\mathrm{NS5}(\mathbf A)$.
Then there exists a diagonal matrix $\widetilde{\boldsymbol\Sigma}$ such that $\mathbf O=\mathbf U\widetilde{\boldsymbol\Sigma}\mathbf V^\top.$ Moreover, for every row-normalized input $\mathbf A=\hat{\mathbf M}_t$ along the algorithmic
trajectory,
\[
\|\mathrm{NS5}(\mathbf A)-\operatorname{Orth}(\mathbf A)\|_2 \le \varepsilon_{\mathrm{ns}}.
\]
Consequently, every diagonal entry $\widetilde\sigma_i$ of $\widetilde{\boldsymbol\Sigma}$ satisfies
\[
1-\varepsilon_{\mathrm{ns}} \le \widetilde\sigma_i
\le 1+\varepsilon_{\mathrm{ns}},
\qquad
i\in[\operatorname{rank}(\mathbf A)].
\]
In particular, whenever $\mathbf A=\hat{\mathbf M}_t\neq \mathbf 0$, we have $\operatorname{Orth}(\mathrm{NS5}(\mathbf A))=\operatorname{Orth}(\mathbf A).$
\end{lemma}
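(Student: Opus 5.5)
The plan is to reuse the spectral-calculus argument from the proof of Theorem~\ref{th_ns}, now with the pre-scaling $\alpha(\mathbf A)=\max\{1,\|\mathbf A\|_F\}$ and the polynomial $p_{\mathrm{ns}}$. The case $\mathbf A=\mathbf 0$ holds by convention. For $\mathbf A\neq\mathbf 0$ with compact SVD $\mathbf A=\mathbf U\boldsymbol\Sigma\mathbf V^\top$, we have $\mathbf Y^{(0)}=\mathbf U(\boldsymbol\Sigma/\alpha)\mathbf V^\top$.

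Assume inductively that $\mathbf Y^{(k)}=\mathbf U\boldsymbol\Sigma^{(k)}\mathbf V^\top$ with $\boldsymbol\Sigma^{(k)}$ diagonal. Then $\mathbf Y^{(k)}\mathbf Y^{(k)\top}=\mathbf U(\boldsymbol\Sigma^{(k)})^2\mathbf U^\top$. The constant term $\tfrac{15}{8}\mathbf I_m$ of $p_{\mathrm{ns}}$ does not commute into the $\mathbf U$-frame, but it acts on $\mathbf Y^{(k)}$ from the left, and $\mathbf I_m\mathbf U=\mathbf U$. Hence
\[
p_{\mathrm{ns}}\bigl(\mathbf Y^{(k)}\mathbf Y^{(k)\top}\bigr)\mathbf Y^{(k)}=\mathbf U\,\phi\bigl(\boldsymbol\Sigma^{(k)}\bigr)\mathbf V^\top,
\qquad
\phi(s)=\tfrac{15}{8}s-\tfrac54 s^3+\tfrac38 s^5,
\]
applied entrywise. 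After five steps this gives $\mathbf O=\mathbf U\widetilde{\boldsymbol\Sigma}\mathbf V^\top$ with $\widetilde\sigma_i=\phi^{\circ 5}(\sigma_i/\alpha)$.

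For the error bound, note that $\mathbf A=\hat{\mathbf M}_t$ is one of the trajectory points. By the definition of $\varepsilon_{\mathrm{ns}}=\varepsilon_{\mathrm{ns},T}$ as a maximum over $t$, we get $\|\mathrm{NS5}(\mathbf A)-\operatorname{Orth}(\mathbf A)\|_2\le\varepsilon_{\mathrm{ns}}$ immediately, and Lemma~\ref{lemma:ns5_traj_error} gives $\varepsilon_{\mathrm{ns}}<1$. Since $\mathbf O-\mathbf Q=\mathbf U(\widetilde{\boldsymbol\Sigma}-\mathbf I_r)\mathbf V^\top$ and $\mathbf U,\mathbf V$ have orthonormal columns, the spectral norm of this difference equals $\max_i|\widetilde\sigma_i-1|$. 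This yields $1-\varepsilon_{\mathrm{ns}}\le\widetilde\sigma_i\le1+\varepsilon_{\mathrm{ns}}$ for $i\in[\operatorname{rank}(\mathbf A)]$.

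Since $\varepsilon_{\mathrm{ns}}<1$, every $\widetilde\sigma_i$ is strictly positive. So $\mathbf U\widetilde{\boldsymbol\Sigma}\mathbf V^\top$ is, up to a reordering of indices, a compact SVD of $\mathbf O$ with the same singular subspaces as $\mathbf A$. Its polar factor is therefore $\mathbf U\mathbf V^\top=\operatorname{Orth}(\mathbf A)$; the reordering does not matter because $\mathbf U\mathbf V^\top=\sum_i\mathbf u_i\mathbf v_i^\top$ is permutation invariant. This proves $\operatorname{Orth}(\mathrm{NS5}(\mathbf A))=\operatorname{Orth}(\mathbf A)$.

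The main subtlety is positivity. Without $\varepsilon_{\mathrm{ns}}<1$, some $\widetilde\sigma_i$ could be zero or negative, which would drop or flip a direction in the polar factor. That is exactly why Lemma~\ref{lemma:ns5_traj_error} is invoked. The spectral-calculus induction itself is routine and mirrors the proof of Theorem~\ref{th_ns}.
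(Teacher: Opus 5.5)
Your proposal is correct and follows essentially the same argument as the paper. Like the paper, you run an induction showing every NS5 iterate keeps the singular vectors $\mathbf U,\mathbf V$ of $\mathbf A$. You then read off $|\widetilde\sigma_i-1|\le\varepsilon_{\mathrm{ns}}$ directly from the definition of $\varepsilon_{\mathrm{ns}}$ as a trajectory maximum, and use $\varepsilon_{\mathrm{ns}}<1$ from Lemma~\ref{lemma:ns5_traj_error} to get positivity and hence $\operatorname{Orth}(\mathrm{NS5}(\mathbf A))=\operatorname{Orth}(\mathbf A)$. Your explicit treatment of the $\tfrac{15}{8}\mathbf I_m$ term of $p_{\mathrm{ns}}$ and of the possible reordering of the $\widetilde\sigma_i$ are small clarifications that the paper leaves implicit.
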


\begin{proof}
If $\mathbf A=\mathbf 0$, the conclusion is immediate by definition. Assume $\mathbf A\neq \mathbf 0$.

The internal scaling multiplies $\mathbf A$ by a positive scalar and hence does not change its left or
right singular vectors. Thus
\[
\mathbf Y^{(0)}(\mathbf A)=\mathbf U\boldsymbol\Sigma^{(0)}\mathbf V^\top
\]
for some diagonal matrix $\boldsymbol\Sigma^{(0)}$.
Assume inductively that
\[
\mathbf Y^{(k)}(\mathbf A)=\mathbf U\boldsymbol\Sigma^{(k)}\mathbf V^\top
\]
with diagonal $\boldsymbol\Sigma^{(k)}$. Then
\[
\mathbf Y^{(k)}(\mathbf A)\mathbf Y^{(k)}(\mathbf A)^\top=\mathbf U\bigl(\boldsymbol\Sigma^{(k)}\bigr)^2\mathbf U^\top,
\]
and therefore
\[
\mathbf Y^{(k+1)}(\mathbf A)=\mathbf U
p_{\mathrm{ns}}\bigl((\boldsymbol\Sigma^{(k)})^2\bigr)
\boldsymbol\Sigma^{(k)}
\mathbf V^\top.
\]
Hence every iterate has the same left and right singular vectors as $\mathbf A$, and in particular
\[
\mathrm{NS5}(\mathbf A)=\mathbf U\widetilde{\boldsymbol\Sigma}\mathbf V^\top
\]
for some diagonal matrix $\widetilde{\boldsymbol\Sigma}$.

Now, if $\mathbf A=\hat{\mathbf M}_t$ is a row-normalized input on the algorithmic trajectory, then
by the definition of $\varepsilon_{\mathrm{ns}}$,
\[
\|\mathrm{NS5}(\mathbf A)-\operatorname{Orth}(\mathbf A)\|_2
=\|\mathbf U(\widetilde{\boldsymbol\Sigma}-\mathbf I)\mathbf V^\top\|_2 =\|\widetilde{\boldsymbol\Sigma}-\mathbf I\|_2
\le \varepsilon_{\mathrm{ns}}.
\]
Hence
\[
\max_i|\widetilde\sigma_i-1| \le \varepsilon_{\mathrm{ns}},
\]
which is equivalent to
\[
1-\varepsilon_{\mathrm{ns}}
\le \widetilde\sigma_i \le 1+\varepsilon_{\mathrm{ns}}
\qquad
\text{for all }i.
\]
Since $\varepsilon_{\mathrm{ns}}<1$, these diagonal entries are strictly positive. Therefore
\[
\operatorname{Orth}(\mathrm{NS5}(\mathbf A))=\operatorname{Orth}(\mathbf A)
\]
whenever $\mathbf A=\hat{\mathbf M}_t\neq \mathbf 0$.
\end{proof}

\subsection{Lemma \ref{conv_r_lemma:a1_ns}}
\begin{lemma}
\label{conv_r_lemma:a1_ns}
Under Lemma~\ref{lemma:ns5_vs_orth}, for all $t$,
\[
\langle \mathbf M_t,\mathbf O_t\rangle
\ge \frac{1-\varepsilon_{\mathrm{ns}}}{\sqrt m}\|\mathbf M_t\|_F, \qquad \|\mathbf O_t\|_F^2 \le (1+\varepsilon_{\mathrm{ns}})^2 n.
\]
\end{lemma}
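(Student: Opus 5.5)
The plan is to reduce both claims to the exact-polar case of Lemma~\ref{conv_r_lemma:a1}, using the spectral sandwich of Lemma~\ref{lemma:ns5_vs_orth}. Here $\mathbf O_t:=\mathrm{NS5}(\mathbf R_t)$ with $\mathbf R_t=\mathbf P_{r,t}\mathbf M_t$ the row-normalized input. If $\mathbf M_t=\mathbf 0$, then $\mathbf R_t=\mathbf O_t=\mathbf 0$ and both inequalities are trivial, so assume $\mathbf M_t\neq\mathbf 0$.

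For the Frobenius bound, write $\mathbf R_t=\mathbf U_t\boldsymbol\Sigma_t\mathbf V_t^\top$ (compact SVD, rank $r_t\le n$). By Lemma~\ref{lemma:ns5_vs_orth}, $\mathbf O_t=\mathbf U_t\widetilde{\boldsymbol\Sigma}_t\mathbf V_t^\top$ with every $\widetilde\sigma_i\in[1-\varepsilon_{\mathrm{ns}},1+\varepsilon_{\mathrm{ns}}]$. Then
\[
\|\mathbf O_t\|_F^2=\sum_{i\le r_t}\widetilde\sigma_i^2\le(1+\varepsilon_{\mathrm{ns}})^2 r_t\le(1+\varepsilon_{\mathrm{ns}})^2 n.
\]

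For the alignment bound, I would repeat the trace computation from the proof of Lemma~\ref{conv_r_lemma:a1}, with $\mathbf H_t=(\mathbf R_t\mathbf R_t^\top)^{1/2}=\mathbf U_t\boldsymbol\Sigma_t\mathbf U_t^\top$ replaced by
\[
\widetilde{\mathbf H}_t:=\mathbf O_t\mathbf R_t^\top=\mathbf U_t\widetilde{\boldsymbol\Sigma}_t\boldsymbol\Sigma_t\mathbf U_t^\top .
\]
Using $\mathbf M_t=\mathbf D_t\mathbf R_t$, one gets
\[
\langle\mathbf M_t,\mathbf O_t\rangle=\operatorname{tr}(\mathbf D_t\widetilde{\mathbf H}_t)=\sum_i d_{i,t}(\widetilde{\mathbf H}_t)_{ii}.
\]
Since $\widetilde{\boldsymbol\Sigma}_t\succeq(1-\varepsilon_{\mathrm{ns}})\mathbf I$ and everything commutes in the $\mathbf U_t$ basis, we have $\widetilde{\mathbf H}_t\succeq(1-\varepsilon_{\mathrm{ns}})\mathbf H_t$, so each diagonal entry satisfies $(\widetilde{\mathbf H}_t)_{ii}\ge(1-\varepsilon_{\mathrm{ns}})(\mathbf H_t)_{ii}$. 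The rest of the argument of Lemma~\ref{conv_r_lemma:a1} then goes through unchanged. For rows with $d_{i,t}>0$ it gives $(\mathbf H_t)_{ii}\ge\|\mathbf R_t\|_2^{-1}\ge 1/\sqrt m$, and combined with $\sum_i d_{i,t}\ge\|\mathbf M_t\|_F$ this yields
\[
\langle\mathbf M_t,\mathbf O_t\rangle\ge\frac{1-\varepsilon_{\mathrm{ns}}}{\sqrt m}\|\mathbf M_t\|_F.
\]

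The only delicate point is the Loewner comparison $\widetilde{\mathbf H}_t\succeq(1-\varepsilon_{\mathrm{ns}})\mathbf H_t$. It requires that $\mathbf O_t$ share singular vectors with $\mathbf R_t$, which is exactly the first part of Lemma~\ref{lemma:ns5_vs_orth}. It also requires that the uniform bound $\varepsilon_{\mathrm{ns}}<1$ of Lemma~\ref{lemma:ns5_traj_error} apply to the trajectory inputs $\hat{\mathbf M}_t=\mathbf R_t$, so that $1-\varepsilon_{\mathrm{ns}}>0$ and the bound is meaningful.
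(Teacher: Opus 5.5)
Your proposal is correct and follows essentially the same argument as the paper: you use the shared singular vectors from Lemma~\ref{lemma:ns5_vs_orth} to get $\mathbf O_t\hat{\mathbf M}_t^\top=\mathbf U_t\widetilde{\boldsymbol\Sigma}_t\boldsymbol\Sigma_t\mathbf U_t^\top\succeq(1-\varepsilon_{\mathrm{ns}})\mathbf H_t$, rerun the unit-row trace bound of Lemma~\ref{conv_r_lemma:a1}, and read off the Frobenius bound from $\widetilde\sigma_i\le 1+\varepsilon_{\mathrm{ns}}$ together with rank at most $n$. The only cosmetic difference is that you pass the Loewner inequality to diagonal entries before weighting by $d_{i,t}$, whereas the paper applies it directly inside $\operatorname{tr}(\mathbf D_t\,\cdot\,)$ using $\mathbf D_t\succeq 0$; the two are equivalent.
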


\begin{proof}
If $\mathbf M_t=\mathbf 0$, then $\hat{\mathbf M}_t=\mathbf O_t=\mathbf 0$ and the claim is immediate.
Assume $\mathbf M_t\neq \mathbf 0$. We have
\[
d_{i,t}:=\|(\mathbf M_t)_{i,:}\|_2,
\qquad
\mathbf D_t:=\mathbf D_{r,t}^{1/2} = \operatorname{diag}(d_{1,t},\dots,d_{m,t}),
\]
so that
\[
\mathbf M_t=\mathbf D_t\hat{\mathbf M}_t.
\]
Let
\[
\hat{\mathbf M}_t=\mathbf U_t\boldsymbol\Sigma_t\mathbf V_t^\top,
\qquad
\mathbf H_t:=(\hat{\mathbf M}_t\hat{\mathbf M}_t^\top)^{1/2} = \mathbf U_t\boldsymbol\Sigma_t\mathbf U_t^\top.
\]
By Lemma~\ref{lemma:ns5_vs_orth},
\[
\mathbf O_t=\mathbf U_t\widetilde{\boldsymbol\Sigma}_t\mathbf V_t^\top
\]
with diagonal entries of $\widetilde{\boldsymbol\Sigma}_t$ lying in
$[1-\varepsilon_{\mathrm{ns}},1+\varepsilon_{\mathrm{ns}}]$.
Therefore,
\[
\mathbf O_t\hat{\mathbf M}_t^\top = \mathbf U_t\widetilde{\boldsymbol\Sigma}_t\boldsymbol\Sigma_t\mathbf U_t^\top \succeq (1-\varepsilon_{\mathrm{ns}})\mathbf U_t\boldsymbol\Sigma_t\mathbf U_t^\top = (1-\varepsilon_{\mathrm{ns}})\mathbf H_t.
\]
Hence
\[
\langle \mathbf M_t,\mathbf O_t\rangle =
\operatorname{tr}(\mathbf O_t^\top\mathbf D_t\hat{\mathbf M}_t)
= \operatorname{tr}(\mathbf D_t\mathbf O_t\hat{\mathbf M}_t^\top) \ge (1-\varepsilon_{\mathrm{ns}})\operatorname{tr}(\mathbf D_t\mathbf H_t) = (1-\varepsilon_{\mathrm{ns}})\sum_{i=1}^m d_{i,t}(\mathbf H_t)_{ii}.
\]
If $d_{i,t}>0$, then the $i$-th row of $\hat{\mathbf M}_t$ has unit Euclidean norm, so
\[
1 = \mathbf e_i^\top \hat{\mathbf M}_t\hat{\mathbf M}_t^\top \mathbf e_i = \mathbf e_i^\top \mathbf H_t^2 \mathbf e_i \le \|\mathbf H_t\|_2\mathbf e_i^\top \mathbf H_t \mathbf e_i = \|\hat{\mathbf M}_t\|_2 (\mathbf H_t)_{ii}.
\]
Thus $(\mathbf H_t)_{ii}\ge \|\hat{\mathbf M}_t\|_2^{-1}$ whenever $d_{i,t}>0$.
If $d_{i,t}=0$, the corresponding term vanishes.
Consequently,
\[
\operatorname{tr}(\mathbf D_t\mathbf H_t) \ge \|\hat{\mathbf M}_t\|_2^{-1}\sum_{i=1}^m d_{i,t}.
\]
Now each nonzero row of $\hat{\mathbf M}_t$ has norm $1$, so
$\|\hat{\mathbf M}_t\|_F^2\le m$, hence $\|\hat{\mathbf M}_t\|_2\le \sqrt m$.
Also,
\[
\sum_{i=1}^m d_{i,t} \ge \Bigl(\sum_{i=1}^m d_{i,t}^2\Bigr)^{1/2} = \|\mathbf M_t\|_F.
\]
Combining the above bounds gives
\[
\langle \mathbf M_t,\mathbf O_t\rangle \ge \frac{1-\varepsilon_{\mathrm{ns}}}{\sqrt m}\|\mathbf M_t\|_F.
\]
For the norm bound, let $r_t:=\operatorname{rank}(\hat{\mathbf M}_t)$.
Again by Lemma~\ref{lemma:ns5_vs_orth}, every nonzero singular value of $\mathbf O_t$
is at most $1+\varepsilon_{\mathrm{ns}}$. Therefore
\[
\|\mathbf O_t\|_F^2 = \sum_{i=1}^{r_t}\widetilde\sigma_{i,t}^2
\le (1+\varepsilon_{\mathrm{ns}})^2 r_t \le (1+\varepsilon_{\mathrm{ns}})^2 n.
\]
This completes the proof.
\end{proof}

\subsection{Lemma \ref{conv_r_lemma:wd_envelope}}

\begin{lemma}
\label{conv_r_lemma:wd_envelope}
Consider the row-normalized update with decoupled weight decay
\[
\mathbf X_{t+1} = \mathbf X_t-\eta_t(a\mathbf O_t+\lambda_t\mathbf X_t),
\]
where $\mathbf O_t=\mathrm{NS5}(\hat{\mathbf M}_t)$ and Lemma~\ref{conv_r_lemma:a1_ns} holds.
Let $\gamma:=1-\varepsilon_{\mathrm{ns}},
0\le \rho<\frac{a\gamma}{\sqrt m},$
and assume
\begin{equation}
\label{eq:row-ns-lambda-schedule}
0\le \lambda_t \le \frac{\rho}{\|\mathbf X_1\|_F+4a(1+\varepsilon_{\mathrm{ns}})\sqrt n}t^{-1/4}.
\end{equation}
Then, along every realization,
\[
\lambda_t\|\mathbf X_t\|_F\le \rho,
\qquad
\|\mathbf X_{t+1}-\mathbf X_t\|_F
\le (a(1+\varepsilon_{\mathrm{ns}})\sqrt n+\rho)\cdot\eta_t.
\]
\end{lemma}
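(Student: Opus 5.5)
The first claim, $\lambda_t\|\mathbf X_t\|_F\le\rho$, follows from a crude deterministic envelope on $\|\mathbf X_t\|_F$ that grows at most like $t^{1/4}$. That growth rate exactly matches the $t^{-1/4}$ decay allowed in \eqref{eq:row-ns-lambda-schedule}. The second claim then follows from the triangle inequality.

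\textbf{Step 1: an a priori envelope.} First I ensure $\lambda_t\eta_t\le 1$ for all $t$. For $t\ge2$ this holds because $\lambda_t\eta_t\le \frac{\rho}{4a\sqrt n}t^{-1}$ and $\rho<a\gamma/\sqrt m\le a$. At $t=1$ it holds because $\lambda_1\le \rho/(4a\sqrt n)<1$. With this, $1-\lambda_t\eta_t\in[0,1]$. Combining this with Lemma~\ref{conv_r_lemma:a1_ns}, which gives $\|\mathbf O_t\|_F\le(1+\varepsilon_{\mathrm{ns}})\sqrt n$, the recursion $\mathbf X_{t+1}=(1-\lambda_t\eta_t)\mathbf X_t-a\eta_t\mathbf O_t$ yields
\[
\|\mathbf X_{t+1}\|_F\le\|\mathbf X_t\|_F+a(1+\varepsilon_{\mathrm{ns}})\sqrt n\,\eta_t .
\]
Unrolling this bound and using $\sum_{s=1}^{t-1}s^{-3/4}\le 4t^{1/4}$, which follows by comparison with $\int_0^{t}x^{-3/4}dx=4t^{1/4}$, I get
\[
\|\mathbf X_t\|_F\le\|\mathbf X_1\|_F+4a(1+\varepsilon_{\mathrm{ns}})\sqrt n\,t^{1/4}\le\bigl(\|\mathbf X_1\|_F+4a(1+\varepsilon_{\mathrm{ns}})\sqrt n\bigr)t^{1/4}.
\]
The last inequality uses $t^{1/4}\ge1$. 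This holds along every realization, since it only uses the deterministic bound on $\|\mathbf O_t\|_F$.

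\textbf{Step 2: conclude.} Multiplying the envelope by the upper bound on $\lambda_t$ from \eqref{eq:row-ns-lambda-schedule}, the $t^{1/4}$ and $t^{-1/4}$ factors cancel. This gives $\lambda_t\|\mathbf X_t\|_F\le\rho$. Then
\[
\|\mathbf X_{t+1}-\mathbf X_t\|_F=\eta_t\|a\mathbf O_t+\lambda_t\mathbf X_t\|_F\le\eta_t\bigl(a(1+\varepsilon_{\mathrm{ns}})\sqrt n+\rho\bigr).
\]

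The only delicate points are, first, making sure the contraction factor $1-\lambda_t\eta_t$ is nonnegative, so that the norm does not grow through weight decay overshooting. Second, the constant $4$ in the summation bound must match the constant in the schedule. If $\lambda_t\eta_t\le1$ failed, I would instead bound $|1-\lambda_t\eta_t|\le1$ using the small magnitude of $\lambda_t\eta_t$ derived above. The condition $\rho<a\gamma/\sqrt m$ itself is not needed here beyond ensuring $\rho<a$; its real role is later in the descent inequality.
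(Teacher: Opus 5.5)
Your proposal is correct and follows the paper's proof essentially step for step. Both arguments take the same four steps: establish $0\le\lambda_t\eta_t\le 1$ from the schedule, unroll $\|\mathbf X_{t+1}\|_F\le\|\mathbf X_t\|_F+a(1+\varepsilon_{\mathrm{ns}})\sqrt n\,\eta_t$ using $\sum_{s=1}^{t-1}s^{-3/4}\le 4t^{1/4}$, cancel the $t^{\pm1/4}$ factors, and finish with the triangle inequality. The differences are cosmetic: you use the weaker fact $\rho<a$ where the paper uses $\rho<a/\sqrt m$, your split into $t=1$ and $t\ge2$ is unnecessary since $\lambda_t\eta_t\le t^{-1}/(4\sqrt n)$ holds for all $t$, and you absorb $\|\mathbf X_1\|_F\le\|\mathbf X_1\|_F\,t^{1/4}$ before multiplying by $\lambda_t$ rather than using $t^{-1/4}\le1$ afterwards.
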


\begin{proof}
Let $
\lambda_0:=
\frac{\rho}{\|\mathbf X_1\|_F+4a(1+\varepsilon_{\mathrm{ns}})\sqrt n}.$ Since $\rho<a\gamma_{\mathrm{ns}}/\sqrt m\le a/\sqrt m$, we have
\[
\lambda_0 \le \frac{1}{4(1+\varepsilon_{\mathrm{ns}})\sqrt{mn}} \le 1.
\]
Hence
\[
0\le \eta_t\lambda_t\le \lambda_0t^{-1}\le 1.
\]
Using Lemma~\ref{conv_r_lemma:a1_ns},
\[
\|\mathbf O_t\|_F\le (1+\varepsilon_{\mathrm{ns}})\sqrt n.
\]
Therefore,
\[
\begin{aligned}
\|\mathbf X_{t+1}\|_F
&= \|(1-\eta_t\lambda_t)\mathbf X_t-a\eta_t\mathbf O_t\|_F  \\
&\le (1-\eta_t\lambda_t)\|\mathbf X_t\|_F
+a\eta_t\|\mathbf O_t\|_F  \\
&\le \|\mathbf X_t\|_F +a(1+\varepsilon_{\mathrm{ns}})\sqrt n\eta_t .
\end{aligned}
\]
By induction and $\sum_{s=1}^{t-1}s^{-3/4}\le 4t^{1/4}$,
\[
\|\mathbf X_t\|_F \le \|\mathbf X_1\|_F +4a(1+\varepsilon_{\mathrm{ns}})\sqrt nt^{1/4}.
\]
Thus
\[
\lambda_t\|\mathbf X_t\|_F \le
\lambda_0t^{-1/4}
\Bigl( \|\mathbf X_1\|_F +4a(1+\varepsilon_{\mathrm{ns}})\sqrt nt^{1/4} \Bigr)
\le \rho.
\]
Finally,
\[
\begin{aligned}
\|\mathbf X_{t+1}-\mathbf X_t\|_F
&= \eta_t\|a\mathbf O_t+\lambda_t\mathbf X_t\|_F  \\
&\le \eta_t\Bigl(a(1+\varepsilon_{\mathrm{ns}})\sqrt n+\rho\Bigr)
\end{aligned}
\]
This completes the proof.
\end{proof}

\section{Proofs of Theorem \ref{th_conv_r} and Corollary \ref{cor:conv_r_ns}}
\label{proof:th_conv_r}

\begin{proof}
Let $\mathbf S_t:=\nabla f(\mathbf X_t)-\mathbf M_t,
\gamma:=1-\varepsilon_{\mathrm{ns}},$
and define $c_{m,n}^{\mathrm{ns}} := \frac{\gamma}{\sqrt m}  +(1+\varepsilon_{\mathrm{ns}})\sqrt n.$

By \eqref{eq:row-ns-lambda-schedule}, we have $\frac{a\gamma}{\sqrt m}-\rho>0$.
By Assumption~\ref{ass:2} and the descent lemma,
\begin{equation}
\label{eq:row-ns-wd-descent}
\begin{aligned}
f(\mathbf X_{t+1})
&\le f(\mathbf X_t) + \langle \nabla f(\mathbf X_t),\mathbf X_{t+1}-\mathbf X_t\rangle + \frac{L}{2}\|\mathbf X_{t+1}-\mathbf X_t\|_F^2
\\
&= f(\mathbf X_t) -a\eta_t\langle \nabla f(\mathbf X_t),\mathbf O_t\rangle -\lambda_t\eta_t\langle \nabla f(\mathbf X_t),\mathbf X_t\rangle + \frac{L\eta_t^2}{2} \|a\mathbf O_t+\lambda_t\mathbf X_t\|_F^2
\\
&\le f(\mathbf X_t) -a\eta_t\langle \nabla f(\mathbf X_t),\mathbf O_t\rangle +\rho\eta_t\|\nabla f(\mathbf X_t)\|_F
+ \frac{L}{2}\bigl(a(1+\varepsilon_{\mathrm{ns}})\sqrt n+\rho\bigr)^2\eta_t^2,
\end{aligned}
\end{equation}
where the last step uses Lemma~\ref{conv_r_lemma:wd_envelope} and Cauchy's inequality.

Again by Lemma~\ref{conv_r_lemma:a1_ns}, we have
\begin{equation}
\label{eq:row-ns-wd-align}
\begin{aligned}
\langle \nabla f(\mathbf X_t),\mathbf O_t\rangle
&= \langle \mathbf M_t,\mathbf O_t\rangle
+ \langle \mathbf S_t,\mathbf O_t\rangle
\\
&\ge \frac{\gamma}{\sqrt m}\|\mathbf M_t\|_F - \|\mathbf S_t\|_F\|\mathbf O_t\|_F
\\
&\ge \frac{\gamma}{\sqrt m}\|\mathbf M_t\|_F - (1+\varepsilon_{\mathrm{ns}})\sqrt n\|\mathbf S_t\|_F
\\
&\ge \frac{\gamma}{\sqrt m}\|\nabla f(\mathbf X_t)\|_F - \Bigl(
\frac{\gamma}{\sqrt m} + (1+\varepsilon_{\mathrm{ns}})\sqrt n \Bigr)
\|\mathbf S_t\|_F.
\end{aligned}
\end{equation}
Substituting \eqref{eq:row-ns-wd-align} into \eqref{eq:row-ns-wd-descent}, we have
\begin{equation}
\label{eq:row-ns-wd-mainstep}
f(\mathbf X_{t+1})
\le f(\mathbf X_t) - \left(\frac{a\gamma}{\sqrt m}-\rho\right)\eta_t\|\nabla f(\mathbf X_t)\|_F
+ ac_{m,n}^{\mathrm{ns}}\eta_t\|\mathbf S_t\|_F
+ \frac{L}{2}\bigl(a(1+\varepsilon_{\mathrm{ns}})\sqrt n+\rho\bigr)^2\eta_t^2 .
\end{equation}
Apply Young's inequality with parameter $h_{t+1}/L$:
\[
ac_{m,n}^{\mathrm{ns}}\eta_t\|\mathbf S_t\|_F
\le \frac{ah_{t+1}}{2L}\|\mathbf S_t\|_F^2 + \frac{aL}{2}\bigl(c_{m,n}^{\mathrm{ns}}\bigr)^2
\frac{\eta_t^2}{h_{t+1}}.
\]
Let $d_{\rho}^{\mathrm{ns}} :=
a(1+\varepsilon_{\mathrm{ns}})\sqrt n+\rho$ and $ \kappa_{\rho}^{\mathrm{ns}} := \frac{a\gamma}{\sqrt m}-\rho.$ Hence
\[
\begin{aligned}
f(\mathbf X_{t+1})
\le& f(\mathbf X_t) - \kappa_{\rho}^{\mathrm{ns}}\eta_t\|\nabla f(\mathbf X_t)\|_F + \frac{ah_{t+1}}{2L}\|\mathbf S_t\|_F^2 \\
&\quad + \frac{aL}{2}\bigl(c_{m,n}^{\mathrm{ns}}\bigr)^2
\frac{\eta_t^2}{h_{t+1}} + \frac{L}{2}\bigl(d_{\rho}^{\mathrm{ns}}\bigr)^2\eta_t^2.
\end{aligned}
\]
Taking expectations and summing from $t=1$ to $T$, then using
$f(\mathbf X_{T+1})\ge f^\star$, we obtain
\begin{equation}
\label{eq:row-ns-wd-main}
\begin{aligned}
\kappa_{\rho}^{\mathrm{ns}}
\sum_{t=1}^T
\eta_t\mathbb E\|\nabla f(\mathbf X_t)\|_F
&\le f(\mathbf X_1)-f^\star + \frac{a}{2L}
\sum_{t=1}^T h_{t+1}\mathbb E\|\mathbf S_t\|_F^2
\\
&\quad
+ \frac{aL}{2}\bigl(c_{m,n}^{\mathrm{ns}}\bigr)^2
\sum_{t=1}^T \frac{\eta_t^2}{h_{t+1}} + \frac{L}{2}\bigl(d_{\rho}^{\mathrm{ns}}\bigr)^2
\sum_{t=1}^T
\eta_t^2 .
\end{aligned}
\end{equation}
By Lemma~\ref{conv_lemma:a3}, using
$\|\mathbf X_{t+1}-\mathbf X_t\|_F\le d_{\rho}^{\mathrm{ns}}\eta_t$
from Lemma~\ref{conv_r_lemma:wd_envelope}, we have
\[
\mathbb E\|\mathbf S_{t+1}\|_F^2
\le (1-h_{t+1})\mathbb E\|\mathbf S_t\|_F^2 + \frac{(1-h_{t+1})^2}{h_{t+1}} L^2\bigl(d_{\rho}^{\mathrm{ns}}\bigr)^2\eta_t^2 + h_{t+1}^2\sigma^2.
\]
Since $(1-h_{t+1})^2\le 1$ and
\[
\frac{\eta_t^2}{h_{t+1}} = \frac{t^{-3/2}}{(t+1)^{-1/2}}
= \frac{\sqrt{t+1}}{t^{3/2}}
\le \frac{2\sqrt2}{t+1} = 2\sqrt2 h_{t+1}^2,
\]
we have
\[
\mathbb E\|\mathbf S_{t+1}\|_F^2
\le (1-h_{t+1})\mathbb E\|\mathbf S_t\|_F^2
+ h_{t+1}^2 \Bigl(
2\sqrt2L^2\bigl(d_{\rho}^{\mathrm{ns}}\bigr)^2
+\sigma^2\Bigr).
\]
Applying Lemma~\ref{conv_lemma:a2} with
\[
E_t=\mathbb E\|\mathbf S_t\|_F^2,
\qquad
A_{t+1} = h_{t+1}^2
\Bigl( 2\sqrt2L^2\bigl(d_{\rho}^{\mathrm{ns}}\bigr)^2
+\sigma^2\Bigr),
\]
we get
\[
h_t\mathbb E\|\mathbf S_t\|_F^2 \le 2\Bigl(\mathbb E\|\mathbf S_t\|_F^2 - \mathbb E\|\mathbf S_{t+1}\|_F^2
+ A_{t+1} \Bigr).
\]
Moreover, since $\beta_1=0$,
\[
\mathbb E\|\mathbf S_1\|_F^2 = \mathbb E\|\nabla f(\mathbf X_1)-\mathbf M_1\|_F^2 = \mathbb E\|\nabla f(\mathbf X_1)-\nabla f(\mathbf X_1;\xi_1)\|_F^2 \le \sigma^2.
\]
Thus,
\[
\begin{aligned}
\sum_{t=1}^T h_t\mathbb E\|\mathbf S_t\|_F^2
&\le 2\sum_{t=1}^T \Bigl( \mathbb E\|\mathbf S_t\|_F^2
- \mathbb E\|\mathbf S_{t+1}\|_F^2 + A_{t+1} \Bigr) \\
&\le 2\mathbb E\|\mathbf S_1\|_F^2 + 2\Bigl(
2\sqrt2L^2\bigl(d_{\rho}^{\mathrm{ns}}\bigr)^2 +\sigma^2 \Bigr) \sum_{t=1}^T\frac1{t+1} \\
&\le 2\sigma^2 + 2\Bigl( 2\sqrt2L^2\bigl(d_{\rho}^{\mathrm{ns}}\bigr)^2 +\sigma^2 \Bigr)(1+\ln T).
\end{aligned}
\]
Hence, since $h_{t+1}\le h_t$, we have
\[
\sum_{t=1}^T h_{t+1}\mathbb E\|\mathbf S_t\|_F^2
\le 2\sigma^2 + 2\Bigl( 2\sqrt2L^2\bigl(d_{\rho}^{\mathrm{ns}}\bigr)^2 +\sigma^2 \Bigr)(1+\ln T).
\]
Also,
\[
\sum_{t=1}^T\frac{\eta_t^2}{h_{t+1}}
= \sum_{t=1}^T \frac{t^{-3/2}}{(t+1)^{-1/2}} \le 2(1+\ln T), \qquad
\sum_{t=1}^T\eta_t^2
= \sum_{t=1}^Tt^{-3/2} \le 3.
\]
Substituting the above bounds into \eqref{eq:row-ns-wd-main},
\[
\kappa_{\rho}^{\mathrm{ns}}
\sum_{t=1}^T
\eta_t\mathbb E\|\nabla f(\mathbf X_t)\|_F
\le f(\mathbf X_1)-f^\star + C_{1,\rho}^{\mathrm{ns}}(1+\ln T) + C_{2,\rho}^{\mathrm{ns}}.
\]
where 
\[
C_{1,\rho}^{\mathrm{ns}} := \frac{a}{L}\Bigl(2\sqrt2L^2\bigl(d_{\rho}^{\mathrm{ns}}\bigr)^2
+\sigma^2\Bigr) + aL\bigl(c_{m,n}^{\mathrm{ns}}\bigr)^2,C_{2,\rho}^{\mathrm{ns}} := \frac{a\sigma^2}{L} + \frac{3L}{2}\bigl(d_{\rho}^{\mathrm{ns}}\bigr)^2.
\]
Finally, since $\eta_t=t^{-3/4}\ge T^{-3/4}$ for all $1\le t\le T$,
\[
T^{-3/4} \sum_{t=1}^T
\mathbb E\|\nabla f(\mathbf X_t)\|_F
\le \sum_{t=1}^T \eta_t\mathbb E\|\nabla f(\mathbf X_t)\|_F.
\]
Therefore,
\[
\frac1T
\sum_{t=1}^T
\mathbb E\|\nabla f(\mathbf X_t)\|_F
\le \frac{
f(\mathbf X_1)-f^\star + C_{1,\rho}^{\mathrm{ns}}(1+\ln T) + C_{2,\rho}^{\mathrm{ns}}}{\kappa_{\rho}^{\mathrm{ns}}\cdot T^{1/4}}.
\]
Setting $\varepsilon_{\mathrm{ns}}$ to zero yields the corresponding exact-polar result for Theorem~\ref{th_conv_r} with decoupled weight decay. Setting additionally $\rho=0$ recovers the zero weight decay proof.
This completes the proof.
\end{proof}

\section{More Results}
\label{appendix:more_res}
$\blacktriangleright$ \textbf{Learning Rate Search.} Figure \ref{fig:lr_sweep} shows the learning-rate sweeps for LLaMA2-130M and LLaMA2-350M trained on C4 for 2.6B and 7.5B tokens, respectively. 
\begin{figure}[!htbp]
	\centering
	\subfloat[130M]{
    \label{Fig:FG4-1}%
	\begin{minipage}[h]{0.48\textwidth}
	\centering
    \includegraphics[width=\textwidth]{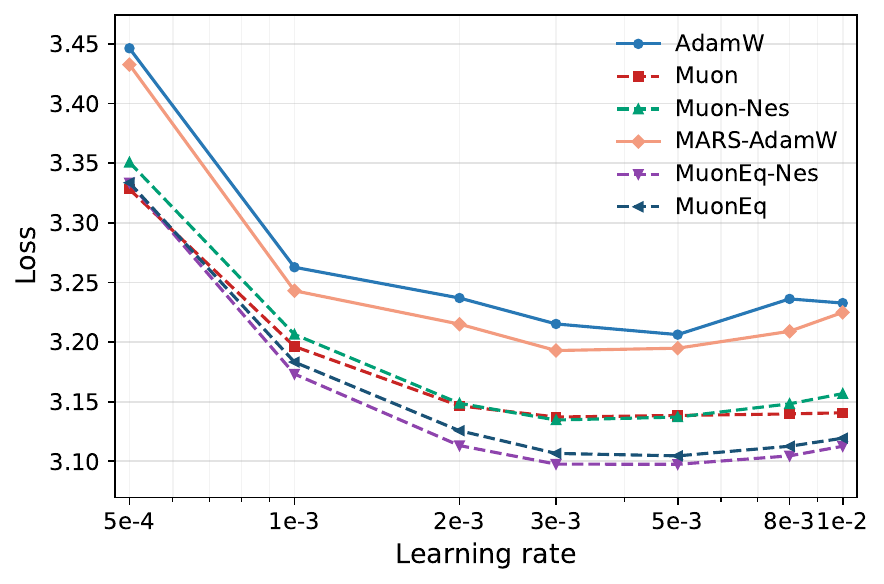} 
    \end{minipage}
    }
    \subfloat[350M]{
    \label{Fig:FG4-2}%
	\begin{minipage}[h]{0.48\textwidth}
	\centering
    \includegraphics[width=\textwidth]{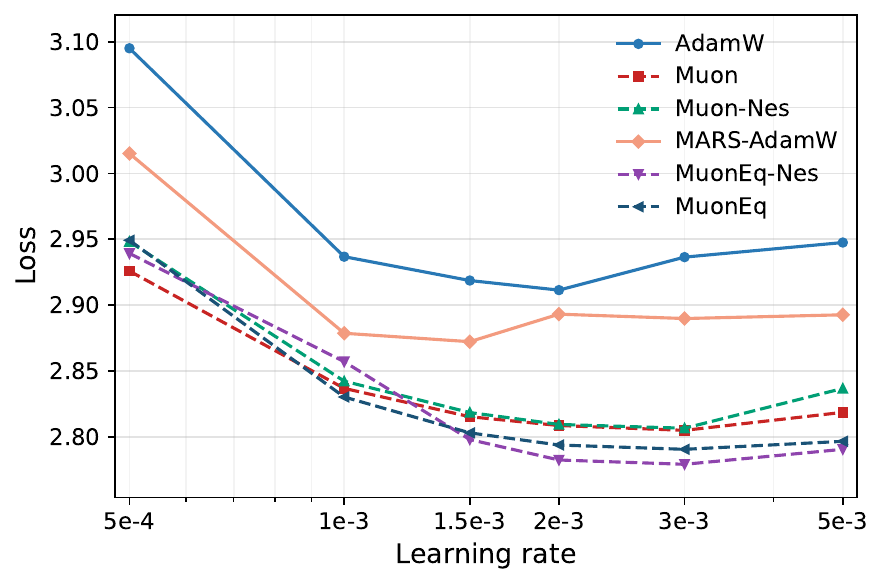} 
    \end{minipage}
    }
    \caption{Learning-rate sweeps for LLaMA2-130M and LLaMA2-350M trained on C4 for 2.6B and 7.5B tokens, respectively.}
\label{fig:lr_sweep}
\end{figure}

$\blacktriangleright$ \textbf{Module-wise Pre-NS Momentum Analysis.}
Since {\method} acts directly on the matrix fed into Newton--Schulz, we analyze pre-NS momentum matrices along a 2.6B-token LLaMA2-130M/C4 training trajectory, tracking the \texttt{query}, \texttt{key}, \texttt{value}, \texttt{output}, \texttt{gate}, \texttt{down}, and \texttt{up} weights in layers 1, 5, 9, and 12 at 1\%, 10\%, 50\%, and 100\% of training. Figure~\ref{fig:spectral_metrics} reports the singular-value entropy and stable rank of these matrices; empirically, \texttt{RowColNorm} consistently improves both metrics over \texttt{direct}, \texttt{ColNorm}, and \texttt{RowNorm}. Figure~\ref{fig:e_b_metrics} provides the empirical counterpart of Eq.~\eqref{eq:error_decomp}: the top row measures the finite-step NS5 approximation error after preprocessing, and the bottom row the bias introduced relative to the unpreconditioned polar direction. At 1\% and 10\% of training, two-sided \texttt{RowColNorm} is usually lowest or near-lowest in the top panels across layers, indicating that early RC equilibration most effectively improves the geometry seen by NS5, but it also yields the largest preconditioning bias. By 50\% and 100\%, the top-panel differences shrink substantially while the extra bias of the two-sided map remains visible. These plots illustrate the same trade-off as Eq.~\eqref{eq:error_decomp}: RC provides stronger two-sided spectral correction but also larger preprocessing bias. For the hidden-weight setting studied here, R remains the more suitable default one-sided variant.

% $\blacktriangleright$ \textbf{From Exact Polar to Practical NS5 for the Default R Variant.} To connect Corollary~\ref{cor:conv_r_ns} with the practical NS5 implementation, Figure~\ref{fig:ns5_spec_norm} reports the training-time evolution of $\varepsilon_{\mathrm{ns}}$ along the {\method} (R) trajectory, averaged over the selected \texttt{RowNorm} modules. Here, $\varepsilon_{\mathrm{ns}}$ is exactly the spectral-norm error term appearing in Corollary~\ref{cor:conv_r_ns}$,$ measuring the discrepancy between the NS5 approximation and the exact polar factor. We observe that $\varepsilon_{\mathrm{ns}}$ remains below $1$ throughout training, so the multiplicative constants $(1\pm\varepsilon_{\mathrm{ns}})$ in the corollary remain well-defined and controlled on the trajectory considered here. This provides empirical support that replacing the exact polar factor by NS5 preserves the practical relevance of the $T^{-1/4}$ stationarity guarantee for the default R variant.

$\blacktriangleright$ \textbf{Sensitivity Analysis.} Figure~\ref{fig:heatmap} analyzes hyperparameter sensitivity on GPT2-small trained on FineWeb for 2.6B tokens by sweeping the learning rate (x-axis) and momentum (y-axis) at $K=4$ and $K=5$. Across both NS budgets, especially at $K=5$, the default R variant of {\method}-Nes shows a broader low-perplexity region than Muon-Nes, indicating reduced sensitivity and stronger robustness to hyperparameter choice.

\begin{figure}[!htbp]
	\centering
	\subfloat[Step 1\%]{
    \label{Fig:FG3-1}%
	\begin{minipage}[h]{0.48\textwidth}
	\centering
    \includegraphics[width=\textwidth]{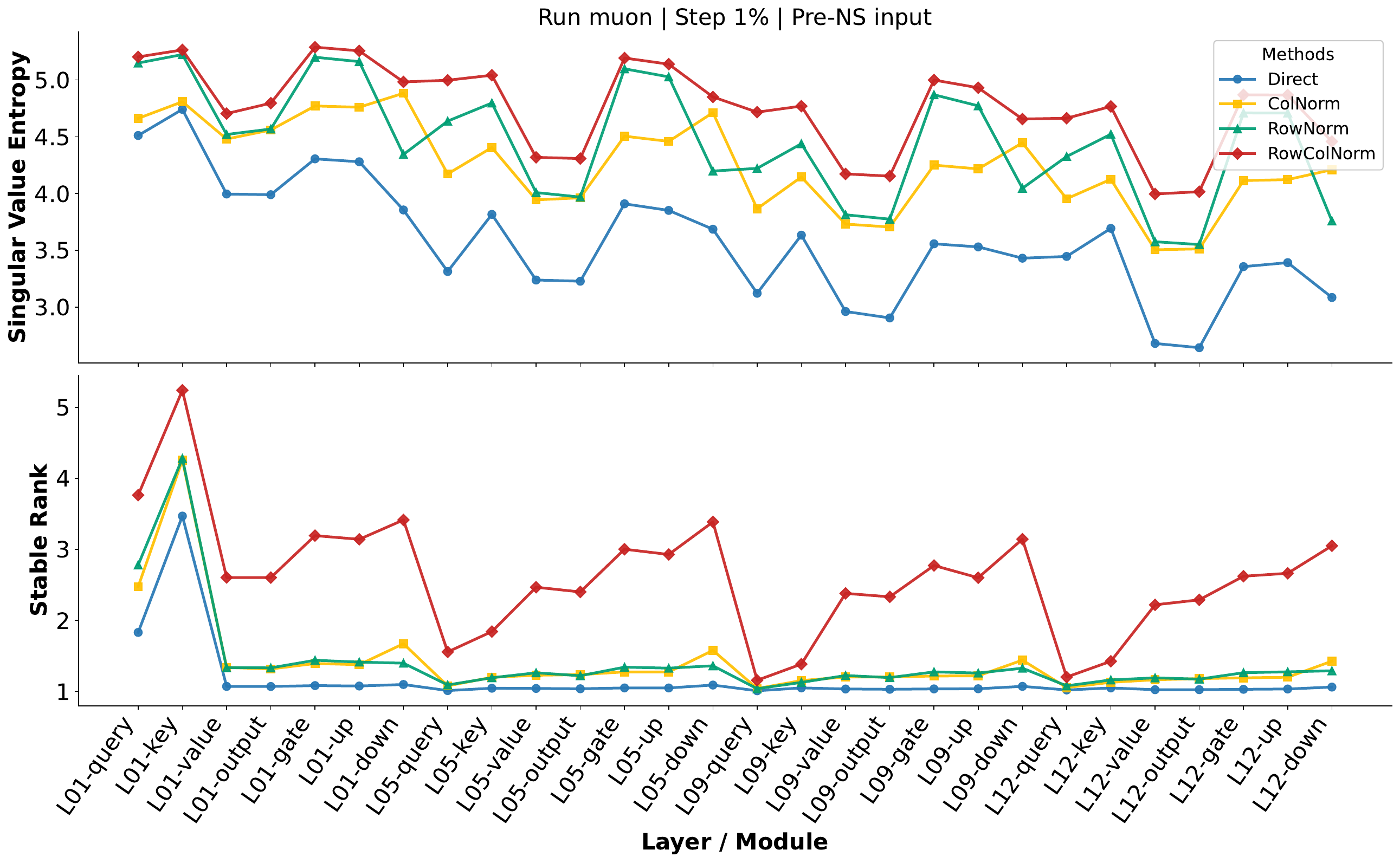} 
    \end{minipage}
    }
    \subfloat[Step 10\%]{
    \label{Fig:FG3-2}%
	\begin{minipage}[h]{0.48\textwidth}
	\centering
    \includegraphics[width=\textwidth]{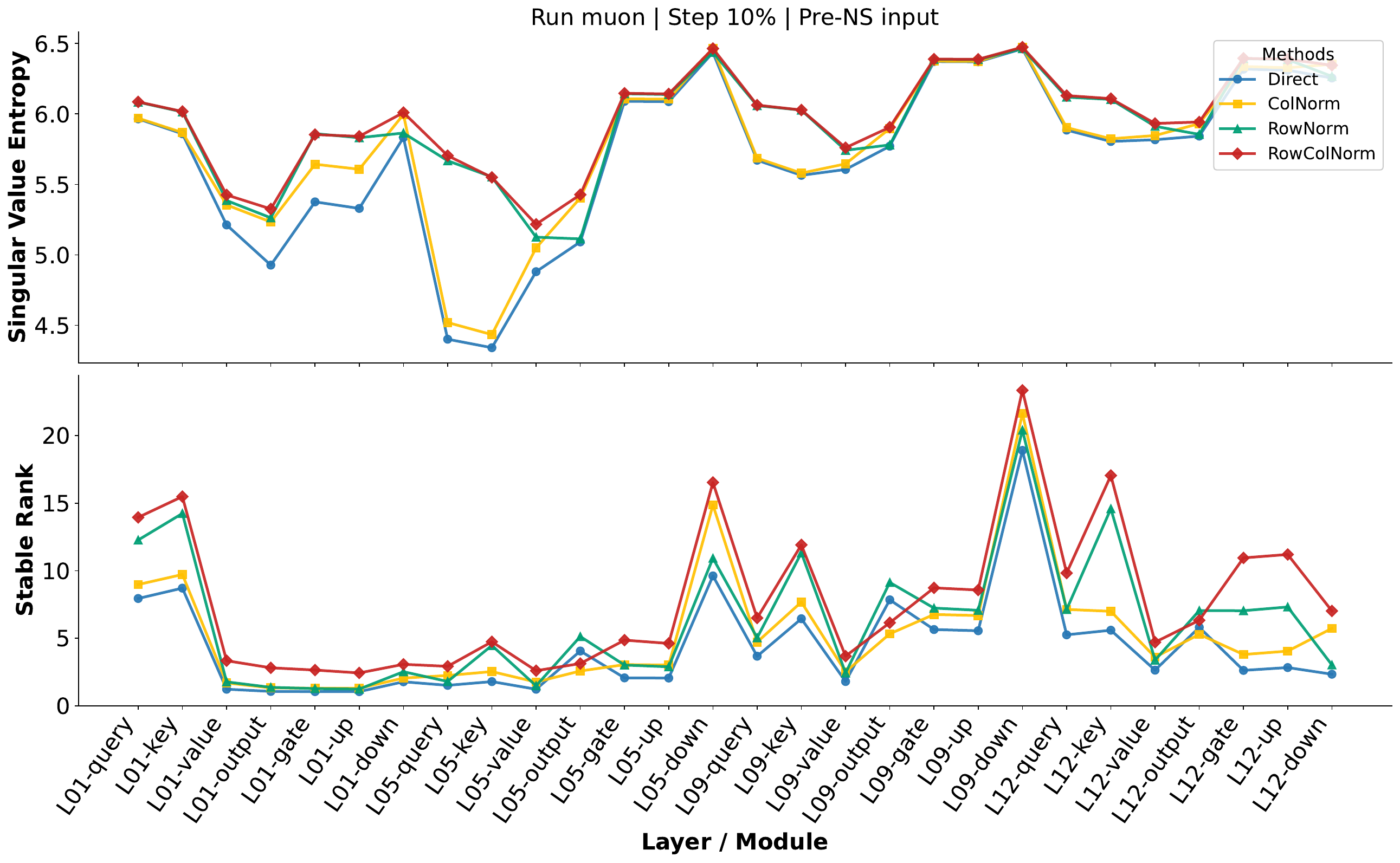} 
    \end{minipage}
    }
    \\
    \subfloat[Step 50\%]{
    \label{Fig:FG3-3}%
	\begin{minipage}[h]{0.48\textwidth}
	\centering
    \includegraphics[width=\textwidth]{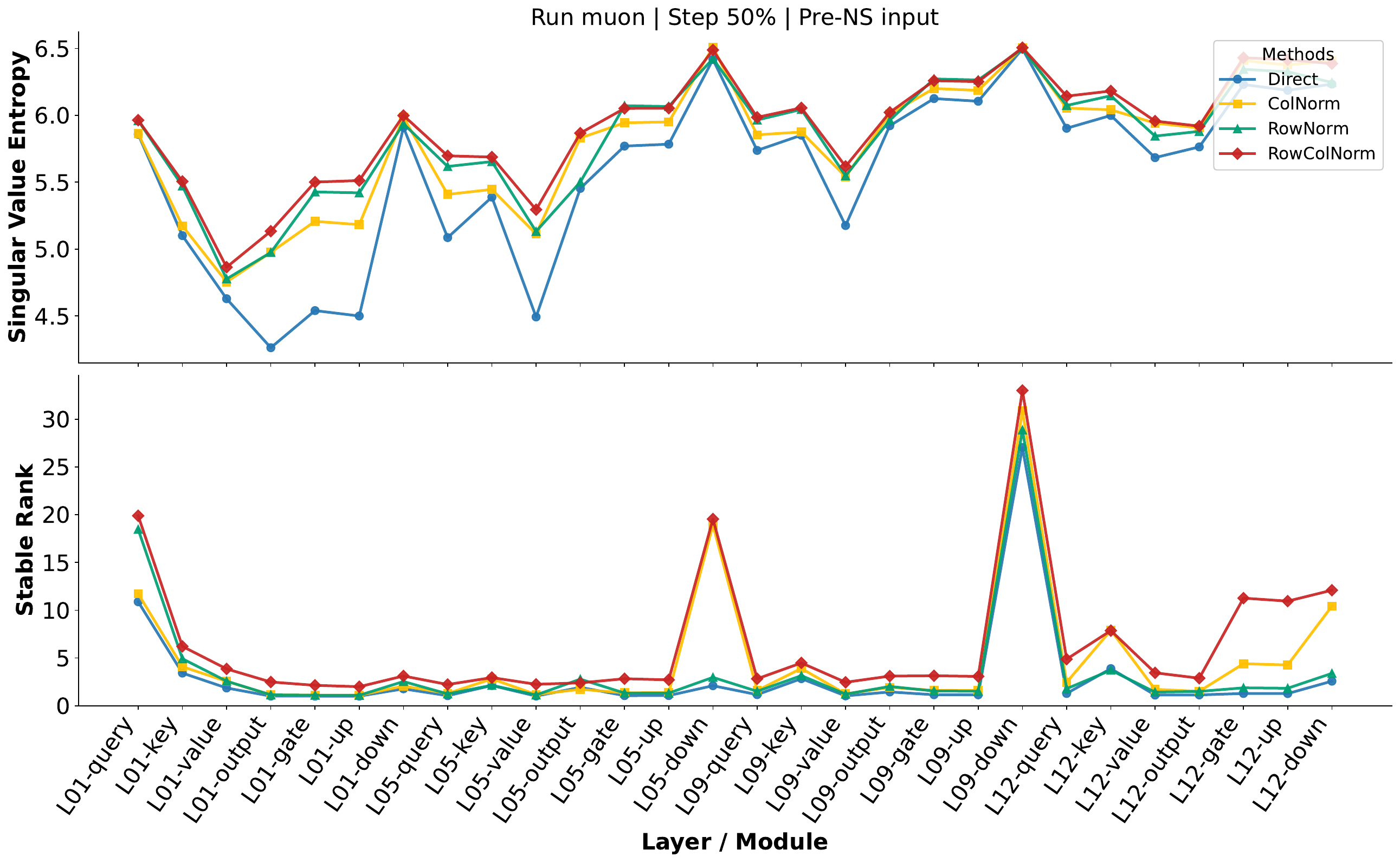} 
    \end{minipage}
    }
    \subfloat[Step 100\%]{
    \label{Fig:FG3-4}%
	\begin{minipage}[h]{0.48\textwidth}
	\centering
    \includegraphics[width=\textwidth]{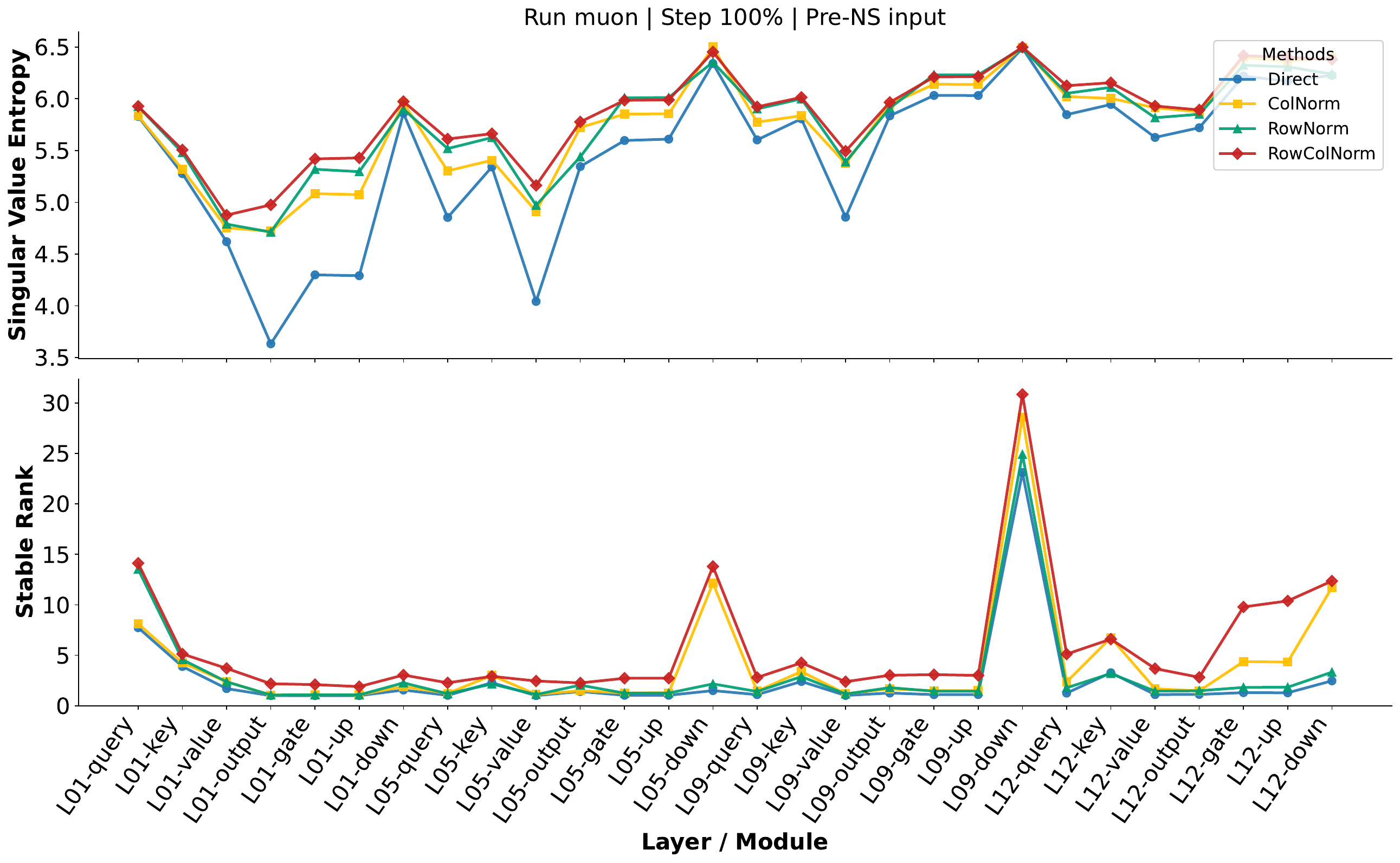} 
    \end{minipage}
    }
    \caption{Singular-value entropy and stable rank of Muon momentum matrices under different normalization schemes during LLaMA2-130M training on C4 for 2.6B tokens. \texttt{RowColNorm} consistently improves both metrics over \texttt{Direct}, \texttt{ColNorm}, and \texttt{RowNorm}.}
\label{fig:spectral_metrics}
\end{figure}

\section{Experimental Details}
\label{appendix:exp}
\subsection{Pretraining on C4}
$\blacktriangleright$ \textbf{Experimental setup.}  We use the same 8-node Ascend 910C cluster for all LLaMA experiments; 130M uses 32 NPUs, whereas 350M and 1B use 64 NPUs.  We compare AdamW, MARS-AdamW, Muon, Muon-Nes, and {\method}/{\method}-Nes on LLaMA2-130M, LLaMA2-350M and LLaMA2-1B~\cite{Touvron2023Llama2O} trained on C4~\cite{raffel2020exploring}. We keep the model architecture, dataset, and training recipe fixed across optimizers. The global batch size is 128 for LLaMA2-130M, 256 for LLaMA2-350M and 512 for LLaMA2-1B, and the maximum sequence length is 4096 for both models. Hyperparameters are selected according to validation performance. 

$\blacktriangleright$ \textbf{Hyperparameter search.} The selected hyperparameters are summarized in Tables \ref{tab:llama130m_c4_hyperparameters} , \ref{tab:llama350m_c4_hyperparameters} and \ref{tab:llama1b_c4_hyperparameters}. In the main comparison, LLaMA2-130M, LLaMA2-350M, and LLaMA2-1B are trained on 10.5B, 21.0B, and 21.0B tokens, respectively. Learning-rate sweeps are conducted on shorter pilot runs of 2.6B tokens for LLaMA2-130M and 7.5B tokens for LLaMA2-350M. AdamW and MARS-AdamW use $(\beta_1,\beta_2)=(0.9,0.95)$, $\epsilon=10^{-8}$, and weight decay 0.1. Muon, Muon-Nes, and {\method}/{\method}-Nes use Muon momentum 0.95 and weight decay 0.1. We set $\gamma=0.025$ for MARS-AdamW and $\gamma=0.05$ for Muon-Nes and {\method}/{\method}-Nes. We set $\varepsilon=10^{-8}$ for {\method}/{\method}-Nes. For all Muon-type optimizers, the number of NS iterations is set to 5 by default. For Muon, Muon-Nes, and {\method}/{\method}-Nes, we use the Muon implementation from \texttt{Moonlight}\footnote{\url{https://github.com/MoonshotAI/Moonlight}}. All 2D weight matrices except embeddings are optimized with the corresponding Muon-type optimizer, while 1D parameters and embeddings are optimized with AdamW.

$\blacktriangleright$ \textbf{Clarification of $\gamma$.}
The symbol $\gamma$ in the hyperparameter tables denotes an
algorithm-dependent correction coefficient. For MARS-AdamW, $\gamma$ is the
variance-reduction correction coefficient used by MARS. For Muon-Nes and
MuonEq-Nes, $\gamma$ denotes the correction coefficient in the following
one-batch Nesterov/MVR parameterization~\cite{yuan2024mars,chang2025convergence}. Let
$\mathbf G_t:=\nabla f(\mathbf X_t;\xi_t)$, and let
$\mathbf M_t^{\rm Nes}$ denote the momentum matrix passed to the diagonal
preconditioning and Newton--Schulz steps. In the constant-momentum setting used
in our experiments, the Nesterov buffer can be written as
\[
\mathbf M_t^{\rm Nes}
= \beta \mathbf M_{t-1}^{\rm Nes}
+ (1-\beta)\mathbf G_t
+ \gamma\beta\bigl(\mathbf G_t-\mathbf G_{t-1}\bigr),
\qquad
\mathbf G_{t-1}:=\nabla f(\mathbf X_{t-1};\xi_{t-1}).
\]
The {\method}-Nes convention used in Algorithm~\ref{alg:muoneq} corresponds to $\gamma = 1-\beta$.
Indeed, if the EMA buffer is
$\mathbf M_t^{\rm EMA}
=\beta \mathbf M_{t-1}^{\rm EMA}+(1-\beta)\mathbf G_t$
and the Nesterov input is
$\widetilde{\mathbf M}_t
=\beta \mathbf M_t^{\rm EMA}+(1-\beta)\mathbf G_t$,
then $\widetilde{\mathbf M}_t$ satisfies the recurrence above with
$\gamma=1-\beta$. Thus the reported Muon momentum $\beta=0.95$ corresponds to
$\gamma=0.05$; the actual coefficient multiplying
$\mathbf G_t-\mathbf G_{t-1}$ is $\gamma\beta=0.0475$.

\begin{figure}[!htbp]
	\centering
	\subfloat[Step 1\%]{
    \label{Fig:FG3-5}%
	\begin{minipage}[h]{0.48\textwidth}
	\centering
    \includegraphics[width=\textwidth]{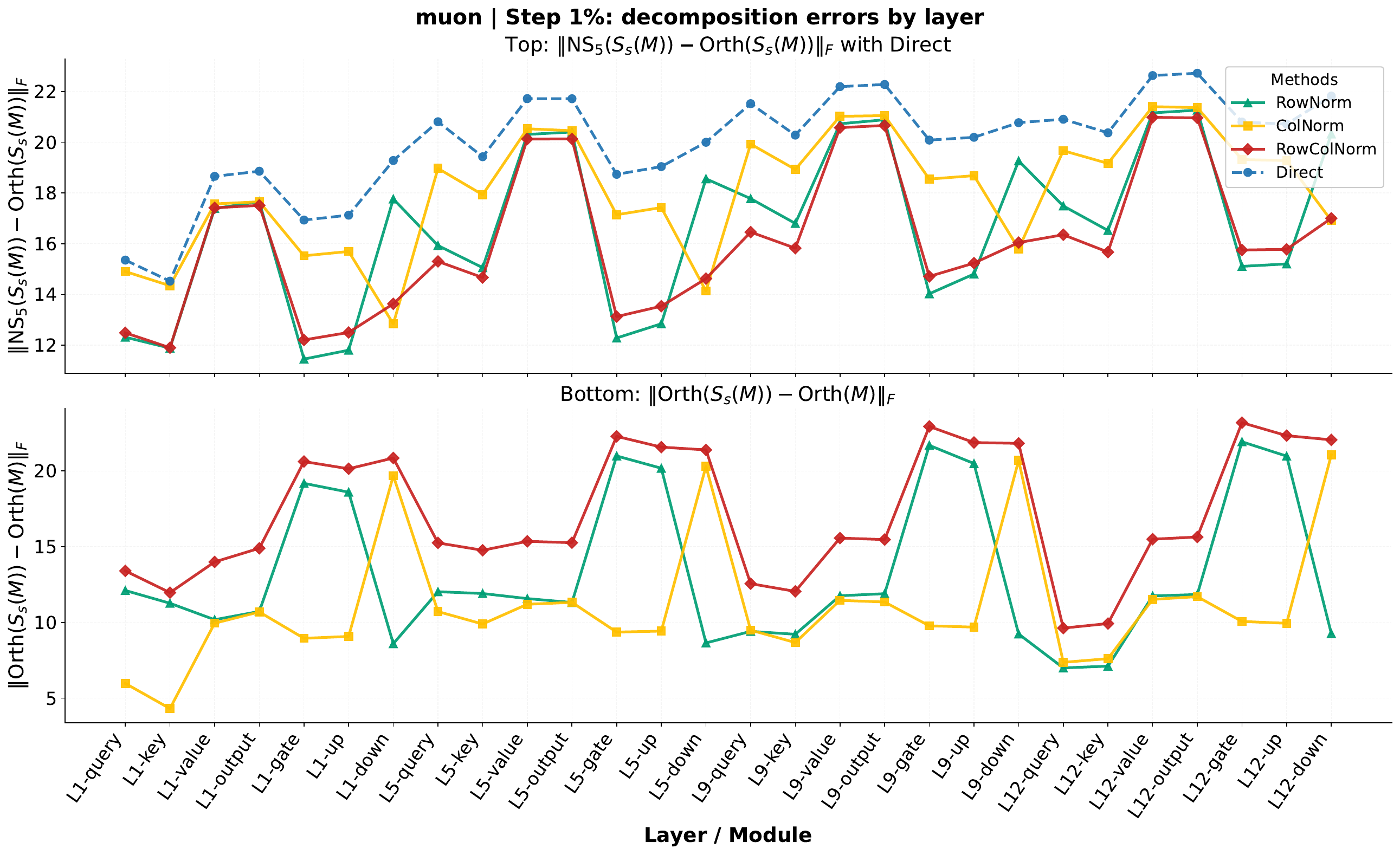} 
    \end{minipage}
    }
    \subfloat[Step 10\%]{
    \label{Fig:FG3-6}%
	\begin{minipage}[h]{0.48\textwidth}
	\centering
    \includegraphics[width=\textwidth]{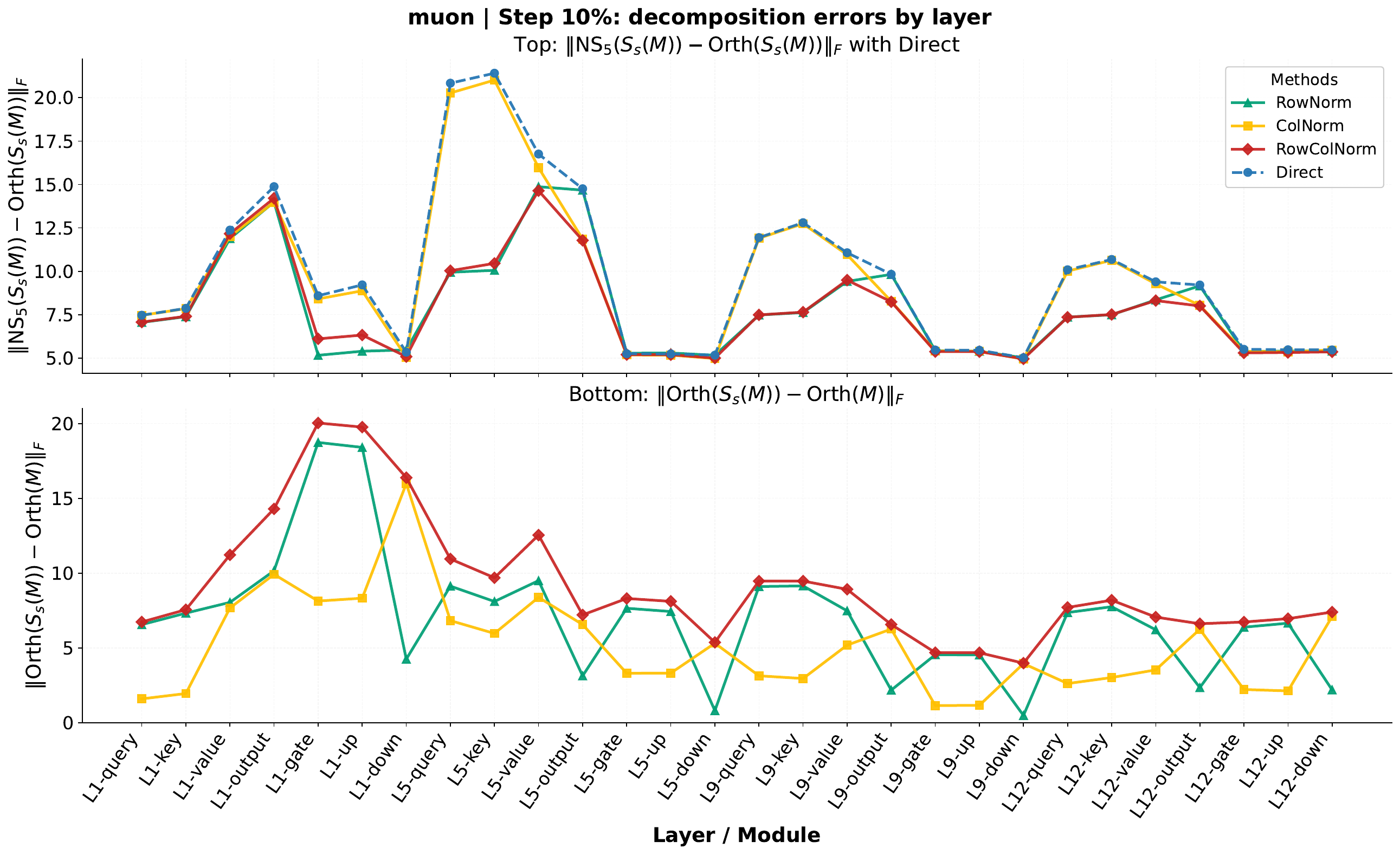} 
    \end{minipage}
    }
    \\
    \subfloat[Step 50\%]{
    \label{Fig:FG3-7}%
	\begin{minipage}[h]{0.48\textwidth}
	\centering
    \includegraphics[width=\textwidth]{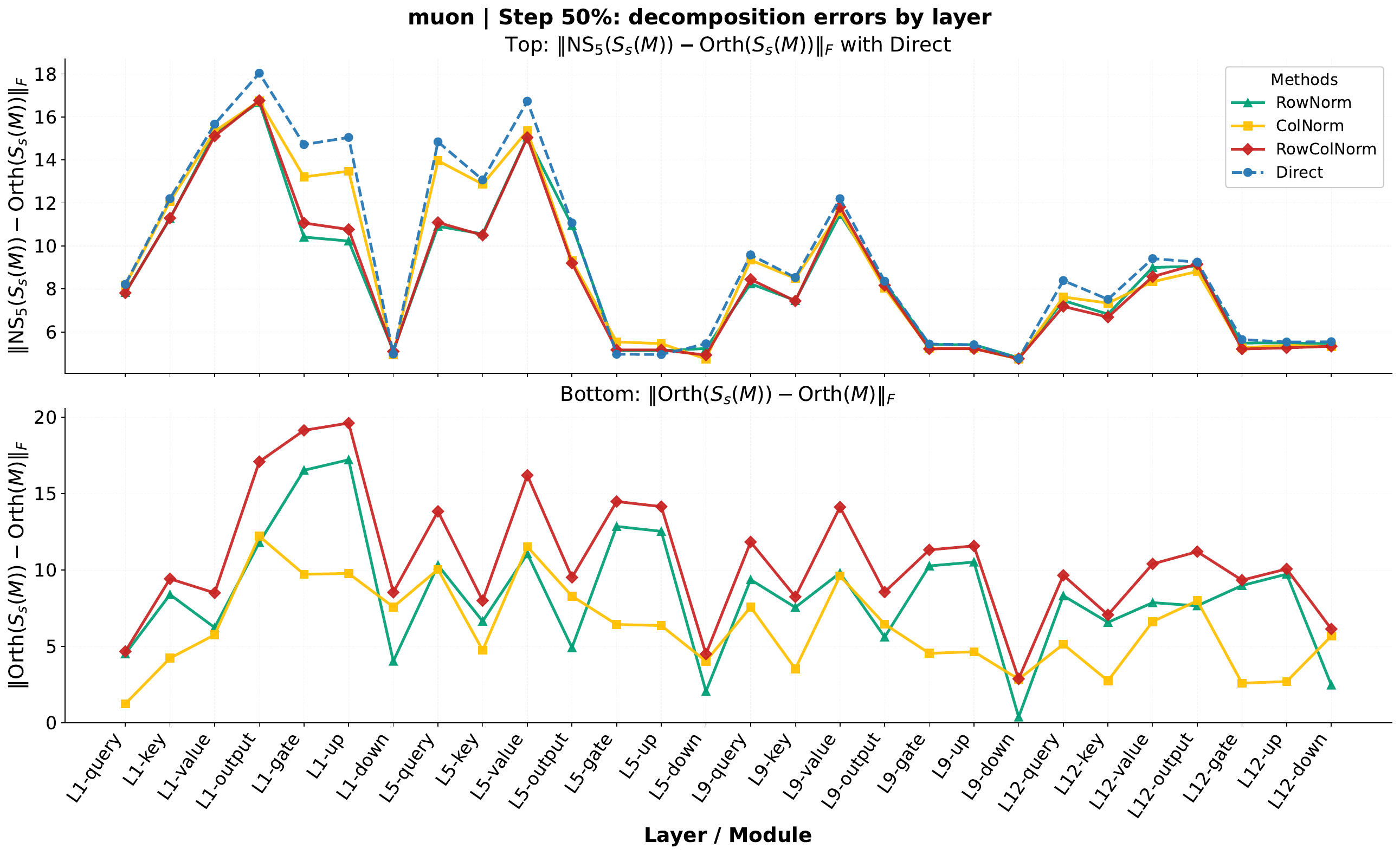} 
    \end{minipage}
    }
    \subfloat[Step 100\%]{
    \label{Fig:FG3-8}%
	\begin{minipage}[h]{0.48\textwidth}
	\centering
    \includegraphics[width=\textwidth]{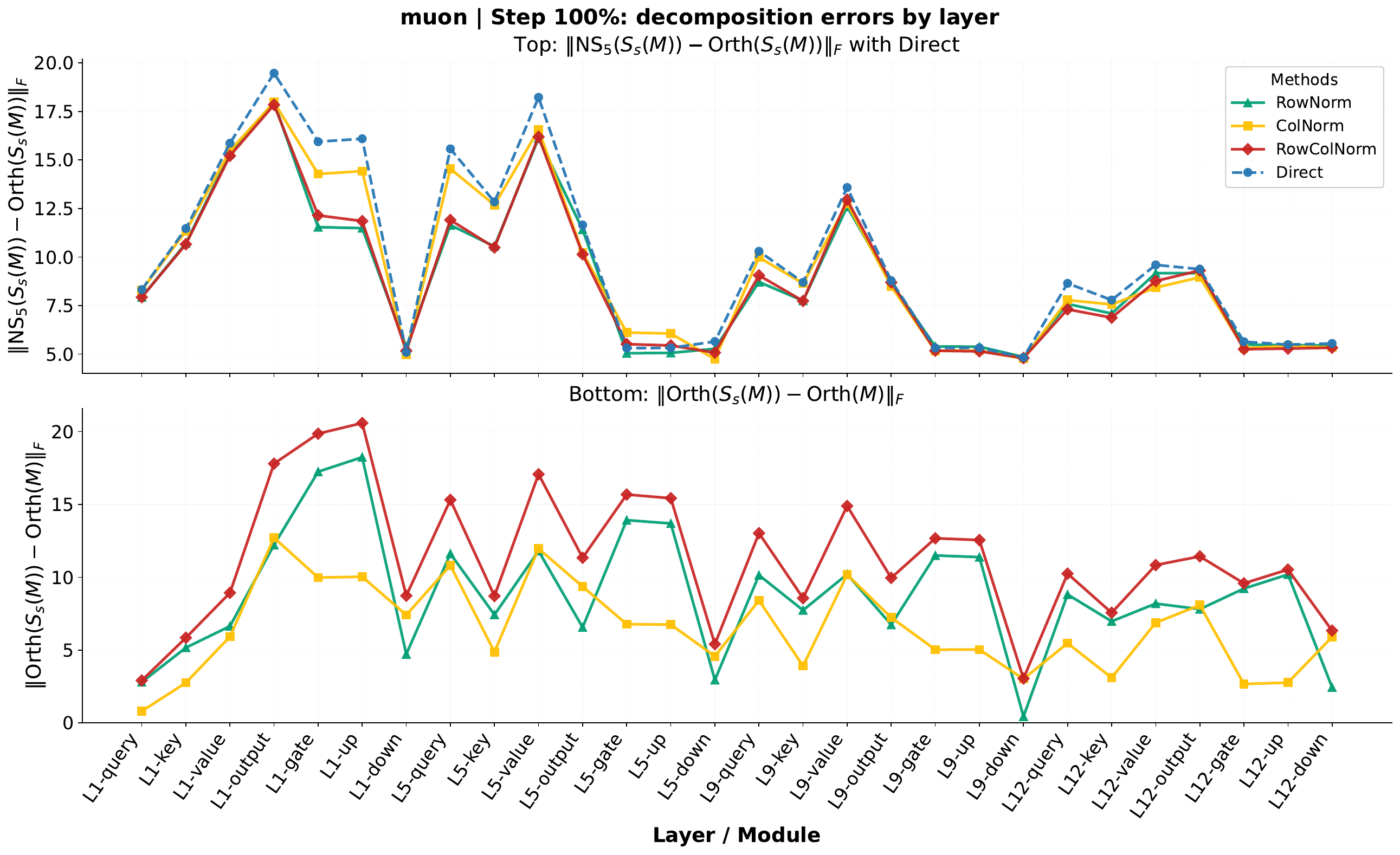} 
    \end{minipage}
    }
    \caption{Layerwise Muon NS5 bias decomposition under different normalization schemes during LLaMA2-130M training on C4 for 2.6B tokens. The top panel reports $\|\mathrm{NS}_5(S(\mathbf{M}))-\mathrm{Orth}(S(\mathbf{M}))\|_F$, while the bottom panel reports $\|\mathrm{Orth}(S(\mathbf{M}))-\mathrm{Orth}(\mathbf{M})\|_F$, both evaluated across layers at the same training step. These plots reveal how the choice of normalization shifts the error budget between the \textbf{finite-step approximation error} term and the \textbf{preconditioning bias} term.}
\label{fig:e_b_metrics}
\end{figure}

% \begin{figure}
%     \centering
%     \includegraphics[width=0.9\linewidth]{fig/orth_error/ns5_orth_mean_spec_norm.pdf}
%    \caption{The averaged error $\varepsilon_{\mathrm{ns}}$ remains below $1$ across all tracked training steps, indicating a uniformly bounded spectral-norm deviation from exact orthogonalization.}
% \label{fig:ns5_spec_norm}
% \end{figure}

\begin{figure}[!htbp]
    \centering
    \subfloat[Muon-Nes | K=4\label{fig:heatmap-muon-k4}]{
    \includegraphics[width=0.4\textwidth]{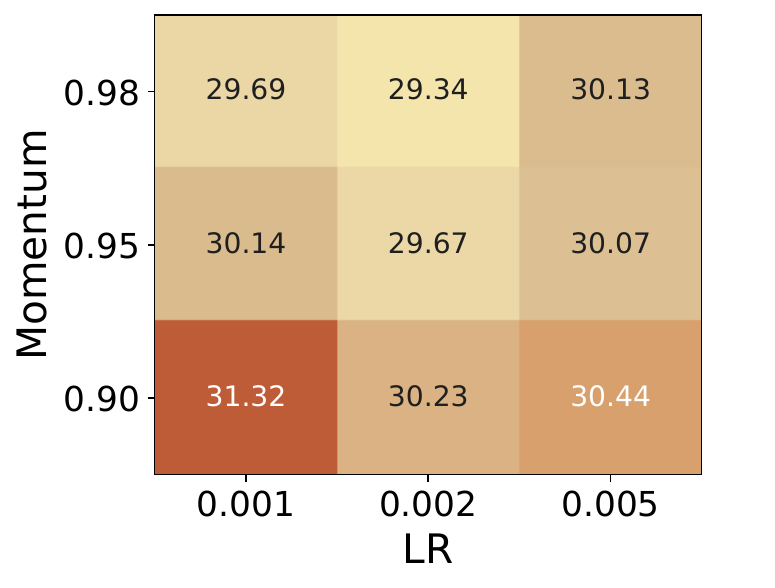}
    }
    \subfloat[MuonEq-Nes | K=4\label{fig:heatmap-muoneq-k4}]{
        \includegraphics[width=0.4\textwidth]{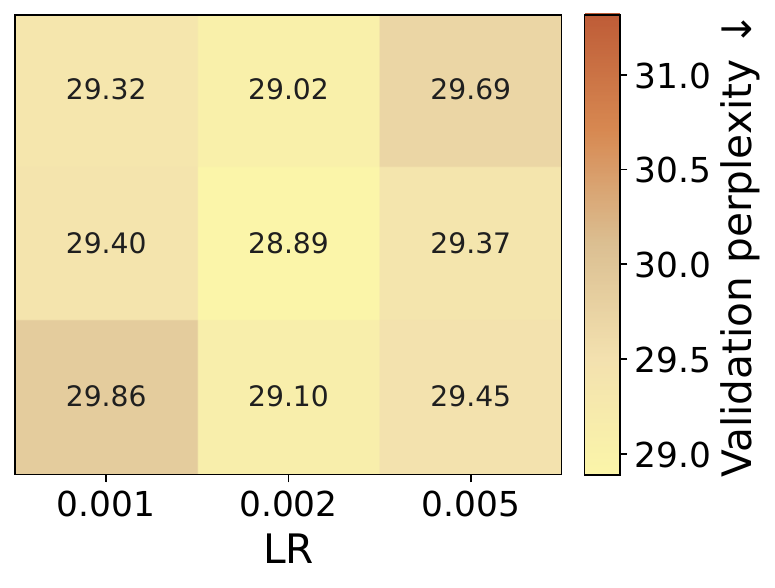}
    }
    \\
    \subfloat[Muon-Nes | K=5\label{fig:heatmap-muon-k5-ExpDetial}]{
        \includegraphics[width=0.4\textwidth]{fig/heatmap/heatmap_muon_k5.pdf}
    }
    \subfloat[MuonEq-Nes | K=5\label{fig:heatmap-muoneq-k5-ExpDetial}]{
        \includegraphics[width=0.4\textwidth]{fig/heatmap/heatmap_muoneqrow_k5.pdf}
    }
    \caption{Validation perplexity heatmaps on GPT2-small/FineWeb (2.6B tokens) over learning rate and momentum at $K=4$ and $K=5$.}
    \label{fig:heatmap}
\end{figure}

\begin{table}[h!]
\centering
\caption{Hyperparameters used for training LLaMA2-130M on C4}
\resizebox{1.0\textwidth}{!}{
\begin{tabular}{lccccccc}
\hline
Hyper-parameter & AdamW &MARS-AdamW & Muon & Muon-Nes & MuonEq & MuonEq-Nes \\
\hline
Max Learning Rate & 5e-3 & 3e-3 & 3e-3 & 3e-3 & 3e-3 & 3e-3 \\
% & 350M & 8e-3 & 8e-3 & 4e-3 \\
Warmup Steps  & \multicolumn{6}{c}{500}\\
Batch Size & \multicolumn{6}{c}{128} \\
Maximum Length & \multicolumn{6}{c}{4096} \\
Weight Decay  & \multicolumn{6}{c}{0.1} \\
$(\beta_1,\beta_2)$  & \multicolumn{6}{c}{(0.9,0.95)} \\
% Stability term  $\varepsilon$  & \multicolumn{2}{c}{1e-8} & \multicolumn{2}{c}{\XSolidBrush} & \multicolumn{2}{c}{1e-8} \\
Muon-Momentum  & \XSolidBrush  & \XSolidBrush  & \multicolumn{4}{c}{0.95} \\
Gamma  & \XSolidBrush  & 0.025 & \XSolidBrush & 0.05 & \XSolidBrush & 0.05 \\
\hline
\end{tabular}
}
\label{tab:llama130m_c4_hyperparameters}
\end{table}

\begin{table}[h!]
\centering
\caption{Hyperparameters used for training LLaMA2-350M on C4}
\resizebox{1.0\textwidth}{!}{
\begin{tabular}{lccccccc}
\hline
Hyper-parameter & AdamW &MARS-AdamW & Muon & Muon-Nes & MuonEq & MuonEq-Nes \\
\hline
Max Learning Rate & 1.5e-3 & 2e-3 & 1.5e-3 & 2e-3 & 2e-3 & 2e-3 \\
% & 350M & 8e-3 & 8e-3 & 4e-3 \\
Warmup Steps  & \multicolumn{6}{c}{500}\\
Batch Size & \multicolumn{6}{c}{256} \\
Maximum Length & \multicolumn{6}{c}{4096} \\
Weight Decay  & \multicolumn{6}{c}{0.1} \\
$(\beta_1,\beta_2)$  & \multicolumn{6}{c}{(0.9,0.95)} \\
% Stability term  $\varepsilon$  & \multicolumn{2}{c}{1e-8} & \multicolumn{2}{c}{\XSolidBrush} & \multicolumn{2}{c}{1e-8} \\
Muon-Momentum  & \XSolidBrush  & \XSolidBrush  & \multicolumn{4}{c}{0.95} \\
Gamma  & \XSolidBrush  & 0.025 & \XSolidBrush & 0.05 & \XSolidBrush & 0.05 \\
\hline
\end{tabular}
}
\label{tab:llama350m_c4_hyperparameters}
\end{table}

\begin{table}[h!]
\centering
\caption{Hyperparameters used for training LLaMA2-1B on C4}
\resizebox{1.0\textwidth}{!}{
\begin{tabular}{lccccccc}
\hline
Hyper-parameter & AdamW &MARS-AdamW & Muon & Muon-Nes & MuonEq & MuonEq-Nes \\
\hline
Max Learning Rate & \multicolumn{6}{c}{\{5e-4, 8e-4\}} \\
Warmup Steps  & \multicolumn{6}{c}{1000}\\
Batch Size & \multicolumn{6}{c}{512} \\
Maximum Length & \multicolumn{6}{c}{4096} \\
Weight Decay  & \multicolumn{6}{c}{0.1} \\
$(\beta_1,\beta_2)$  & \multicolumn{6}{c}{(0.9,0.95)} \\
% Stability term  $\varepsilon$  & \multicolumn{2}{c}{1e-8} & \multicolumn{2}{c}{\XSolidBrush} & \multicolumn{2}{c}{1e-8} \\
Muon-Momentum  & \XSolidBrush  & \XSolidBrush  & \multicolumn{4}{c}{0.95} \\
Gamma  & \XSolidBrush  & 0.025 & \XSolidBrush & 0.05 & \XSolidBrush & 0.05 \\
\hline
\end{tabular}
}
\label{tab:llama1b_c4_hyperparameters}
\end{table}

\subsection{Ablation Study Details}
\label{app:ablation_details}
\begin{table}[h!]
\centering
\caption{Hyperparameters used for training ResNet-18 on CIFAR10}
\resizebox{1.0\textwidth}{!}{
\begin{tabular}{lccccccc}
\hline
Hyper-parameter & SGD & AdamW & Muon & Muon-Nes & \makecell{MuonEq-Nes \\ (RC)} & \makecell{MuonEq-Nes \\ (C)} &  \makecell{MuonEq-Nes \\ (R) } \\
\hline
Max Learning Rate & 5e-2 & 1e-3 & 5e-2 & 5e-2 & 5e-2 & 5e-2 & 5e-2 \\
Warmup Steps  & \multicolumn{7}{c}{0}\\
Epochs & \multicolumn{7}{c}{100}\\
Batch Size & \multicolumn{7}{c}{128} \\
$(\beta_1,\beta_2)$  &  & (0.9,0.999) &\multicolumn{5}{c}{(0.9,0.95)} \\
% Stability term  $\varepsilon$  & \XSolidBrush & 1e-8 & \multicolumn{2}{c}{\XSolidBrush} & \multicolumn{3}{c}{1e-8} \\
Muon-Momentum  & \XSolidBrush  & \XSolidBrush  & \multicolumn{5}{c}{0.9} \\
\hline
\end{tabular}
}
\label{tab:resnet18_cifar10_hyperparameters}
\end{table}

\begin{table}[h!]
\centering

\caption{Hyperparameters used for training GPT2-small on FineWeb}
\resizebox{1.0\textwidth}{!}{
\begin{tabular}{lcccccccc}
\hline
Hyper-parameter & AdamW & AdaMuon & \makecell{Muon+ \\ (colrow)} & Mousse & Muon-Nes & \makecell{MuonEq-Nes \\ (RC)} & \makecell{MuonEq-Nes \\ (C)} &  \makecell{MuonEq-Nes \\ (R) } \\
\hline
Max Learning Rate & 2e-3 & 5e-3 & 5e-3 & 2e-3 & 2e-3 & 2e-3 & 2e-3 & 2e-3 \\
Warmup Steps  & \multicolumn{8}{c}{1000}\\
Total Steps  & \multicolumn{8}{c}{20000}\\
Batch Size & \multicolumn{8}{c}{128} \\
Maximum Length & \multicolumn{8}{c}{4096} \\
Weight Decay  & \multicolumn{8}{c}{0.1} \\
$(\beta_1,\beta_2)$  & \multicolumn{8}{c}{(0.9,0.95)} \\
Muon-Momentum  & \XSolidBrush   & \multicolumn{7}{c}{0.95} \\
Gamma  & \XSolidBrush & \XSolidBrush & \XSolidBrush & \XSolidBrush & \XSolidBrush & \multicolumn{3}{c}{0.05} \\
\hline
\end{tabular}
}
\label{tab:gpt2small_fineweb_hyperparameters}
\end{table}
$\blacktriangleright$ \textbf{Experimental setup.} All ablation experiments are conducted on 4 RTX Pro6000 (96GB) GPUs, with results averaged over three random seeds. We use two workloads: CIFAR-10~\cite{krizhevsky2009learning} with ResNet-18~\cite{he2016deep} and FineWeb~\cite{penedo2024fineweb} pretraining of GPT2-small~\cite{radford2019language} up to 10.5B tokens. Unless otherwise stated, the architecture, data pipeline, training budget, learning-rate schedule, warmup length, weight decay, momentum, Newton--Schulz iteration budget, and parameter grouping are kept identical to the corresponding reference recipe; only the diagonal map before orthogonalization is changed. The selected hyperparameters are summarized in Tables~\ref{tab:resnet18_cifar10_hyperparameters} and~\ref{tab:gpt2small_fineweb_hyperparameters}. For learning-rate tuning, we use discrete search grids that depend on the workload and optimizer family. On CIFAR-10 with ResNet-18, the SGD baseline is tuned over $\{1\mathrm{e}{-1}, 5\mathrm{e}{-2}, 1\mathrm{e}{-2}, 5\mathrm{e}{-3}\}$, AdamW over $\{5\mathrm{e}{-3}, 1\mathrm{e}{-3}, 5\mathrm{e}{-4}, 1\mathrm{e}{-4}\}$ , and Muon-type optimizers over $\{1\mathrm{e}{-1}, 5\mathrm{e}{-2}, 1\mathrm{e}{-2}, 5\mathrm{e}{-3}, 1\mathrm{e}{-3}\}$. We use the Muon implementation from \texttt{cifar10-airbench}\footnote{\url{https://github.com/KellerJordan/cifar10-airbench}}. On FineWeb pretraining of GPT2-small, all methods are tuned over the same learning-rate grid $\{5\mathrm{e}{-4}, 1\mathrm{e}{-3}, 2\mathrm{e}{-3}, 5\mathrm{e}{-3}\}$. We use the Muon implementation from \texttt{Moonlight}\footnote{\url{https://github.com/MoonshotAI/Moonlight}}

We compare the three {\method}-Nes variants RC, R, and C. RC applies two-sided row/column normalization throughout training, R applies row normalization throughout training, and C applies column normalization throughout training. We take R as the default variant. In all cases, the orthogonalization routine, optimizer states, and scalar hyperparameters are unchanged.

The purpose of this ablation is to isolate the effect of pre-orthogonalization geometry rather than to redesign the optimizer for each variant. We therefore use the same training budget and evaluation protocol across all runs. On CIFAR-10 we report final test accuracy. On FineWeb/GPT2-small we report validation perplexity at the end of the 10.5B-token budget.

This ablation is directly connected to Eq.~\eqref{eq:error_decomp}. RC provides the strongest two-sided spectral correction and mainly targets the finite-step Newton--Schulz term. R and C test the two one-sided geometries under the same optimizer configuration. Since {\method}/{\method}-Nes targets hidden matrix weights, R is the more relevant default in our setting, while C is included as the column-sided companion form.

\subsection{A Note on Row/Column Terminology in Prior Work}
\label{app:ablation_term_note}
A careful comparison to prior row/column normalization papers requires fixing the matrix convention. \citep{glentis2025minimalist} write linear weights as $\theta \in \mathbb{R}^{d_{\mathrm{in}}\times d_{\mathrm{out}}}$ and define column-wise normalization along the output dimension. Under the standard deep-learning storage layout $\mathbf{W}=\theta^\top \in \mathbb{R}^{d_{\mathrm{out}}\times d_{\mathrm{in}}}$, this corresponds to row normalization of the stored hidden-weight matrix. Their empirical observation that row-wise normalization can be unstable is also traced largely to the LM-head, where row-wise normalization produces extreme gradient values. This is not the same setting as {\method}/{\method}-Nes, which is aimed at hidden matrix weights.

By contrast, \citep{pethick2025training} use the standard convention $\mathbf{W} \in \mathbb{R}^{d_{\mathrm{out}}\times d_{\mathrm{in}}}$, so their ColNorm is genuinely column-wise in the same orientation as ours. Their main recommendation for language models is first-layer ColNorm, hidden-layer Spectral, and last-layer Sign. We therefore interpret our row-sided default choice as directly aligned with \citep{Xu2026moga}, compatible with \citep{glentis2025minimalist} after accounting for layout conventions, and not in tension with \citep{pethick2025training} once first-layer/input geometry is separated from hidden-layer geometry.

\section{Limitations}
\label{appendix:lim}
{\method} is designed as a lightweight pre-orthogonalization correction for matrix-valued hidden weights. The row-normalized variant is our default in this setting, but it should not be viewed as a universally optimal choice for every parameter block or architecture. Our analysis isolates the core geometry of the pre-balanced orthogonalized update, while broader coverage of implementation variants and training regimes remains an open direction. Our convergence analysis for the RC variant is restricted to an auxiliary exact-polar regime: Proposition~\ref{th_conv_rc} assumes Nesterov momentum is disabled, no decoupled weight decay is used, and the stabilization parameter \(\varepsilon\) is sufficiently large. Therefore, this RC guarantee does not formally cover the practical RC ablations, which use finite-step Newton--Schulz iterations and the small numerical stabilization \(\varepsilon=10^{-8}\). Extending the RC theory to this practical regime is left for future work. This limitation is specific to the auxiliary RC analysis; the main guarantee for the default R variant is given by Theorem~\ref{th_conv_r} and Corollary~\ref{cor:conv_r_ns}. Empirically, our large-scale evaluation focuses on LLaMA-style pretraining on C4, complemented by smaller-scale ablations.

\section{Broader Impacts}
\label{appendix:impact}
This work studies an application-agnostic optimizer that may improve training efficiency for matrix-valued neural network parameters without introducing new data, data collection, or deployment pipelines. Its direct societal risks are therefore mainly inherited from the downstream models and datasets on which it is used. As with other efficiency improvements, {\method} could lower the cost of both beneficial and harmful model training, so appropriate downstream safeguards remain necessary. Any environmental benefit depends on whether efficiency gains are used to reduce compute for a fixed objective rather than to scale up training.
% \newpage
% \input{checklist.tex}
\end{document}